
\typeout{IJCAI--21 Instructions for Authors}


\documentclass{article}
\pdfpagewidth=8.5in
\pdfpageheight=11in
\usepackage{ijcai21}

\usepackage{times}
\usepackage{soul}
\usepackage{url}
\usepackage[hidelinks]{hyperref}
\usepackage[utf8]{inputenc}
\usepackage[small]{caption}
\usepackage{graphicx}
\usepackage{amsmath}
\usepackage{amsthm}
\usepackage{booktabs}
\usepackage{algorithm}
\usepackage{algorithmic}
\urlstyle{same}
\usepackage{nicefrac}
\usepackage[capitalize,noabbrev]{cleveref}

\usepackage{amssymb}


\newtheorem{theorem}{Theorem}
\newtheorem{lemma}{Lemma}
\newtheorem{assumption}{Assumption}

\DeclareMathOperator*{\argmin}{arg\,min}

\newcommand{\eqdef}{\stackrel{\text{def}}{=}}
\newcommand{\bbE}{\mathbb{E}}

\newcommand{\bbV}{\mathbb{V}}
\newcommand{\calF}{\mathcal{F}}

\newcommand{\wt}{\widetilde}

\newcommand{\tO}{\wt O}

\newcommand{\ind}{\mathbb{I}}
\newcommand{\indevent}[1]{\ind \{ #1 \}}

\newcommand{\KL}[2]{\text{KL}(#1 \;\|\; #2)}

\newcommand{\regret}{R_K}
\newcommand{\tregret}[1]{\wt R_{#1}}

\newcommand{\sinit}{s_0}
\newcommand{\ssink}{g}
\newcommand{\policyset}{\Pi}
\newcommand{\proppolicyset}{\policyset_\text{proper}}

\newcommand{\ocset}{\Delta}

\newcommand{\boundedocset}[1]{\ocset( #1 )}
\newcommand{\wtboundedocset}[2]{\widetilde{\ocset}_{ #2 } ( #1 )}
\newcommand{\fastpolicy}{\pi^f}
\newcommand{\optimisticfastpolicy}[1]{\wt \pi^f_{ #1 }}

\newcommand{\totaltime}{T}
\newcommand{\Hm}{H^m}
\newcommand{\Ik}{I^k}

\newcommand{\ctg}[1]{J^{#1}}

\newcommand{\optimisticctg}[1]{\wt{J}^{#1}}
\newcommand{\policytime}[1]{T^{#1}}
\newcommand{\optimisticpolicytime}[1]{\wt{T}^{#1}}
\newcommand{\geventi}[1]{\Omega^{#1}}

\newcommand{\indgeventi}[1]{\ind \{ \geventi{#1} \}}
\newcommand{\numintervals}{M}
\newcommand{\cmin}{c_{\text{min}}}
\newcommand{\traj}[1]{U^{#1}}
\newcommand{\trajconcat}[1]{\bar U^{#1}}
\newcommand{\optimisticpi}[1]{\tilde \pi_{#1}}
\newcommand{\optimisticP}[1]{\wt P_{#1}}

\newcommand{\numvisitsuntilknownbern}{\frac{D |S|}{\cmin^2} \log \frac{D |S| |A|}{\delta \cmin}}

\pdfinfo{
/TemplateVersion (IJCAI.2021.0)
}

\title{Stochastic Shortest Path with Adversarially Changing Costs}

\author{
Aviv Rosenberg$^1$\And
Yishay Mansour$^{1,2}$\\
\affiliations
$^1$Tel Aviv University, Israel\\
$^2$Google Research, Tel Aviv\\
\emails
\{avivros007,mansour.yishay\}@gmail.com
}

\begin{document}

\maketitle

\begin{abstract}
    Stochastic shortest path (SSP) is a well-known problem in planning and control, in which an agent has to reach a goal state in minimum total expected cost.
    In this paper we present the adversarial SSP model that also accounts for adversarial changes in the costs over time, while the underlying transition function remains unchanged.
    Formally, an agent interacts with an SSP environment for K episodes, the cost function changes arbitrarily between episodes, and the transitions are unknown to the agent.
    We develop the first algorithms for adversarial SSPs and prove high probability regret bounds of square-root K assuming all costs are strictly positive, and sub-linear regret in the general case.
    We are the first to consider this natural setting of adversarial SSP and obtain sub-linear regret for it.
\end{abstract}

\section{Introduction}

Stochastic shortest path (SSP) is one of the most basic models in reinforcement learning (RL). 
It features an agent that interacts with a Markov decision process (MDP) with the aim of reaching a predefined goal state in minimum total expected cost.
Many important RL problems fall into the SSP framework, e.g., car navigation and Atari games, and yet it was only rarely studied from a theoretical point of view until very recently, mainly due to its challenging nature in comparison to finite-horizon, average-reward or discounted MDPs. 
For example, in SSP some polices might suffer infinite cost.

An important aspect that the standard SSP model fails to capture is changes in the environment over time (e.g., changes in traffic when navigating a car).
In the finite-horizon setting, the adversarial MDP model was proposed to address changing environments, and has gained considerable popularity in recent years.
It allows the cost function to change arbitrarily over time, while still assuming a fixed transition function.

In this work we present the adversarial SSP model that introduces adversarially changing costs to the classical SSP model.
Formally, the agent interacts with an SSP instance for $K$ episodes, and the cost function changes arbitrarily between episodes.
The agent's objective is to reach the goal state in all episodes while minimizing its total expected cost. 
Its performance is measured by the \emph{regret}, defined as the cumulative difference between the agent's total cost in $K$ episodes and the expected total cost of the best policy in hindsight.

Finite-horizon MDPs are a special case of the general SSP problem where the agent is guaranteed to reach the goal state within a fixed number of steps $H$.
This model is extensively studied in recent years for both stochastic and adversarial costs.
In the adversarial MDP literature it is better known as the \emph{loop-free} SSP model.
While having a similar name, loop-free SSP follows the restrictive assumption that after $H$ steps the goal will be reached and is thus far less challenging.

As pointed out by \cite{tarbouriech2019noregret}, in the general SSP problem we face new challenges that do not arise in the loop-free version. 
Notably, it features two possibly conflicting objectives -- reaching the goal vs minimizing cost; and it requires handling unbounded value functions and episode lengths.
In the adversarial SSP model, these difficulties are further amplified as the adversary might encourage the learner to use ``slow'' policies and then punish her with large costs.

In this paper we propose the first algorithms for regret minimization in adversarial SSPs without any restrictive assumptions (namely, loop-free assumption).
While we leverage algorithmic and technical tools from both SSP and finite-horizon adversarial MDP, tackling the general SSP problem in the presence of an adversary requires novel techniques and careful analysis.
 Our algorithms are based on the popular online mirror descent (OMD) framework for online convex optimization (OCO).
However, naive application of OMD to SSP cannot overcome the challenges mentioned above as we later show, and we use carefully designed mechanisms to establish our theoretical guarantees.

The main contributions of this paper are as follows.
First, we formalize the adversarial SSP model and define the notion of learning and regret.
Second, we establish an efficient implementation of OMD in the SSP model with known transitions and study the conditions under which it guarantees near-optimal $\sqrt{K}$ expected regret, showing that some modifications are necessary.
Then, we illustrate the challenge of obtaining regret bounds in high probability in adversarial SSPs, and present a novel method that allows OMD to obtain its regret with high probability.
Finally, we tackle unknown transitions.
We describe the crucial adaptations that allow OMD to be combined with optimistic estimates of the transition function and guarantee $\sqrt{K}$ regret when all costs are strictly positive, and $K^{3/4}$ regret in the general case.
Hopefully, the infrastructure created in this paper for handling adversarial costs in SSPs with unknown transition function paves the way for future work to achieve minimax optimal regret bounds.

\paragraph{Related work.}
Early work by \cite{bertsekas1991analysis} studied the planning problem in SSPs, i.e., computing the optimal strategy efficiently when parameters are known. 
Under certain assumptions, they established that the optimal strategy is a deterministic stationary policy (a mapping from states to actions) and can be
computed efficiently using standard planning algorithms, e.g., Value Iteration and LP.

Recently \cite{tarbouriech2019noregret} presented the problem of learning SSPs (with stochastic costs) and provided the first algorithms with sub-linear regret but with dependence on the minimal cost $\cmin$.
Their results were further improved by \cite{cohen2020ssp} that eliminate the $\cmin$ dependence and prove high probability regret bound of $\tO(D |S| \sqrt{|A| K})$ complemented by a nearly matching lower bound of $\Omega (D \sqrt{|S| |A| K})$, where $D$ is the diameter, $S$ is the state space and $A$ is the action space.

As mentioned before, regret minimization in RL is extensively studied in recent years, but the literature mainly focuses on the average-reward infinite-horizon model \cite{bartlett2009regal,AuerUCRL} and on the finite-horizon model \cite{osband2016generalization,azar2017minimax,dann2017unifying,jin2018q,zanette2019tighter,efroni2019tight}.
Adversarial MDPs were also first studied in the average-reward model \cite{even2009online,neu2014bandit}, before focusing on the finite-horizon setting which is typically referred to as loop-free SSP.
Early work in this setting by \cite{neu2010ossp} used a reduction to multi-arm bandit \cite{auer2002nonstochastic}, but then \cite{zimin2013online} introduced the O-REPS framework, which is the implementation of OMD in finite-horizon MDPs.
All these works assume known transition function, but more recently \cite{neu2012unknown,rosenberg2019full,rosenberg2019bandit,jin2019learning,efroni2020optimistic,cai2019provably} consider unknown transitions.

We stress that all previous work in the adversarial setting made the restrictive loop-free assumption, avoiding the main challenges tackled in this paper.
Building on our methodologies, \cite{chen2020minimax} recently extended our work and obtained minimax optimal $\sqrt{K}$ regret with known transitions.
However, they do not consider the more challenging unknown transitions case, and also assume that the learner knows in advance the running time of the best policy in hindsight.

\section{Preliminaries}

An adversarial SSP problem is defined by an MDP $M = (S,A,P,\sinit,\ssink)$ and a sequence $\{ c_k: S \times A \rightarrow [0,1] \}_{k=1}^K$ of cost functions. 
$S$ and $A$ are finite state and action spaces, respectively, $\sinit \in S$ is an initial state and $\ssink \not\in S$ is the goal state.
$P$ is a transition function such that $P(s' \mid s,a)$ gives the probability to move to $s'$ when taking action $a$ in state $s$, and thus $\sum_{s' \in S \cup \{ \ssink \}} P(s' \mid s,a) = 1$ for every $(s,a) \in S \times A$.

The learner interacts with $M$ in episodes, where $c_k$ is the cost function for episode $k$.
However, it is revealed to the learner only in the end of the episode.
Formally, the learner starts each episode $k$ at the initial state\footnote{Our algorithms readily extend to a fixed initial distribution.} $s_1^k = \sinit$.
In each step $i$ of the episode, the learner observes its current state $s_i^k$, picks an action $a_i^k$ and moves to the next state $s_{i+1}^k$ sampled from $P(\cdot \mid s_i^k,a_i^k)$.
The episode ends when the goal state $\ssink$ is reached, and then the learner observes $c_k$ and suffers cost $\sum_{i=1}^{\Ik} c_k(s_i^k,a_i^k)$ where $\Ik$ is the length of the episode.
Importantly, $\Ik$ is a random variable that might be infinite.
This is the unique challenge of SSP compared to finite-horizon.

\paragraph{Proper Policies.}
A stationary policy $\pi : A \times S \rightarrow [0,1]$ is a mapping such that $\pi(a \mid s)$ gives the probability that action $a$ is selected in state $s$.
A policy $\pi$ is called \emph{proper} if playing according to $\pi$ ensures that the goal state is reached with probability $1$ when starting from any state (otherwise it is \emph{improper}).
Since reaching the goal is one of the learner's main objectives, we make the basic assumption that there exists at least one proper policy.
This is equivalent to the assumption that the goal state is reachable from every state, which is clearly a necessary assumption.

We denote by $\policytime{\pi}(s)$ the expected hitting time of $\ssink$ when playing according to $\pi$ and starting at $s$.
In particular, if $\pi$ is proper then $\policytime{\pi}(s)$ is finite for all $s$, and if $\pi$ is improper there must exist some $s' \in S$ such that $\policytime{\pi}(s') = \infty$.
When paired with a cost function $c: S \times A \rightarrow [0,1]$, any policy $\pi$ induces a \emph{cost-to-go function} $\ctg{\pi} : S \rightarrow [0, \infty]$, where $\ctg{\pi}(s)$ is the expected cost when playing policy $\pi$ and starting at state $s$, i.e.,
$
    \ctg{\pi}(s) = \lim_{T \rightarrow \infty} \bbE \bigl[ \sum_{t=1}^T c(s_t,a_t) \mid P,\pi,s_1 = s \bigr]
$.
For a proper policy $\pi$, it follows that $\ctg{\pi}(s)$ is finite for all $s$. 

Under the additional assumption that every improper policy suffers infinite expected cost from some state, \cite{bertsekas1991analysis} show that the optimal policy is stationary, deterministic and proper; and that every proper policy $\pi$ satisfies the following Bellman equations for every $s \in S$:
\begin{align}
    \nonumber
    \ctg{\pi}(s)
    & =
    \sum_{a \in A} \pi(a \mid s) \Bigl( c(s,a) + \sum_{s' \in S} P(s' \mid s,a) \ctg{\pi}(s') \Bigr)
    \\
    \label{eq:bell-time}
    \policytime{\pi}(s)
    & =
    1 + \sum_{a \in A} \sum_{s' \in S} \pi(a \mid s) P(s' \mid s,a) \policytime{\pi}(s').
\end{align}

\paragraph{Learning Formulation.}
The learner's goal is to minimize its total cost.
Its performance is measured by the \emph{regret} -- the difference between the learner's total cost in $K$ episodes and the total expected cost of the best \emph{proper} policy in hindsight:
\begin{align*}
    \regret 
    & = 
    \sum_{k=1}^K \sum_{i=1}^{\Ik} c_k(s^k_i,a^k_i) 
    - 
    \min_{\pi \in \proppolicyset} \sum_{k=1}^K \ctg{\pi}_k (\sinit),
\end{align*}
where $\ctg{\pi}_k$ is the cost-to-go of policy $\pi$ with respect to (w.r.t) cost function $c_k$, and $\proppolicyset$ is the set of proper policies.
If $\Ik$ is infinite for some $k$, we define $\regret = \infty$ forcing the learner to reach the goal in every episode.
We also denote by $\pi^\star = \argmin_{\pi \in \proppolicyset} \sum_{k=1}^K \ctg{\pi}_k (\sinit)$ the best policy in hindsight.

Our analysis makes use of the Bellman equations, that hold under the conditions described before Eq.~\eqref{eq:bell-time}.
To make sure these are met, we assume that  the costs are strictly positive.
\begin{assumption}
    \label{ass:c-min}
    All costs are positive, i.e., there exists $\cmin > 0$ such that $c_k(s,a) \geq \cmin$ for every $k$ and $(s,a) \in S \times A$.
\end{assumption}

We can easily eliminate Assumption~\ref{ass:c-min} by applying a perturbation to the instantaneous costs.
That is, instead of $c_k$ we use the cost function $\tilde c_k(s,a) = \max \{ c_k(s,a) , \epsilon \}$ for some $\epsilon > 0$.
This ensures that the effective minimal cost is $\cmin = \epsilon$, at the price of introducing additional bias.
Choosing $\epsilon = \Theta (K^{-1/4})$ ensures that all our algorithms obtain regret bounds of $\wt O(K^{3/4})$ in the general case.
See details in Appendix K and discussion about $\cmin$ in Section~\ref{sec:discussion}.

\paragraph{Occupancy Measures.}
Every policy $\pi$ induces an occupancy measure $q^\pi : S \times A \rightarrow [0,\infty]$ such that $q^\pi(s,a)$ is the expected number of times to visit state $s$ and take action $a$ when playing according to $\pi$, i.e.,
\[
    q^\pi(s,a) 
    = 
    \lim_{T \rightarrow \infty} \bbE \Bigl[ \sum_{t=1}^T \indevent{s_t = s,a_t = a} \mid P,\pi, s_1 = \sinit \Bigr],
\]
where $\indevent{\cdot}$ is the indicator function.
Note that for a proper policy $\pi$, $q^\pi(s,a)$ is finite for every $(s,a)$.
In fact, the correspondence between proper policies and finite occupancy measures is 1-to-1, and its inverse\footnote{If $q(s) = 0$ for some state $s$ then the inverse mapping is not well-defined. 
However, since $s$ will not be reached, we can pick the action there arbitrarily. 
More precisely, the correspondence holds when restricting to reachable states.} for $q$ is given by
$
    \pi^q(a \mid s)
    =
    \frac{q(s,a)}{q(s)}
$ where $q(s) = \sum_{a \in A} q(s,a)$ is the expected number of visits to $s$.
The equivalence between policies and occupancy measures is well-known for MDPs (see, e.g., \cite{zimin2013online}), but also holds for SSPs by linear programming formulation \cite{manne1960linear}.
Notice that the expected cost of policy $\pi$ is linear w.r.t $q^\pi$, i.e.,
\begin{align*}
    \ctg{\pi_k}_k(\sinit)
    & =
    \bbE \Bigl[ \sum_{i=1}^{\Ik} c_k(s^k_i,a^k_i) \mid P,\pi_k,s_1 = \sinit \Bigr]
    \\
    & =
    \sum_{s \in S} \sum_{a \in A} q^{\pi_k}(s,a) c_k(s,a)
    \eqdef
    \langle q^{\pi_k} , c_k \rangle.
\end{align*}
Thus, minimizing the expected regret can be written as an instance of online linear optimization in the following manner,
\begin{align*}
    \bbE [ \regret ]
    & =
    \bbE \Bigl[ \sum_{k=1}^K \ctg{\pi_k}_k(\sinit) - \sum_{k=1}^K \ctg{\pi^\star}_k(\sinit) \Bigr]
    \\
    & =
    \bbE \Bigl[ \sum_{k=1}^K \langle q^{\pi_k} - q^{\pi^\star} , c_k \rangle \Bigr].
\end{align*}

\section{Known Transition Function}
\label{sec:known-P}

We start with the simpler (yet surprisingly challenging) case where $P$ is known to the learner.
Recall that while the transition function is known, the costs change arbitrarily between episodes.
In Section~\ref{sec:omd-ssp} we establish the implementation of the OMD method in SSP, and in Section~\ref{sec:high-prob-known-P} we use it to obtain a  high probability regret bound.

\subsection{Online Mirror Descent for SSP}
\label{sec:omd-ssp}

Online mirror descent is a popular framework for OCO and its application to occupancy measures yields the O-REPS algorithms \cite{zimin2013online,rosenberg2019full,rosenberg2019bandit,jin2019learning}.
Usually these algorithms operate w.r.t to the set of all occupancy measures (which corresponds to the set of all policies), but a naive application of this kind fails in SSP because it does not guarantee that the learner plays proper policies.
For example, in the first episode these algorithms play the uniform policy which may suffer exponential cost (see Appendix A).

Thus, we propose to apply OMD to the set $\boundedocset{\tau}$ -- occupancy measures of policies $\pi$ that reach the goal in expected time $\policytime{\pi}(\sinit) \le \tau$.
This set is convex and has a compact representation as we show shortly.
Our algorithm SSP-O-REPS operates as follows. 
In the beginning of episode $k$, it picks an occupancy measure $q_k$ from $\boundedocset{\tau}$ which minimizes a trade-off between the current cost function and the distance to the previously chosen occupancy measure.
Then, it extracts the policy $\pi_k = \pi^{q_k}$ and plays it through the episode.
Formally,
\begin{align}
    \label{eq:omd-update}
    q_k
    =
    q^{\pi_k}
    = 
    \argmin_{q \in \boundedocset{\tau}} \eta \langle q , c_{k-1} \rangle + \KL{q}{q_{k-1}},
\end{align}
where $\text{KL}(\cdot||\cdot)$ is the KL-divergence, and $\eta > 0$ is a learning rate.
Computing $q_k$ is implemented in two steps: first find the unconstrained minimizer and then project it into $\boundedocset{\tau}$, i.e.,
\begin{align}
    \label{eq:omd-step-1}
    q'_k 
    & = 
    \argmin_q \eta \langle q , c_{k-1} \rangle + \KL{q}{q_{k-1}}
    \\
    \label{eq:omd-step-2}
    q_k
    & = 
    \argmin_{q \in \boundedocset{\tau}} \KL{q}{q'_k}.
\end{align}
Eq.~\eqref{eq:omd-step-1} has a closed form $q'_k(s,a) = q_{k-1}(s,a) e^{- \eta c_{k-1}(s,a)}$, and Eq.~\eqref{eq:omd-step-2} can be formalized as a constrained convex optimization problem with the following linear constraints:
\begin{align}
    \nonumber
    \forall s. \sum_{a \in A} q(s,a) 
    -
    \sum_{s' \in S} \sum_{a' \in A} q(s',a') P(s | s',a')
    & =
    \indevent{s = \sinit}
    \\
    \label{eq:const-time}
    \sum_{s \in S} \sum_{a \in A} q(s,a) 
    & \le 
    \tau,
\end{align}
where we omitted non-negativity constraints.
The first set of constraints are standard flow constraints, while the novel constraint~\eqref{eq:const-time} ensures that $\policytime{\pi^q}(\sinit) \le \tau$.
In Appendix B we show how to solve this problem efficiently and describe implementation details for the algorithm. 
Pseudocode in Appendix C.

Finally, we need to pick the parameter $\tau$.
While it needs to upper bound $\policytime{\pi^\star}(\sinit)$ in order to have $q^{\pi^\star} \in \boundedocset{\tau}$, we want it to be as small as possible to get tighter regret guarantees.
To that end, define the SSP-diameter \cite{tarbouriech2019noregret} $D = \max_{s \in S} \min_{\pi \in \proppolicyset} \policytime{\pi}(s)$ and pick $\tau = D/\cmin$.
The diameter can be computed efficiently by finding the optimal policy w.r.t the constant cost function $c(s,a) = 1$ (see Appendix B).
We refer to this policy as the fast policy $\fastpolicy$, and it holds that $D = \max_{s \in S} \policytime{\fastpolicy}(s)$.

Indeed $q^{\pi^\star} \in \boundedocset{D/\cmin}$ because the total cost of the best policy in hindsight in $K$ episodes is upper bounded by the total cost of any other policy, e.g., the fast policy (which is at most $DK$), and is lower bounded by the expected time of $\pi^\star$ times the minimal cost, i.e., $\ctg{\pi^\star}_k (\sinit) \ge \cmin \policytime{\pi^\star}(\sinit)$ (see Appendix D).
In Appendix A we also show that this choice of $\tau$ cannot be smaller in general.

In Appendix D we provide the full analysis of the algorithm yielding the following regret bound in expectation.
Moreover, we show that all the chosen policies must be proper and therefore the goal is reached with probability $1$ in all episodes.

\begin{theorem}
    \label{thm:exp-reg-full-info}
    Under Assumption~\ref{ass:c-min}, the expected regret of SSP-O-REPS with known transition function and $\eta = \wt \Theta (\frac{1}{\sqrt{K}})$ is
    \[
        \bbE [ \regret ]
        \le
        O \Bigl( \frac{D}{\cmin} \sqrt{K \log \frac{D |S| |A|}{\cmin}} \Bigr)
        =
        \tO \Bigl( \frac{D}{\cmin} \sqrt{K} \Bigr).
    \]
\end{theorem}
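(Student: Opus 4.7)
The plan is to follow the standard OMD template for unnormalized measures, specialized to the polytope $\boundedocset{D/\cmin}$. First, I would verify the two prerequisites: (i) $q^{\pi^\star}\in\boundedocset{D/\cmin}$, and (ii) every iterate $q_k$ corresponds to a proper policy $\pi_k=\pi^{q_k}$. For (i), Assumption~\ref{ass:c-min} gives $\cmin\,\policytime{\pi^\star}(\sinit)\le\ctg{\pi^\star}_k(\sinit)$ for every $k$, and the optimality of $\pi^\star$ combined with $\sum_k\ctg{\fastpolicy}_k(\sinit)\le DK$ yields $\min_k\ctg{\pi^\star}_k(\sinit)\le D$; since $\policytime{\pi^\star}(\sinit)$ does not depend on $k$, dividing by $\cmin$ shows $\policytime{\pi^\star}(\sinit)\le D/\cmin$. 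For (ii), any $q$ with $\sum_{s,a}q(s,a)\le D/\cmin$ has finite expected hitting time from $\sinit$, so $\pi^q$ is proper on the states reachable from $\sinit$ (and arbitrary elsewhere, per the footnote on the inverse mapping); consequently $\bbE\bigl[\sum_i c_k(s_i^k,a_i^k)\bigr]=\langle q_k,c_k\rangle$ and every episode terminates almost surely.

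Second, I would execute the usual three-point identity for the unnormalized KL Bregman divergence. The unconstrained step has closed form $q'_{k+1}(s,a)=q_k(s,a)e^{-\eta c_k(s,a)}$, and since $q^{\pi^\star}\in\boundedocset{D/\cmin}$ the generalized Pythagorean inequality gives $\KL{q^{\pi^\star}}{q_{k+1}}\le\KL{q^{\pi^\star}}{q'_{k+1}}$. Expanding the right-hand side via the closed form and applying the elementary inequality $1-e^{-x}\ge x-x^2/2$ for $x\ge 0$ produces the one-step bound
\begin{equation*}
\eta\langle q_k-q^{\pi^\star},c_k\rangle\le\KL{q^{\pi^\star}}{q_k}-\KL{q^{\pi^\star}}{q_{k+1}}+\frac{\eta^2}{2}\sum_{s,a}q_k(s,a)c_k(s,a)^2 .
\end{equation*}
Telescoping across $k=1,\dots,K$, dropping the nonnegative terminal KL, and using $c_k\le 1$ together with $\sum_{s,a}q_k(s,a)\le D/\cmin$ yields
\begin{equation*}
\bbE[\regret]\le\frac{\KL{q^{\pi^\star}}{q_1}}{\eta}+\frac{\eta KD}{2\cmin} .
\end{equation*}

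Third, I would initialize $q_1$ as a strictly positive uniform seed (e.g., $q_1(s,a)=1/(|S||A|)$, which need not lie in $\boundedocset{D/\cmin}$ since step~\eqref{eq:omd-step-1} is unconstrained). Using $q^{\pi^\star}(s,a)\le D/\cmin$ and bounding $x\log x$ on $[0,D/\cmin]$ gives $\KL{q^{\pi^\star}}{q_1}=O\bigl(\frac{D}{\cmin}\log\frac{D|S||A|}{\cmin}\bigr)$. Balancing the two terms with $\eta=\wt\Theta(1/\sqrt{K})$ then recovers the stated $\tO(D\sqrt{K}/\cmin)$ regret bound.

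The main obstacle is the stability step. Classical O-REPS treats $q$ as a probability distribution over a loop-free trajectory space, where total mass is fixed and Pinsker yields a sharp $\ell_1$-to-KL conversion. Here the iterates are unnormalized, with total mass up to $D/\cmin$, so the quadratic ``variance'' term $\sum q_k(s,a)c_k(s,a)^2$ naturally scales with $D/\cmin$ rather than with $1$. Working directly with the exponential closed form and the unnormalized KL is what prevents losing additional factors, and the feasibility constraint~\eqref{eq:const-time} with $\tau=D/\cmin$ is exactly what keeps both this stability term and the Pythagorean projection under simultaneous control -- which is why the novel time-horizon constraint was added to the OMD subproblem in the first place.
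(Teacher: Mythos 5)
Your proof follows essentially the same route as the paper's (Appendix D): establish $q^{\pi^\star}\in\boundedocset{D/\cmin}$ exactly as in the paper's Lemma on the best policy in hindsight, run the standard OMD three-point/Pythagorean analysis on the constrained set, bound the local-norm term by $\eta K D/\cmin$ via the mass constraint of \eqref{eq:const-time}, and bound the initial divergence by $\tO(D/\cmin)$ before tuning $\eta$. The only cosmetic difference is the initialization -- the paper seeds with the all-ones measure and bounds $\KL{q^{\pi^\star}}{q_1}$ via first-order optimality of the entropy minimizer over $\boundedocset{D/\cmin}$, whereas you compute the KL to a uniform seed directly -- but note that the first \emph{played} iterate must still be the projection of that seed onto $\boundedocset{D/\cmin}$ (otherwise episode $1$ could be played with an improper policy, the failure mode of Appendix A), with one extra application of the Pythagorean inequality transferring your bound to the projected point.
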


\subsection{High Probability Regret Bound}
\label{sec:high-prob-known-P}

To obtain high probability regret bounds, we must control the deviation between the learner's suffered cost and its expected value.
While this is easily achievable in the finite-horizon setting through an application of Azuma inequality, it appears a major challenge in SSP since there is no finite upper bound on the learner's cost.
In fact, Appendix A illustrates a simple example with $0$ expected regret, but constant probability to suffer large regret (linear in $K$).
The idea here is that even though a policy has small cost in expectation, there might be a tiny probability that it suffers huge cost (this cannot happen in finite-horizon since the cost is always bounded by $H$).
Finally, even an event with tiny probability will happen at least once if there is a large number of episodes $K$.

Our strategy to control the deviation between the learner's actual suffered cost and its expected value is based on the observation that this quantity is closely related to the expected time to reach the goal from any state.
This is illustrated by the following lemma whose proof is based on an adaptation of Azuma inequality to unbounded martingales (Theorem 11) which may be of independent interest.

\begin{lemma}
    \label{lem:dev-from-exp-cost}
    Assume that in each episode $k$ the learner plays a strategy $\sigma_k$ such that the expected time to reach the goal from any state is at most $\tau$.
    Then, with probability at least $1 - \delta$,
    \begin{align*}
        \sum_{k=1}^K \sum_{i=1}^{\Ik} c_k(s^k_i,a^k_i)
        & \le
        \sum_{k=1}^K \bbE \Bigl[ \sum_{i=1}^{\Ik} c_k(s^k_i,a^k_i) \mid P,\sigma_k,s_1^k = \sinit \Bigr]
        \\
        & \qquad +
        O \Bigl( \tau \sqrt{K \log^3 \frac{K}{\delta}} \Bigr).
    \end{align*}
\end{lemma}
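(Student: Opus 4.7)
The plan is to reduce the problem to one where the per-episode cost is essentially bounded, so that a standard martingale concentration inequality can be applied. The main obstacle is that the raw per-episode cost $X_k := \sum_{i=1}^{\Ik} c_k(s^k_i,a^k_i)$ is unbounded because $\Ik$ is unbounded; the hypothesis on the expected hitting time is exactly what is needed to control its tails.

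The first step is to derive an exponential tail bound on $\Ik$ by iterating Markov's inequality. Since $\sigma_k$ reaches the goal in expected time at most $\tau$ from every state, Markov's inequality gives $\Pr[\Ik > 2\tau \mid \calF_{k-1}] \le 1/2$. Conditioning on the state after $2\tau$ steps (if the goal has not yet been reached) and repeating, we obtain $\Pr[\Ik > 2m\tau \mid \calF_{k-1}] \le 2^{-m}$ for every $m \ge 1$. Choosing $L = 2\tau \lceil \log_2(2K/\delta) \rceil = O(\tau \log(K/\delta))$ and taking a union bound over $k=1,\dots,K$ yields a good event $\gevent$ with $\Pr[\gevent] \ge 1-\delta/2$ on which every episode terminates within $L$ steps, and in particular $X_k \le L$ for all $k$.

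The second step is to set up the per-episode martingale and truncate. Let $\calF_{k-1}$ be the history through the end of episode $k-1$, let $Y_k = \bbE[X_k \mid \calF_{k-1}]$ (which is at most $\tau$ by hypothesis, since costs lie in $[0,1]$), and define the truncated variable $\tilde X_k = \min\{X_k, L\}$. Then $D_k := \tilde X_k - \bbE[\tilde X_k \mid \calF_{k-1}]$ is a bounded martingale difference sequence with $|D_k| \le L$, so Azuma--Hoeffding gives
\[
    \sum_{k=1}^K D_k \le O\Bigl(L \sqrt{K \log(1/\delta)}\Bigr) = O\Bigl(\tau \sqrt{K \log^3(K/\delta)}\Bigr)
\]
with probability at least $1-\delta/4$. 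The bias introduced by truncation, $\bbE[X_k - \tilde X_k \mid \calF_{k-1}] = \bbE[(X_k - L)_+ \mid \calF_{k-1}]$, is controlled by integrating the exponential tail on $\Ik$ derived above, and is easily seen to be $O(\tau/K)$ per episode, thus contributing a negligible $O(\tau)$ term in total. Finally, on the event $\gevent$ we have $\tilde X_k = X_k$ for every $k$, so a union bound over $\gevent$ and the Azuma event finishes the argument.

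The technically trickiest part is performing the truncation argument without blowing up the logarithmic factors, which is why the authors package it as an Azuma-type inequality for unbounded martingales (Theorem 11) whose differences have sub-exponential tails; applying it directly to the sequence $X_k - Y_k$ produces the stated $\sqrt{K \log^3(K/\delta)}$ bound in one stroke, but conceptually it is doing exactly the truncate-and-Azuma steps described above.
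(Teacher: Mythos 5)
Your proposal is correct and follows essentially the same route as the paper: it first iterates Markov's inequality to obtain an exponential tail bound on the episode length (the paper's Lemma 5), and then performs a truncate-and-Azuma argument with truncation level $O(\tau\log(K/\delta))$, controlling the truncation bias by integrating the exponential tail. This is exactly what the paper does by applying its anytime Azuma inequality for unbounded martingales (Theorem 11) to the centered per-episode costs, as you yourself note.
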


Thus, bounding the regret in high probability boils down to guaranteeing that $\policytime{\pi_k}(s) \le D/\cmin$ for all $s \in S$ and not just $\sinit$.
Unfortunately, these constraints admit a non-convex set of occupancy measures.
To bypass this issue we propose the SSP-O-REPS2 algorithm that operates as follows:
start every episode $k$ by playing the policy $\pi_k$ chosen by SSP-O-REPS (i.e., Eq.~\eqref{eq:omd-update}), but once we reach a state $s$ whose expected time to the goal is too long (i.e., $\policytime{\pi_k}(s) \ge D/\cmin$), switch to the fast policy $\fastpolicy$.
We defer to the pseudocode in Appendix E.

Now the conditions of Lemma~\ref{lem:dev-from-exp-cost} are clearly met, so it remains to relate the expected cost of our new strategy $\sigma_k$ to this of $\pi_k$.
The key novelty of our mid-episode policy switch is the timing.
The naive approach would be to perform the switch when the policy takes too long, but then there is no way to bound the excess cost when compared to that of $\pi_k$. 
Performing the switch only once a ``bad'' state is reached ensures that the expected cost of $\sigma_k$ can only be better than $\pi_k$.
The analysis in Appendix F makes these claims formal and proves the following high probability regret bound.

\begin{theorem}
    \label{thm:hp-reg-full-info}
    Under Assumption~\ref{ass:c-min}, with probability $1 - \delta$, the regret of SSP-O-REPS2 with known transition function is
    \[
        \regret
        \le
        O \Bigl( \frac{D}{\cmin} \sqrt{K \log^3 \frac{K D |S| |A|}{\delta \cmin}} \Bigr)
        =
        \tO \Bigl( \frac{D}{\cmin} \sqrt{K} \Bigr).
    \]
\end{theorem}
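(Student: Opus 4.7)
The plan is to decompose the regret of SSP-O-REPS2 into three pieces and bound each separately. Let $\sigma_k$ denote the (non-stationary) strategy actually executed in episode $k$: play $\pi_k$ while the current state $s$ satisfies $\policytime{\pi_k}(s) < D/\cmin$, and switch to $\fastpolicy$ the first time a state with $\policytime{\pi_k}(s) \ge D/\cmin$ is reached. Writing $C_k = \bbE[\sum_{i=1}^{\Ik} c_k(s_i^k,a_i^k) \mid P,\sigma_k,s_1^k=\sinit]$, the natural split is
\[
    \regret
    \;=\;
    \underbrace{\sum_{k=1}^K \sum_{i=1}^{\Ik} c_k(s_i^k,a_i^k) - \sum_{k=1}^K C_k}_{\text{(A) deviation}}
    \;+\;
    \underbrace{\sum_{k=1}^K C_k - \sum_{k=1}^K \langle q^{\pi_k}, c_k\rangle}_{\text{(B) switching overhead}}
    \;+\;
    \underbrace{\sum_{k=1}^K \langle q^{\pi_k} - q^{\pi^\star}, c_k\rangle}_{\text{(C) OMD regret on iterates}}.
\]
For (A), I would invoke Lemma~\ref{lem:dev-from-exp-cost} with $\tau = O(D/\cmin)$. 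From a state $s$ with $\policytime{\pi_k}(s) < D/\cmin$, $\sigma_k$ mimics $\pi_k$ until the goal or a ``bad'' state is hit (expected duration at most $\policytime{\pi_k}(s) < D/\cmin$) and then plays $\fastpolicy$ for at most $D$ additional expected steps; from a bad state, $\sigma_k$ plays $\fastpolicy$ directly with expected time at most $D$. Using $D \le D/\cmin$ (as $\cmin \le 1$), the hypothesis of the lemma holds and we obtain $\text{(A)} \le O(\tfrac{D}{\cmin}\sqrt{K\log^3(K/\delta)})$.

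The crux is the switching-overhead term (B), which I would argue is \emph{non-positive} by carefully exploiting the threshold $D/\cmin$. At any state $s$ that triggers a switch, Assumption~\ref{ass:c-min} gives $\ctg{\pi_k}_k(s) \ge \cmin\cdot \policytime{\pi_k}(s) \ge \cmin\cdot D/\cmin = D$, whereas $\ctg{\fastpolicy}_k(s) \le \policytime{\fastpolicy}(s) \le D$ because $c_k \in [0,1]^{S\times A}$ and $\fastpolicy$ minimises expected hitting time. Thus replacing the $\pi_k$-continuation at a bad state by $\fastpolicy$ can only decrease the expected remaining cost. Formally, by conditioning on the (possibly infinite) stopping time at which $\sigma_k$ performs the switch and using the Bellman equation for $\ctg{\pi_k}_k$ at the swap point, this per-state inequality lifts to $C_k \le \ctg{\pi_k}_k(\sinit) = \langle q^{\pi_k}, c_k\rangle$, so $\text{(B)}\le 0$.

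For (C) I would use that, against an oblivious cost sequence, the OMD iterates $\pi_k$ are deterministic functions of $c_1,\ldots,c_{k-1}$, so Theorem~\ref{thm:exp-reg-full-info} provides a \emph{pointwise} bound $\text{(C)} \le \tO(\tfrac{D}{\cmin}\sqrt{K})$ rather than one merely in expectation. Summing the three bounds and consolidating the $\log$ factors then yields the stated $\tO(\tfrac{D}{\cmin}\sqrt{K})$ high-probability regret. The main obstacle is term (B): one must analyse a non-Markovian hybrid strategy, and the clean ``$\le 0$'' comparison hinges on matching the switch threshold $D/\cmin$ exactly with the quantity $\cmin\cdot\tau$ that Assumption~\ref{ass:c-min} controls. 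A smaller threshold would let the post-switch cost exceed what $\pi_k$ was about to pay (breaking (B)); a larger threshold would inflate the $\tau$ in Lemma~\ref{lem:dev-from-exp-cost} (breaking (A)). The timing of the switch is therefore both the key algorithmic device and the delicate analytic step.
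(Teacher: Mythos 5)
Your proposal is correct and matches the paper's proof essentially step for step: the paper uses the same two-part decomposition (your (B) and (C) are merged into its second term), proves your claim that the switching overhead is non-positive as Lemma~\ref{lem:switch-imp-exp-cost} via exactly the $\ctg{\pi_k}_k(s) \ge \cmin \policytime{\pi_k}(s) \ge D \ge \ctg{\fastpolicy}_k(s)$ comparison at the switch state, verifies the $O(D/\cmin)$ expected-time hypothesis of Lemma~\ref{lem:dev-from-exp-cost} as Lemma~\ref{lem:learner-strategy-bounded-time}, and invokes the pointwise OMD bound of Eq.~\eqref{eq:reg-olo-bound} for the comparator term.
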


\section{Unknown Transition Function}
\label{sec:high-prob-unknown-P}

A standard technique to deal with unknown transition function in adversarial MDPs is to use optimistic estimates of $P$.
We follow this approach but, as in the known transitions case, crucial modifications are necessary to apply optimism and obtain regret guarantees.
In this section we describe our SSP-O-REPS3 algorithm for unknown transitions.

We start by describing the confidence sets and transition estimates used by the algorithm.
SSP-O-REPS3 proceeds in \emph{epochs} and updates the confidence set at the beginning of every epoch.
The first epoch begins at the first time step, and an epoch ends once an episode ends or the number of visits to some state-action pair is doubled.
Denote by $N^e(s,a)$ the number of visits to $(s,a)$ up to (and not including) epoch $e$, and by $N^e(s,a,s')$ the number of times this was followed by a transition to $s'$.
Let $N^e_+(s,a) = \max \{ N^e(s,a) , 1 \}$ and define the empirical transition function for epoch $e$ by $\bar P_e(s' | s,a) =N^e(s,a,s')/N^e_+(s,a)$.
Finally, define the confidence set for epoch $e$ as the set of all transition functions $P'$ such that for every $(s,a,s') \in S \times A \times (S \cup \{g\})$,
\[
    | P' (s' \mid s,a) - \bar P_e (s' \mid s,a) |
    \le
    \epsilon_e(s' \mid s,a),
\]
where $\epsilon_e(s' | s,a) = 4 \sqrt{\bar P_e (s' | s,a) A^e(s,a)} + 28 A^e(s,a)$ is the confidence set radius for $A^e(s,a) = \frac{\log \bigl(|S| |A| N^e_+(s,a) / \delta \bigr)}{N^e_+(s,a)}$.
By Bernstein inequality (see, e.g., \cite{azar2017minimax}), these confidence sets contain $P$ with probability $1 - \delta$ for all epochs.

Next, we extend our OMD implementation to the unknown transitions case.
We follow the elegant approach of \cite{rosenberg2019full} that use occupancy measures that are extended to include a transition function as well, that is,
\[
    q^{P,\pi}(s,a,s') 
    = 
    \lim_{T \rightarrow \infty} \bbE \Bigl[ \sum_{t=1}^T \indevent{s_t = s,a_t = a,s_{t+1}=s'} \Bigr],
\]
where $\bbE[\cdot]$ is shorthand for $\bbE [\cdot \mid P,\pi,s_1 = \sinit]$ here.
Now an occupancy measure $q$ corresponds to a transition function-policy pair with the inverse mapping given by
\[
    \pi^q(a \mid s) 
    = 
    \frac{q(s,a)}{q(s)}
    \quad ; \quad
    P^q(s' \mid s,a) = \frac{q(s,a,s')}{q(s,a)},
\]
where $q(s,a) = \sum_{s' \in S \cup \{ \ssink \}} q(s,a,s')$ is the expected number of visits to $(s,a)$ w.r.t $P^q$ when playing $\pi^q$.
We extend the set $\boundedocset{\tau}$ (which we cannot compute without knowing $P$), and perform OMD on the set $\wtboundedocset{\tau}{e}$ that changes through epochs.
$\wtboundedocset{\tau}{e}$ is defined as the set of occupancy measures $q$ whose induced transition function $P^q$ is in the confidence set of epoch $e$ and the expected time of $\pi^q$ (w.r.t $P^q$) from $\sinit$ to the goal is at most $\tau$.
This set is again convex with a compact representation, and it admits the following OMD update step,
\begin{align}
    \label{eq:comp-q-unknown}
    q_k
    =
    q^{P_k,\pi_k}
    = 
    \argmin_{q \in \wtboundedocset{\tau}{e(k)}} \eta \langle q , c_{k-1} \rangle + \KL{q}{q_{k-1}},
\end{align}
where $e(k)$ denotes the first epoch in episode $k$.
Similarly to the known transitions case, this update can be performed efficiently.
See Appendix G for details of the implementation.

In contrast to the known transitions case, this version of OMD cannot even guarantee bounded regret in expectation, because without knowledge of the transition function there is no guarantee that the chosen policies are even proper.
Note that in the easier loop-free SSP setting, this OMD version is enough to guarantee a high probability regret bound even with unknown transitions.
We now describe the mechanisms that need to be combined with OMD to obtain our regret bound.

Similarly to Section~\ref{sec:high-prob-known-P}, we must make sure that the learner does not take too much time to reach the goal.
The problem now is that we cannot compute its expected time $\policytime{\pi_k}$ since $P$ is unknown.
Instead, we use the expected time of $\pi_k$ w.r.t $P_k$ (denoted by $\optimisticpolicytime{\pi_k}_k$) which is an estimate of $\policytime{\pi_k}$, but not necessarily an optimistic one.
Once a state $s$ is reached such that $\optimisticpolicytime{\pi_k}_k(s) \ge D/\cmin$ we want to switch to the fast policy $\fastpolicy$ which again cannot be computed without knowing $P$.
This policy is replaced with its optimistic estimate $\optimisticfastpolicy{e}$, which we refer to as the optimistic fast policy.
Together with the optimistic fast transition function $\wt P^f_e$, this policy minimizes the expected time to the goal out of all pairs of policies and transition functions from the confidence set of epoch $e$.
The details of computing the optimistic fast policy are in Appendix G.

If we were in the known transitions case, this would have been enough.
So it seems that it should also suffice with unknown transitions, if we recompute the optimistic fast policy in the end of every epoch similarly to \cite{cohen2020ssp}.
However, in the adversarial setting this approach fails for two main reasons.
First, we cannot guarantee that $\optimisticpolicytime{\pi_k}_k$ is a good enough estimate of $\policytime{\pi_k}$ in all states.
Second, the learner's policy is stochastic which means that we cannot guarantee all actions are being explored enough (as opposed to \cite{cohen2020ssp} that only play deterministic policies since they do not tackle adversarial costs).
To overcome these challenges, we propose to force exploration in the following manner.
Define a state to be \emph{unknown} until every action was played at least $\Phi = \alpha \numvisitsuntilknownbern$ times in this state (for some constant $\alpha > 0$), and \emph{known} afterwards.
When reaching an unknown state, we play the least played action so far (forcing exploration), and only then switch to the optimistic fast policy.
The idea behind this forced exploration is inspired by \cite{cohen2020ssp} that show that once all states are known, the optimistic fast policy is proper with high probability.

To summarize, SSP-O-REPS3 operates as follows.
We start each episode $k$ by playing the policy $\pi_k$ computed in Eq.~\eqref{eq:comp-q-unknown}, and maintain confidence sets that are updated at the beginning of every epoch.
When we reach a state $s$ such that $\optimisticpolicytime{\pi_k}_k(s) \ge D/\cmin$, we switch to the optimistic fast policy.
In addition, when an unknown state is reached we play the least played action up to this point and then switch to the optimistic fast policy.
Finally, we also make the switch to the optimistic fast policy once the number of visits to some state-action pair is doubled, at which point we also recompute it.
We defer to the full pseudocode in Appendix H and to the full analysis in Appendix I that yields the following regret bound.

\begin{theorem}
    \label{thm:reg-bound-unknown-P}
    Under Assumption~\ref{ass:c-min}, with probability $1 - \delta$, the regret of SSP-O-REPS3 with known SSP-diameter $D$ is
    \begin{align*}
        \regret
        & \le
        \tO \Bigl( \frac{D |S|}{\cmin} \sqrt{|A| K} + \frac{D^2 |S|^2 |A|}{\cmin^2} \Bigr)
        \\
        & =
        \tO \Bigl( \frac{D |S|}{\cmin} \sqrt{|A| K} \Bigr),
    \end{align*}
    where the last equality holds for $K \ge D^2 |S|^2 |A| / \cmin^2$.
\end{theorem}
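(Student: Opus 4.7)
The plan is to work on a high-probability good event $\gevent$ and decompose the regret into four terms. I would define $\gevent$ by: (a) every Bernstein confidence set contains the true $P$ throughout the run, so that $q^{\pi^\star} \in \wtboundedocset{D/\cmin}{e(k)}$ for every $k$ (using $\ctg{\pi^\star}_k(\sinit) \ge \cmin \policytime{\pi^\star}(\sinit)$ to verify the time constraint); (b) empirical visit counts concentrate around their expectations; and (c) once every state-action pair has been forced $\Phi = \alpha \numvisitsuntilknownbern$ times, the optimistic fast policy $\optimisticfastpolicy{e}$ is proper with hitting time $O(D)$ from every state, by an argument analogous to \cite{cohen2020ssp}. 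On $\gevent$ I would then write
\begin{align*}
    \regret
    & =
    \underbrace{\sum_k \langle q_k - q^{\pi^\star}, c_k \rangle}_{\text{(i)}}
    +
    \underbrace{\sum_k \bigl( \optimisticctg{\sigma_k}_k(\sinit) - \langle q_k, c_k \rangle \bigr)}_{\text{(ii)}}
    \\
    & \quad
    +
    \underbrace{\sum_k \bigl( \ctg{\sigma_k}_k(\sinit) - \optimisticctg{\sigma_k}_k(\sinit) \bigr)}_{\text{(iii)}}
    +
    \underbrace{\sum_k \Bigl( \sum_i c_k(s^k_i,a^k_i) - \ctg{\sigma_k}_k(\sinit) \Bigr)}_{\text{(iv)}},
\end{align*}
using that $\sum_k \langle q^{\pi^\star}, c_k\rangle = \sum_k \ctg{\pi^\star}_k(\sinit)$.

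For (i) I would run the standard KL-regularized OMD analysis over the feasibility set $\wtboundedocset{D/\cmin}{e(k)}$: since $\|q^{\pi^\star}\|_1 \le D/\cmin$, the choice $\eta = \wt\Theta(\cmin/(D\sqrt{K}))$ yields an $\tO((D/\cmin)\sqrt{K})$ bound; the feasible set shrinks across epochs, but $q^{\pi^\star}$ stays feasible on $\gevent$, which is what the telescoping needs. For (iii) I would apply an SSP value-difference identity to rewrite the per-episode bias as a sum over reachable $(s,a)$ of $q^{P,\sigma_k}(s,a)$ times the Bernstein width $\epsilon_{e(k)}$ contracted with $\ctg{\sigma_k}_k$; using that $\ctg{\sigma_k}_k(s) \le \tO(D/\cmin)$ on reachable states (thanks to the switching rule combined with property (c) of $\gevent$), splitting the width into its variance and lower-order parts, and then applying Cauchy--Schwarz with a pigeonhole over epochs delivers $\tO(D|S|\sqrt{|A|K}/\cmin)$ plus a lower-order $\tO(D^2|S|^2|A|/\cmin^2)$. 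For (iv) I would invoke Lemma~\ref{lem:dev-from-exp-cost} with $\tau = \tO(D/\cmin)$, yielding an additional $\tO((D/\cmin)\sqrt{K})$.

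Term (ii) captures both the mid-episode switch to the optimistic fast policy and the cost of forced exploration. The switch is triggered precisely when $\optimisticpolicytime{\pi_k}_k(s) \ge D/\cmin$ at the currently visited state, which, by the timing argument used in the proof of Theorem~\ref{thm:hp-reg-full-info} but applied inside the optimistic model $\tP_k$, makes the switch non-harmful: if forced exploration were absent, $\optimisticctg{\sigma_k}_k(\sinit) \le \langle q_k, c_k\rangle$. The forced-exploration contribution is then controlled by noting that every $(s,a)$ is forced at most $\Phi$ times, and each forced step is followed by an optimistic-fast-policy rollout of expected cost $O(D)$ on $\gevent$, contributing $\tO(|S||A|\Phi D) = \tO(D^2|S|^2|A|/\cmin^2)$. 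Summing (i)--(iv) gives the stated bound. I expect the main obstacles to be: (a) establishing property (c) of $\gevent$, which requires a Bernstein concentration argument sharp enough to handle stochastic policies (unlike the deterministic-policy case in \cite{cohen2020ssp}); and (b) the value-difference step in (iii), where the \emph{a priori} unbounded $\ctg{\sigma_k}_k$ is only tamed by the switching rule, so the OMD update and the optimistic estimation feed back into each other and must be unwound carefully.
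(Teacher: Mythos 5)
Your high-level ingredients are the right ones, and your term (i) is essentially the paper's OMD lemma for extended occupancy measures. But there is a genuine gap in terms (iii) and (iv), and it is the central difficulty of the unknown-transition case rather than a technicality. Both terms require that the true expected hitting time (and hence cost-to-go) of $\sigma_k$ from every state it can visit be $\tO(D/\cmin)$: term (iv) needs this to invoke Lemma~\ref{lem:dev-from-exp-cost}, and term (iii) needs $\ctg{\sigma_k}_k(s) \le \tO(D/\cmin)$ together with a finite occupancy measure for $\sigma_k$ under the true $P$. The switching rule, however, is triggered by $\optimisticpolicytime{\pi_k}_k(s) \ge D/\cmin$, where $\optimisticpolicytime{\pi_k}_k$ is the hitting time under the \emph{estimated} $P_k$ --- which is not optimistic and, for poorly visited state-action pairs, can be arbitrarily far from $\policytime{\pi_k}$. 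On the good event nothing prevents $\pi_k$ from being improper under $P$ while $\optimisticpolicytime{\pi_k}_k$ stays below $D/\cmin$ on every state actually visited, so the switch may simply never fire and the premise of Lemma~\ref{lem:dev-from-exp-cost} cannot be verified (the only other exits are epoch ends and unknown states, and the time to double a visit count is not $\tO(D/\cmin)$). Your own caveat (b) gestures at this, but the feedback loop you describe cannot be unwound at the episode level.

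This is precisely why the paper abandons the per-episode decomposition and works with \emph{intervals}, including artificial interval endings once the accumulated expected cost within an interval passes $D/\cmin$: each interval then has length at most $2D/\cmin^2$ deterministically, its conditional variance is bounded by $O(D^2/\cmin^2)$ (Lemma~\ref{lem:bounded-V-h-i}), the number of intervals is bounded in terms of the total expected cost $\wt C_M$ (Lemma~\ref{obs:cost-bounds-intervals}), and the final bound is obtained by solving a self-referential inequality in $\wt C_M$ --- which simultaneously proves that the $K$ episodes terminate at all. The telescoping of optimistic values across interval boundaries (Lemma~\ref{lem:bern-first-reg-term-bound}) is the rigorous version of your ``non-harmful switch'' claim for term (ii); note there that $\sigma_k$ mixes policies evaluated under different transition estimates ($P_k$ for $\pi_k$, the optimistic-fast transition function for $\optimisticfastpolicy{e}$), so a single optimistic value of $\sigma_k$, as your (ii)/(iii) split presumes, is not even well defined without the per-interval tweak of $\optimisticP{m}$ described in Appendix I. Your treatment of (i), of the forced-exploration cost, and the final rates are all consistent with the paper; what is missing is the interval machinery that makes (iii) and (iv) provable.
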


Our analysis builds on ideas from \cite{cohen2020ssp} that analyze optimistic algorithms in SSP with stochastic costs.
However, for the many reasons described in this paper and because our algorithm is not optimistic, many novel technical adaptions are needed in order to tackle the new challenges that arise when both the costs are adversarial and the transition function is unknown.
Due to lack of space these are mostly presented in Appendix I, but here we give a short overview of the analysis.

Recall that the learner has two objectives in SSP: minimizing cost and reaching the goal.
When transitions were known, we used Lemma~\ref{lem:dev-from-exp-cost} to say that (with high probability) the goal is reached in every episode, and then we could simply focus on bounding the regret.
With unknown transitions, the argument for bounding the total time becomes more involved.
The idea is that (with high probability) the number of steps between policy switches cannot be too long, as a consequence of our added mechanisms. 
To that end, we split the time steps into \emph{intervals}.
The first interval begins at the first time step, and an interval ends once (1) an episode ends, (2) an epoch ends, (3) an unknown state is reached, or (4) a policy switch is made due to reaching a ``bad'' state.
Intuitively, we bound the length of every interval by $\tO(D/\cmin)$ with high probability, and then use fact that the number of intervals is bounded by $\tO(K + D|S|^2|A|/\cmin^2)$ to bound the total time.
Then, we show that the regret of the learner can be bounded by the regret of OMD (analyzed in Section~\ref{sec:known-P}) plus the square root of the total variance (times $|S|^2 |A|$).
Finally, we obtain our regret bound by noticing that the total variance is equal to the variance in each interval times the number of intervals, and bounding the variance in an interval by $O(D^2/\cmin^2)$ .




\paragraph{Estimating the SSP-diameter.}
When the transition function is unknown, we cannot compute the diameter $D$.
However, a careful look at our algorithms shows that we use it only twice.
First, we pick $\tau = D/\cmin$ as an upper bound on the expected time of the best policy in hindsight.
For this purpose it is enough to use $\policytime{\fastpolicy}(\sinit)/\cmin$, and therefore we shall dedicate the first $L$ episodes to computing an estimate $\wt D(\sinit)$ of $\policytime{\fastpolicy}(\sinit)$ before running SSP-O-REPS3.
Second, $D$ is used to make a switch when a ``bad'' or unknown state $s$ is reached, but again it is enough to use $\policytime{\fastpolicy}(s)$ instead.
Similarly, we use the first $L$ visits to $s$ to estimate $\policytime{\fastpolicy}(s)$ and then continue executing the algorithm with $\wt D(s)$ instead of $D$.

To compute $\wt D(s)$ we run the algorithm of \cite{cohen2020ssp} for regret minimization in SSP with constant cost of $1$ (since it measures time).
By their regret bound, we can set $L \approx \sqrt{K}$ and suffer negligible additional regret.
This is also enough to yield the two properties we need in order to keep the same regret bound (with high probability): $\wt D(s)$ is an upper bound on $\policytime{\fastpolicy}(s)$ for any $s \in S$, and $\wt D(s) \le O(D)$ (i.e., it is not too large).
Details and full proofs in Appendix~J.

\section{Discussion}
\label{sec:discussion}

\paragraph{Lower bound and future work.}
In this paper we presented the first algorithms to achieve sub-linear regret in SSP with adversarially changing costs.
Building on some of our ideas, \cite{chen2020minimax} recently proposed sophisticated algorithms with minimax optimal regret of $\tO (\sqrt{D T_\star K})$ in the known transitions case, where $T_\star$ is the expected time of the best policy in hindsight.
Interestingly, their lower bound reveals a gap from the stochastic setting (and from finite-horizon adversarial MDPs), showing that the adversarial SSP model is indeed significantly more challenging than previous models.
Moreover, it shows that our regret bounds are near-optimal (up to $1/\sqrt{\cmin}$) in the hard case where the expected time of $\pi^\star$ is as large as $D/\cmin$ (see example in Appendix~A).

There are still many interesting open problems in adversarial SSPs.
Achieving minimax optimal regret with unknown dynamics is an important open problem that can hopefully be solved using some of the techniques presented here.
The known transitions case is still far from solved as well.
The algorithm of \cite{chen2020minimax} requires knowing $T_\star$ in advance which is a very restrictive assumption.
Estimating $T_\star$ on the fly is another important open problem which seems very challenging due to the adversarially changing costs.

\paragraph{SSP vs finite-horizon.}
As this paper and the works of \cite{tarbouriech2019noregret,cohen2020ssp} attempt to show, the SSP problem presents very different challenges than finite-horizon MDPs (or equivalently loop-free SSPs) although they are seemingly similar in structure.
These differences stem from the double objective that the agent has to face in SSP, i.e., minimizing cost vs reaching the goal, while the only focus of the finite-horizon model is minimizing cost (the time of each episode is bounded by $H$ by definition).
Apart from the conceptual difference, this leads to numerous technical challenges, where the biggest one is unbounded value functions and episode lengths.
Note that almost every online learning problem has some boundness assumptions and therefore novel technical tools must be used here (or at least non-trivial adaptations of existing tools, e.g., Theorem 11).

Dealing with adversarial costs in SSP is challenging even when the transition function is known to the learner.
As described in this paper, using occupancy measures, this becomes an online linear optimization problem.
However, unlike the finite-horizon case, in the SSP setting the decision set (i.e., the set of occupancy measures) does not have a bounded diameter (in finite-horizon it has diameter $H$), and this is the source of the unique challenges.
To address these issues, we proposed to limit the decision set so it has a finite diameter (but still contains the best occupancy measure in hindsight).
Surprisingly this is not enough to obtain high probability regret bounds (see example in Appendix A), because we cannot constrain the expected time from all states, and thus we used a novel notion of switching policy when reaching ``bad'' states.

When the transitions are unknown, all these challenges become harder because in order to estimate the expected cost of a policy to reasonable error (even just to determine whether it is proper), one needs very good estimation of the transition function.
While in the finite-horizon setting OMD is easily generalized to unknown transitions through optimistic estimates, in adversarial SSP further adaptations are necessary.

\paragraph{Adversarial vs stochastic costs in SSP.}
In this paper we studied the effects of adversarially changing costs on the general SSP model without any restrictive assumptions, previously studied only under stochastic costs \cite{tarbouriech2019noregret,cohen2020ssp}.
The recent lower bound \cite{chen2020minimax} shows that adversarial costs in SSP pose significant new challenges, as opposed to finite-horizon where the lower bound for adversarial or stochastic costs is the same.

Both \cite{tarbouriech2019noregret,cohen2020ssp} use optimism w.r.t the costs, which ensures them that the time is also bounded since they use the positive costs assumption, i.e., Assumption~\ref{ass:c-min}.
While $\cmin$ appears in their regret bounds, the latter is able to push it to an additive term (independent of $K$) and thus keep a regret of $\tO(\sqrt{K})$ in the general case (after applying perturbation).
Since we are dealing with adversarial costs, we cannot use optimism.
Instead we use the OMD method to handle the adversary, and must make sure that we do so while reaching the goal with high probability.
For this reason we incorporate explicit constraints on the time, and these cause us to suffer regret that depends on $D/\cmin$ instead of $D$ since $\policytime{\pi^\star}$ is not bounded by $D$ even though $\ctg{\pi^\star}$ is.
This dependence is unavoidable in the adversarial case, and it also requires the additional challenge of estimating $D$, while optimistic estimates are bounded by $D$ (with high probability).

Technically, our analysis follows the framework of \cite{cohen2020ssp} since we need to show the goal is reached with high probability.
Yet, the mechanisms we introduced are necessary to make this framework useful in the adversarial case, and even then careful analysis is needed.
Hopefully, the framework we introduced here will help obtain minimax optimal regret with unknown transitions.
In this context, two notable mechanisms are forced exploration and policy switch in ``bad'' states.
Forced exploration is key to handle large variance stochastic policies might have in SSP (without adversarial costs deterministic policies suffice).
It ensures that we can determine whether our policies are proper as soon as possible and finish intervals early.
While the motivation for switching in ``bad'' states is clear from known transitions, when dynamics are unknown this switch becomes problematic as we cannot guarantee it actually occurs in ``bad'' states (our estimate for the time is not even optimistic).
More ideas are required in order to bound the excess cost that comes from switching policies in falsely estimated ``bad'' states (see Appendix I).

\section*{Acknowledgments}

This project has received funding from the European Research Council (ERC) under the European Union’s Horizon 2020 research and innovation program (grant agreement No. 882396), by the Israel Science Foundation (grant number 993/17) and the Yandex Initiative for Machine Learning at Tel Aviv University.

\bibliographystyle{named}
\bibliography{ijcai21}

\begin{thebibliography}{}

\bibitem[\protect\citeauthoryear{Auer \bgroup \em et al.\egroup
  }{2002}]{auer2002nonstochastic}
Peter Auer, Nicolo Cesa-Bianchi, Yoav Freund, and Robert~E Schapire.
\newblock The nonstochastic multiarmed bandit problem.
\newblock {\em SIAM journal on computing}, 32(1):48--77, 2002.

\bibitem[\protect\citeauthoryear{Azar \bgroup \em et al.\egroup
  }{2017}]{azar2017minimax}
Mohammad~Gheshlaghi Azar, Ian Osband, and R{\'e}mi Munos.
\newblock Minimax regret bounds for reinforcement learning.
\newblock In {\em Proceedings of the 34th International Conference on Machine
  Learning-Volume 70}, pages 263--272. JMLR. org, 2017.

\bibitem[\protect\citeauthoryear{Bartlett and Tewari}{2009}]{bartlett2009regal}
Peter~L Bartlett and Ambuj Tewari.
\newblock Regal: A regularization based algorithm for reinforcement learning in
  weakly communicating mdps.
\newblock In {\em Proceedings of the Twenty-Fifth Conference on Uncertainty in
  Artificial Intelligence}, pages 35--42, 2009.

\bibitem[\protect\citeauthoryear{Bertsekas and
  Tsitsiklis}{1991}]{bertsekas1991analysis}
Dimitri~P Bertsekas and John~N Tsitsiklis.
\newblock An analysis of stochastic shortest path problems.
\newblock {\em Mathematics of Operations Research}, 16(3):580--595, 1991.

\bibitem[\protect\citeauthoryear{Cai \bgroup \em et al.\egroup
  }{2020}]{cai2019provably}
Qi~Cai, Zhuoran Yang, Chi Jin, and Zhaoran Wang.
\newblock Provably efficient exploration in policy optimization.
\newblock In {\em International Conference on Machine Learning}, pages
  1283--1294. PMLR, 2020.

\bibitem[\protect\citeauthoryear{Chen \bgroup \em et al.\egroup
  }{2020}]{chen2020minimax}
Liyu Chen, Haipeng Luo, and Chen-Yu Wei.
\newblock Minimax regret for stochastic shortest path with adversarial costs
  and known transition.
\newblock {\em arXiv preprint arXiv:2012.04053}, 2020.

\bibitem[\protect\citeauthoryear{Dann \bgroup \em et al.\egroup
  }{2017}]{dann2017unifying}
Christoph Dann, Tor Lattimore, and Emma Brunskill.
\newblock Unifying pac and regret: Uniform pac bounds for episodic
  reinforcement learning.
\newblock In {\em Advances in Neural Information Processing Systems}, pages
  5713--5723, 2017.

\bibitem[\protect\citeauthoryear{Efroni \bgroup \em et al.\egroup
  }{2019}]{efroni2019tight}
Yonathan Efroni, Nadav Merlis, Mohammad Ghavamzadeh, and Shie Mannor.
\newblock Tight regret bounds for model-based reinforcement learning with
  greedy policies.
\newblock In {\em Advances in Neural Information Processing Systems}, pages
  12203--12213, 2019.

\bibitem[\protect\citeauthoryear{Even-Dar \bgroup \em et al.\egroup
  }{2009}]{even2009online}
Eyal Even-Dar, Sham~M Kakade, and Yishay Mansour.
\newblock Online markov decision processes.
\newblock {\em Mathematics of Operations Research}, 34(3):726--736, 2009.

\bibitem[\protect\citeauthoryear{Jaksch \bgroup \em et al.\egroup
  }{2010}]{AuerUCRL}
Thomas Jaksch, Ronald Ortner, and Peter Auer.
\newblock Near-optimal regret bounds for reinforcement learning.
\newblock {\em Journal of Machine Learning Research}, 11(Apr):1563--1600, 2010.

\bibitem[\protect\citeauthoryear{Jin \bgroup \em et al.\egroup
  }{2018}]{jin2018q}
Chi Jin, Zeyuan Allen-Zhu, Sebastien Bubeck, and Michael~I Jordan.
\newblock Is q-learning provably efficient?
\newblock In {\em Advances in Neural Information Processing Systems}, pages
  4863--4873, 2018.

\bibitem[\protect\citeauthoryear{Jin \bgroup \em et al.\egroup
  }{2020}]{jin2019learning}
Chi Jin, Tiancheng Jin, Haipeng Luo, Suvrit Sra, and Tiancheng Yu.
\newblock Learning adversarial markov decision processes with bandit feedback
  and unknown transition.
\newblock In {\em International Conference on Machine Learning}, pages
  4860--4869. PMLR, 2020.

\bibitem[\protect\citeauthoryear{Manne}{1960}]{manne1960linear}
Alan~S Manne.
\newblock Linear programming and sequential decisions.
\newblock {\em Management Science}, 6(3):259--267, 1960.

\bibitem[\protect\citeauthoryear{Neu \bgroup \em et al.\egroup
  }{2010}]{neu2010ossp}
Gergely Neu, Andr{\'{a}}s Gy{\"{o}}rgy, and Csaba Szepesv{\'{a}}ri.
\newblock The online loop-free stochastic shortest-path problem.
\newblock In {\em Conference on Learning Theory {(COLT)}}, pages 231--243,
  2010.

\bibitem[\protect\citeauthoryear{Neu \bgroup \em et al.\egroup
  }{2012}]{neu2012unknown}
Gergely Neu, Andr{\'{a}}s Gy{\"{o}}rgy, and Csaba Szepesv{\'{a}}ri.
\newblock The adversarial stochastic shortest path problem with unknown
  transition probabilities.
\newblock In {\em Proceedings of the Fifteenth International Conference on
  Artificial Intelligence and Statistics, {(AISTATS)}}, pages 805--813, 2012.

\bibitem[\protect\citeauthoryear{Neu \bgroup \em et al.\egroup
  }{2014}]{neu2014bandit}
Gergely Neu, Andr{\'{a}}s Gy{\"{o}}rgy, Csaba Szepesv{\'{a}}ri, and
  Andr{\'{a}}s Antos.
\newblock Online {Markov Decision Processes} under bandit feedback.
\newblock {\em {IEEE} Trans. Automat. Contr.}, 59(3):676--691, 2014.

\bibitem[\protect\citeauthoryear{Osband \bgroup \em et al.\egroup
  }{2016}]{osband2016generalization}
Ian Osband, Benjamin Van~Roy, and Zheng Wen.
\newblock Generalization and exploration via randomized value functions.
\newblock In {\em International Conference on Machine Learning}, pages
  2377--2386, 2016.

\bibitem[\protect\citeauthoryear{Rosenberg and
  Mansour}{2019a}]{rosenberg2019full}
Aviv Rosenberg and Yishay Mansour.
\newblock Online convex optimization in adversarial markov decision processes.
\newblock In {\em International Conference on Machine Learning}, pages
  5478--5486, 2019.

\bibitem[\protect\citeauthoryear{Rosenberg and
  Mansour}{2019b}]{rosenberg2019bandit}
Aviv Rosenberg and Yishay Mansour.
\newblock Online stochastic shortest path with bandit feedback and unknown
  transition function.
\newblock In {\em Advances in Neural Information Processing Systems}, pages
  2209--2218, 2019.

\bibitem[\protect\citeauthoryear{Rosenberg \bgroup \em et al.\egroup
  }{2020}]{cohen2020ssp}
Aviv Rosenberg, Alon Cohen, Yishay Mansour, and Haim Kaplan.
\newblock Near-optimal regret bounds for stochastic shortest path.
\newblock In {\em International Conference on Machine Learning}, pages
  8210--8219. PMLR, 2020.

\bibitem[\protect\citeauthoryear{Shani \bgroup \em et al.\egroup
  }{2020}]{efroni2020optimistic}
Lior Shani, Yonathan Efroni, Aviv Rosenberg, and Shie Mannor.
\newblock Optimistic policy optimization with bandit feedback.
\newblock In {\em International Conference on Machine Learning}, pages
  8604--8613. PMLR, 2020.

\bibitem[\protect\citeauthoryear{Tarbouriech \bgroup \em et al.\egroup
  }{2020}]{tarbouriech2019noregret}
Jean Tarbouriech, Evrard Garcelon, Michal Valko, Matteo Pirotta, and Alessandro
  Lazaric.
\newblock No-regret exploration in goal-oriented reinforcement learning.
\newblock In {\em International Conference on Machine Learning}, pages
  9428--9437. PMLR, 2020.

\bibitem[\protect\citeauthoryear{Zanette and
  Brunskill}{2019}]{zanette2019tighter}
Andrea Zanette and Emma Brunskill.
\newblock Tighter problem-dependent regret bounds in reinforcement learning
  without domain knowledge using value function bounds.
\newblock In {\em International Conference on Machine Learning}, pages
  7304--7312, 2019.

\bibitem[\protect\citeauthoryear{Zimin and Neu}{2013}]{zimin2013online}
Alexander Zimin and Gergely Neu.
\newblock Online learning in episodic markovian decision processes by relative
  entropy policy search.
\newblock In {\em Advances in Neural Information Processing Systems}, pages
  1583--1591, 2013.

\end{thebibliography}

\clearpage
\onecolumn
\appendix

\section{Examples that illustrate some challenges in adversarial SSPs}
\label{sec:bad-examples}

\subsection{Naive application of OMD fails in SSP}

In general, the first policy that OMD picks is the one that maximizes the entropy, which is the uniform policy, i.e., $\pi^u(a \mid s) = 1/|A|$ for every $(s,a) \in S \times A$.
Next we show that, in SSP, this might result in exponential cost of $|A|^{|S|}$ already in the first episode.
In the finite-horizon setting, this is not a concern because the cost in a single episode is always bounded by $H$, while in SSP it can be infinite.

Consider the following MDP $M = (S,A,P,\sinit,\ssink)$ with the state space $S = \{1,\dots,|S|\}$.
In every state $i$ there is one action $a(i)$ (picked uniformly at random in advance) such that $P(i+1 \mid i,a(i)) = 1$, while the other actions return the agent to the initial state $\sinit = 1$, i.e., $P(1 \mid i,a) = 1$ for every $a \ne a(i)$.
Finally, the cost function (for the first episode in which OMD picks $\pi^u$) is simply $c(s,a) = 1$ for every $(s,a) \in S \times A$.

Clearly the best policy in this case is to pick $a(i)$ in state $i$ and then the total cost is $|S|$ (the SSP-diameter in this example is also $|S|$).
However, the uniform policy picks this action only with probability $1/|A|$ which yields exponential expected time to reach the goal (and therefore exponential cost).
To see that consider the Bellman equations for $\pi^u$:
\begin{align*}
    \ctg{\pi^u}(i) 
    & = 
    1 + \frac{1}{|A|} \cdot \ctg{\pi^u}(i+1) + (1 - \frac{1}{|A|}) \cdot \ctg{\pi^u}(1)
    \qquad \forall i=1,\dots,|S|-1
    \\
    \ctg{\pi^u}(|S|)
    & =
    1 + \frac{1}{|A|} \cdot 0 + (1 - \frac{1}{|A|}) \cdot \ctg{\pi^u}(1).
\end{align*}
Solving these equations gives $\ctg{\pi^u}(\sinit) = \ctg{\pi^u}(1) = \frac{|A| (|A|^{|S|} - 1)}{|A| - 1} \ge |A|^{|S|}$.

\subsection{The expected time of the best policy in hindsight might be \texorpdfstring{$\Omega(D/\cmin)$}{}}

The following example shows that the expected time of the best policy in hindsight might be $\Omega(D/\cmin)$, and therefore there is no better apriori choice for $\tau$.

Consider the MDP $M = (\{\sinit\}, \{a_1,a_2 \},P,\sinit,\ssink)$ that has only one state (other than the goal) and two actions.

Playing action $a_1$ transitions to the goal with probability $1/D$ and back to $\sinit$ with probability $1 - 1/D$, i.e., $P(\sinit \mid \sinit,a_1) = 1 - 1/D$ and $P(\ssink \mid \sinit,a_1) = 1/D$.
Therefore, the expected time of the policy that plays $a_1$ is $D$ and so the SSP-diameter is also bounded by $D$.

Playing action $a_2$ transitions to the goal with probability $2 \cmin / D$ and back to $\sinit$ with probability $1 - 2 \cmin / D$, i.e., $P(\sinit \mid \sinit,a_2) = 1 - 2 \cmin / D$ and $P(\ssink \mid \sinit,a_2) = 2 \cmin / D$.
Therefore, the expected time of the policy that plays $a_1$ is $\nicefrac{D}{2 \cmin}$.

Apriori there is no way to tell if $a_1$ or $a_2$ will be the best policy in hindsight.
For example, if $c(\sinit,a_1) = 1$ and $c(\sinit,a_2) = \cmin$ then $a_2$ is better, and if $c(\sinit,a_1) = 1$ and $c(\sinit,a_2) = 3 \cmin$ then $a_1$ is better.
Thus, the smallest possible choice for $\tau$ in this case is $\nicefrac{D}{2 \cmin} = \Omega(D/\cmin)$.

\subsection{A bound on the expected regret does not guarantee a high probability regret bound in SSP}

In most online learning problems, algorithms that guarantee bounded regret in expectation also guarantee bounded regret with high probability.
The way to show this (in most problems) is by Azuma inequality for bounded martingales.
However, the SSP problem is unique in the sense that guaranteeing bounded regret in expectation is significantly easier than guaranteeing bounded regret with high probability.
This is illustrated by the following simple example in which there exists a policy with $0$ expected regret, but linear regret with constant probability of at least $\nicefrac{1}{30}$.

Consider the MDP $M = (\{\sinit,s_1\},\{a_1,a_2\},P,\sinit,\ssink)$ that has only two states (other than the goal) and two actions.
In state $\sinit$ playing action $a_1$ simply transitions to the goal, i.e., $P(\ssink \mid \sinit,a_1) = 1$.
In this state playing action $a_2$ transitions to the goal with probability $p = 1 - \frac{1 - \cmin}{10 K}$ and transitions to state $s_1$ with probability $1 - p$, i.e., $P(\ssink \mid \sinit,a_2) = p$ and $P(s_1 \mid \sinit,a_2) = 1-p$.
Moreover, in state $s_1$ both actions have the same effect. 
They transition to the goal with probability $\nicefrac{1}{10 K}$ and remain in state $s_1$ with probability $1- \nicefrac{1}{10 K}$, i.e., $P(\ssink \mid s_1,a_i) = \nicefrac{1}{10 K}$ and $P(s_1 \mid s_1,a_i) = 1 - \nicefrac{1}{10 K}$ for $i=1,2$.

Now consider the simple case where the cost function is the same for all episodes.
Playing action $a_1$ always suffers a cost of $1$, i.e., $c(\sinit,a_1) = c(s_1,a_1) = 1$.
Playing action $a_2$ suffers cost of $\cmin$ in $\sinit$ but cost of $1$ in $s_1$, i.e., $c(\sinit,a_2) = \cmin$ and $c(s_1,a_2) = 1$.
There are only two policies: $\pi_1$ plays action $a_1$ in state $\sinit$, and $\pi_2$ plays $a_2$.
Notice that both policies have the same expected cost since clearly $\ctg{\pi_1}(\sinit) = 1$ and
\[
    \ctg{\pi_2}(\sinit)
    =
    \cmin + p \cdot 0 + (1-p) \cdot 10K
    =
    \cmin + \frac{1 - \cmin}{10 K} \cdot 10K
    =
    1.
\]
Moreover, both have similar expected time since clearly $\policytime{\pi_1}(\sinit) = 1$ and 
\[
    \policytime{\pi_2}(\sinit)
    =
    1 + p \cdot 0 + (1-p) \cdot 10 K
    =
    1 + \frac{1 - \cmin}{10 K} \cdot 10 K
    =
    2 - \cmin
    \le
    2.
\]

Thus, playing policy $\pi_2$ in all episodes has optimal expected regret of $0$ since
\[
    \bbE [\regret] 
    = 
    \bbE \Bigl[ \sum_{k=1}^K \ctg{\pi_2}(\sinit) - 1 \Bigr] 
    =
    \bbE \Bigl[ \sum_{k=1}^K 1 - 1 \Bigr]
    =
    0.
\]

However, we now show that with probability at least $1/2$ the actual regret is linear.
Define the event $E_k$ -- in episode $k$ the agent's cost was at most $2K$.
Now define $E = \bigcap_{k=1}^K E_k$ as the event that $E_k$ occurs for all episodes.
Notice that if $E$ does not occur than the regret is linear in $K$ since in some episode $k$ the cost was at least $2K$ while the overall cost of $\pi_1$ in all episodes is just $K$.
The following lemma proves that event $E_k$ occurs with probability at most $1 - \nicefrac{1}{26 K}$ and therefore event $E$ indeed occurs with probability at most $(1 - \nicefrac{1}{26 K})^K \le e^{-1/26} \le \nicefrac{29}{30}$.

\begin{lemma}
    For every $k=1,\dots,K$ it holds that $\Pr [E_k] \le 1 - \nicefrac{1}{26 K}$.
\end{lemma}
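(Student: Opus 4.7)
The plan is a direct tail-probability computation. Since the learner follows $\pi_2$ in every episode, episode $k$ begins by paying cost $\cmin$ at $\sinit$ and then transitioning either to $\ssink$ (probability $p = 1 - \frac{1-\cmin}{10K}$) or to $s_1$ (probability $\frac{1-\cmin}{10K}$). In the former case the episodic cost is just $\cmin \le 2K$, so $E_k$ trivially holds; the entire contribution to the complement $\neg E_k$ must therefore come from the event of reaching $s_1$.

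Conditioned on reaching $s_1$, the dynamics are simple: at each step the learner pays $1$ and escapes to $\ssink$ with probability $\frac{1}{10K}$, independently of the action chosen. Hence the number $X$ of steps spent in $s_1$ is geometric with parameter $\frac{1}{10K}$, so $\Pr[X \ge n] = (1 - \frac{1}{10K})^{n-1}$. On this branch the total episodic cost equals $\cmin + X$ and therefore strictly exceeds $2K$ whenever $X \ge 2K$. Combining the two observations yields the key inequality
\[
    \Pr[\neg E_k]
    \ge
    \frac{1-\cmin}{10K}\Bigl(1 - \frac{1}{10K}\Bigr)^{2K-1}.
\]

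To finish I would lower-bound the right-hand side by $\frac{1}{26K}$. Using $\log(1-x) \ge -x - x^2$ on $x = \frac{1}{10K} \in (0, \tfrac{1}{10}]$ gives $(1 - \frac{1}{10K})^{2K-1} \ge e^{-1/5 - O(1/K)} \ge \frac{4}{5}$ for every $K \ge 1$ (checked directly at $K=1$ where the factor equals $0.9$, and at $K \to \infty$ where it tends to $e^{-1/5} \approx 0.819$). Combined with the regime that makes the example informative ($\cmin$ small, say $\cmin \le \tfrac{1}{2}$, which is precisely the range in which the $D/\cmin$ dependence is meaningful), this forces $(1-\cmin)(1 - \frac{1}{10K})^{2K-1} \ge \frac{10}{26}$ and therefore $\Pr[\neg E_k] \ge \frac{1}{26K}$, which is the claim.

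There is no genuine obstacle in this lemma; the whole argument is one step of conditioning followed by a geometric tail bound. The only delicate part is constant-chasing: the tail cutoff $2K$, the factor $\frac{1}{26}$, and the bound on $(1 - \frac{1}{10K})^{2K-1}$ have to be chosen consistently so that the final inequality survives simultaneously for all $K \ge 1$ and for the relevant range of $\cmin$. Since $(1 - \frac{1}{10K})^{2K-1}$ is easily controlled by a universal constant on this range, the verification is completely elementary.
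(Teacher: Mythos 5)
Your proof is correct and follows essentially the same route as the paper's: condition on the transition to $s_1$ and use the geometric distribution of the escape time from $s_1$. The only difference is that you lower-bound the single closed-form tail term $\Pr[X \ge 2K]$ of the complement $\neg E_k$ rather than summing the geometric series for $\Pr[E_k]$ itself; this has the minor advantage that (e.g.\ via Bernoulli's inequality, $(1-\frac{1}{10K})^{2K-1} \ge 1 - \frac{2K-1}{10K} > \frac{4}{5}$) your bound holds for every $K \ge 1$, whereas the paper's chain needs $K$ large enough for $(1-\frac{1}{10K})^{2K+1} \ge \frac{4}{5}$, and both arguments use $\cmin \le \frac{1}{2}$.
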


\begin{proof}
    Recall that $E_k$ is the event that the actual cost of the learner in episode $k$ is bounded by $2K$.
    The probability of that is the probability to transition to the goal from $\sinit$ or to transition to $s_1$ and stay there for at most $2K$ steps.
    Thus,
    \begin{align*}
        \Pr [E_k]
        & \le
        p + (1-p) \sum_{i=1}^{2K} (1 - \frac{1}{10K})^i \cdot \frac{1}{10 K}
        \\
        & =
        1 - \frac{1 - \cmin}{10K} + \frac{1 - \cmin}{100K^2} \sum_{i=1}^{2K} (1 - \frac{1}{10K})^i
        \\
        & \le
        1 - \frac{1 - \cmin}{10K} + \frac{1 - \cmin}{100K^2} \cdot \frac{1 - (1 - \frac{1}{10K})^{2K+1}}{\nicefrac{1}{10K}}
        \\
        & =
        1 - \frac{1 - \cmin}{10K} + \frac{1 - \cmin}{10K} \cdot \bigl( 1 - (1 - \frac{1}{10K})^{2K+1} \bigr)
        \\
        & \le
        1 - \frac{1 - \cmin}{10 K} + \frac{1 - \cmin}{10 K} \cdot \frac{1}{5}
        \le
        1 - \frac{1 - \cmin}{13 K}
        \le
        1 - \frac{1}{26 K},
    \end{align*}
    where the third inequality holds for large enough $K$ since $(1 - \frac{1}{10K})^{2K+1} \rightarrow e^{-1/5}$, and the last inequality holds for $\cmin \le 1/2$.
\end{proof}

\newpage

\section{Implementation details for SSP-O-REPS}
\label{sec:imp-det-1}

\subsection{Computing \texorpdfstring{$q_k$}{}}

Before describing the algorithm, some more definitions are in order.
First, define $\KL{q}{q'}$ as the unnormalized Kullback–Leibler divergence between two occupancy measures $q$ and $q'$:
\[
    \KL{q}{q'} 
    = 
    \sum_{s \in S} \sum_{a \in A} q(s,a) \log \frac{q(s,a)}{q'(s,a)} + q'(s,a) - q(s,a).
\]
Furthermore, let $R(q)$ define the unnormalized negative entropy of the occupancy measure $q$:
\[
    R(q) 
    = 
    \sum_{s \in S} \sum_{a \in A} q(s,a) \log q(s,a) - q(s,a). 
\]
SSP-O-REPS chooses its occupancy measures as follows:
\begin{align*}
    q_1
    & =
    q^{\pi_1} 
    = 
    \argmin_{q \in \boundedocset{D/\cmin}} R(q)
    \\
    q_{k+1}
    & =
    q^{\pi_{k+1}}
    = 
    \argmin_{q \in \boundedocset{D/\cmin}} \eta \langle q , c_k \rangle + \KL{q}{q_k}.
\end{align*}

As shown by \cite{zimin2013online}, each of these steps can be split into an unconstrained minimization step, and a projection step.
Thus, $q_1$ can be computed as follows:
\begin{align*}
    q'_1
    & =
    \argmin_{q} R(q)
    \\
    q_1
    & =
    \argmin_{q \in \boundedocset{D/\cmin}} \KL{q}{q'_1},
\end{align*}
where $q'_1$ has a closed-from solution $q'_1(s,a) = 1$ for every $s \in S$ and $a \in A$.
Similarly, $q_{k+1}$ is computed as follows for every $k=1,\dots,K-1$:
\begin{align*}
    q'_{k+1}
    & =
    \argmin_{q} \eta \langle q , c_k \rangle + \KL{q}{q_k}
    \\
    q_{k+1}
    & =
    \argmin_{q \in \boundedocset{D/\cmin}} \KL{q}{q'_{k+1}},
\end{align*}
where again $q'_{k+1}$ has a closed-from solution $q'_{k+1}(s,a) = q_k(s,a) e^{- \eta c_k(s,a)}$ for every $s \in S$ and $a \in A$.

Therefore, we just need to show that the projection step can be computed efficiently (the implementation follows \cite{zimin2013online}).
We start by formulating the projection step as a constrained convex optimization problem:
\begin{align*}
    \min_q
    &
    \quad \KL{q}{q'_{k+1}}
    &
    \\
    s.t.
    &
    \quad \sum_{a \in A} q(s,a) - \sum_{s' \in S} \sum_{a' \in A} P(s \mid s',a') q(s',a') = \indevent{s = \sinit}
    &
    \forall s \in S
    \\
    &
    \quad \sum_{s \in S} \sum_{a \in A} q(s,a) \le \frac{D}{\cmin}
    &
    \\
    &
    \quad q(s,a) \ge 0
    &
    \forall (s,a) \in S \times A
\end{align*}

To solve the problem, consider the Lagrangian:
\begin{align*}
    \mathcal{L}(q,\lambda,v)
    & =
    \KL{q}{q'_{k+1}} + \lambda \left( \sum_{s \in S} \sum_{a \in A} q(s,a) - \frac{D}{\cmin} \right)
    \\
    & \qquad + 
    \sum_{s \in S} v(s) \left( \sum_{s' \in S} \sum_{a' \in A} P(s \mid s',a') q(s',a') + \indevent{s = \sinit} - \sum_{a \in A} q(s,a) \right)
    \\
    & =
    \KL{q}{q'_{k+1}} + \sum_{s \in S} \sum_{a \in A} q(s,a) \left( \lambda + \sum_{s' \in S} P(s' \mid s,a) v(s') - v(s) \right)
    \\
    & \qquad +
    v(\sinit) - \lambda \frac{D}{\cmin}
\end{align*}
where $\lambda$ and $\{ v(s) \}_{s \in S}$ are Lagrange multipliers.
Differentiating the Lagrangian with respect to any $q(s, a)$, we get
\[
    \frac{\partial \mathcal{L} (q,\lambda,v)}{\partial q(s,a)}
    =
    \log q(s,a) - \log q'_{k+1}(s,a) + \lambda + \sum_{s' \in S} P(s' \mid s,a) v(s') - v(s).
\]

Hence, setting the gradient to zero, we obtain the formula for $q_{k+1}(s, a)$:
\begin{align}
    \nonumber
    q_{k+1}(s,a)
    & =
    q'_{k+1}(s,a) e^{-\lambda - \sum_{s' \in S} P(s' \mid s,a) v(s') + v(s)}
    \\
    \nonumber
    & =
    q_k(s,a) e^{-\lambda -\eta c_k(s,a) - \sum_{s' \in S} P(s' \mid s,a) v(s') + v(s)}
    \\
    \label{eq:q-k-update}
    & =
    q_k(s,a) e^{-\lambda + B_k^v(s,a)},
\end{align}
where the second equality follows from the formula of $q'_{k+1}(s,a)$, and setting $c_0(s,a) = 0$ and $q_0(s,a) = 1$ for every $s \in S$ and $a \in A$.
The last equality follows by defining $B_k^v(s,a) = v(s) - \eta c_k(s,a) - \sum_{s' \in S} P(s' \mid s,a) v(s')$.

We now need to compute the value of $\lambda$ and $v$ at the optimum.
To that end, we write the dual problem $\mathcal{D}(\lambda, v) = \min_q \mathcal{L}(q,\lambda,v)$ by substituting $q_{k+1}$ back into $\mathcal{L}$:
\begin{align*}
    \mathcal{D}(\lambda, v)
    & =
    \sum_{s \in S} \sum_{a \in A} q'_{k+1}(s,a) - \sum_{s \in S} \sum_{a \in A} q_{k+1}(s,a) + v(\sinit) - \lambda \frac{D}{\cmin}
    \\
    & = 
    - \sum_{s \in S} \sum_{a \in A} q_k(s,a) e^{-\lambda +B_k^v(s,a)}
    +
    v(\sinit) - \lambda \frac{D}{\cmin} + \sum_{s \in S} \sum_{a \in A} q'_{k+1}(s,a).
\end{align*}

Now we obtain $\lambda$ and $v$ by maximizing the dual.
Equivalently, we can minimize the negation of the dual (and ignore the term $\sum_{s \in S} \sum_{a \in A} q'_{k+1}(s,a)$), that is:
\begin{align*}
    \lambda_{k+1}, v_{k+1} 
    = 
    \argmin_{\lambda \ge 0,v} \sum_{s \in S} \sum_{a \in A} q_k(s,a) e^{-\lambda +B_k^v(s,a)} + \lambda \frac{D}{\cmin} - v(\sinit).
\end{align*}
This is  a convex optimization problem with only non-negativity constraints (and no constraints about the relations between the variables), which can be solved efficiently using iterative methods like gradient descent.

\subsection{Computing the SSP-diameter and the fast policy}
\label{sec:comp-D}

The fast policy $\fastpolicy$ is a deterministic stationary policy that minimizes the time to the goal state from all states simultaneously (its existence is similar to regular MDPs, for a detailed proof see \cite{bertsekas1991analysis}).
Thus, $\fastpolicy$ is the optimal policy w.r.t the constant cost function $c(s,a) = 1$ for every $s \in S$ and $a \in A$.

Finding the optimal policy of an SSP instance is known as the planning problem.
By \cite{bertsekas1991analysis}, this problem can be solved efficiently using Linear Programming (LP), Value Iteration (VI) or Policy Iteration (PI).

The SSP-diameter $D$ is an upper bound on the expected time it takes to reach the goal from some state, and therefore $D = \max_{s \in S} \policytime{\fastpolicy}(s)$.
Thus, in order to compute $\fastpolicy$ and $D$ we need to perform the following steps:
\begin{enumerate}
    \item Compute the optimal policy $\fastpolicy$ w.r.t the constant cost function $c(s,a)=1$, using LP or VI.
    
    \item Compute $\policytime{\fastpolicy}(s)$ for every $s \in S$ by solving the linear Bellman equations:
    \[
        \policytime{\fastpolicy}(s)
        =
        1 + \sum_{a \in A} \sum_{s' \in S} \fastpolicy(a \mid s) P(s' \mid s,a) \policytime{\fastpolicy}(s')
        \quad \forall s \in S.
    \]
    
    \item Set $D = \max_{s \in S} \policytime{\fastpolicy}(s)$.
\end{enumerate}

\newpage
\section{Pseudo-code for SSP-O-REPS}
\label{sec:code-1}

\begin{algorithm}
    \caption{\sc SSP-O-REPS}
    \label{alg:alg-1}
    \begin{algorithmic} 
        
        \STATE {\bfseries Input:} state space $S$, action space $A$, transition function $P$, minimal cost $\cmin$, optimization parameter $\eta$.
        
         \STATE {\bfseries Initialization:}
         
         \STATE Compute the SSP-diameter $D$ (see \cref{sec:comp-D}).
         \STATE Set $q_0(s,a) = 1$ and $c_0(s,a) = 0$ for every $(s,a) \in S \times A$.

        \smallskip
         
         \FOR{$k=1,2,\ldots$}
         
         \STATE Compute $\lambda_k,v_k$ as follows (using, e.g., gradient descent):
         \[
            \lambda_k, v_k 
            = 
            \argmin_{\lambda \ge 0, v} \sum_{s \in S} \sum_{a \in A} q_{k-1}(s,a) e^{-\lambda +B_{k-1}^v(s,a)} + \lambda \frac{D}{\cmin} - v(\sinit),
         \]
         where $B_k^v(s,a) = v(s) - \eta c_k(s,a) - \sum_{s' \in S} P(s' \mid s,a) v(s')$.
         
         \STATE Compute $q_k$ as follows for every $(s,a) \in S \times A$:
         \[
            q_k(s,a)
            =
           q_{k-1}(s,a) e^{-\lambda_k +B_{k-1}^{v_k}(s,a)}. 
         \]
         
         \STATE Compute $\pi_k$ as follows for every $(s,a) \in S \times A$:
         \[
            \pi_k(a \mid s)
            =
            \frac{q_k(s,a)}{\sum_{a' \in A} q_k(s,a') }.
         \]
         
         \smallskip
         
         \STATE Set $s_1^k \gets \sinit$, $i \gets 1$.
         
        \WHILE{$s_i^k \neq \ssink$}
        
        \STATE Play action according to $\pi_k$, i.e., $a_i^k \sim \pi_k(\cdot \mid s_i^k)$.
        
        \STATE Observe next state $s_{i+1}^k \sim P(\cdot \mid s_i^k,a_i^k)$, $i \gets i+1$.
        \ENDWHILE
        
        \smallskip
        
        \STATE Set $I^k \gets i-1$.
        \STATE Observe cost function $c_k$ and suffer cost $\sum_{j=1}^{I_k}c_k(s_j^k,a_j^k)$.
        \ENDFOR
    \end{algorithmic}
\end{algorithm}

\newpage
\section{Proofs for \texorpdfstring{\cref{sec:omd-ssp}}{}}
\label{sec:omd-ssp-proofs}

\begin{lemma}
    \label{lem:best-in-bounded-oc-set}
    It holds that $q^{\pi^\star} \in \boundedocset{\frac{D}{\cmin}}$.
\end{lemma}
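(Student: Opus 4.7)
The plan is to directly verify the two defining conditions of $\boundedocset{D/\cmin}$: namely that $q^{\pi^\star}$ is a valid (finite) occupancy measure, and that the expected hitting time of $\ssink$ from $\sinit$ under $\pi^\star$ is at most $D/\cmin$. The first point is immediate since $\pi^\star$ was defined as the minimizer over the set $\proppolicyset$ of proper policies, so $\policytime{\pi^\star}(s)$ is finite for every $s$ and hence $q^{\pi^\star}$ is a finite measure. The meat of the proof is the time bound, which relies on a sandwich inequality using the fast policy on one side and $\cmin$ on the other.

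For the upper sandwich, I would invoke optimality of $\pi^\star$ in hindsight against the fast policy $\fastpolicy$:
\[
    \sum_{k=1}^K \ctg{\pi^\star}_k(\sinit)
    \le
    \sum_{k=1}^K \ctg{\fastpolicy}_k(\sinit)
    \le
    D K,
\]
where the last inequality uses that each cost is bounded by $1$ and $\policytime{\fastpolicy}(\sinit) \le D$, so by linearity $\ctg{\fastpolicy}_k(\sinit) = \langle q^{\fastpolicy}, c_k \rangle \le \sum_{s,a} q^{\fastpolicy}(s,a) = \policytime{\fastpolicy}(\sinit) \le D$. For the lower sandwich, I would use Assumption~\ref{ass:c-min}: every $c_k(s,a) \ge \cmin$, so
\[
    \ctg{\pi^\star}_k(\sinit)
    =
    \langle q^{\pi^\star}, c_k \rangle
    \ge
    \cmin \sum_{s,a} q^{\pi^\star}(s,a)
    =
    \cmin \cdot \policytime{\pi^\star}(\sinit),
\]
summing over $k$ yields $K \cmin \policytime{\pi^\star}(\sinit) \le \sum_{k=1}^K \ctg{\pi^\star}_k(\sinit)$.

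Combining the two bounds and dividing by $K \cmin$ gives $\policytime{\pi^\star}(\sinit) \le D/\cmin$, which is exactly the constraint~\eqref{eq:const-time} defining $\boundedocset{D/\cmin}$, together with the flow constraints that are automatically satisfied by any induced occupancy measure. There is no real obstacle here; the only subtlety is making sure that $\fastpolicy$ is available as a feasible competitor (hence the upper bound $DK$), and that $\pi^\star$ is proper so the identity $\sum_{s,a} q^{\pi^\star}(s,a) = \policytime{\pi^\star}(\sinit)$ is meaningful — both points are guaranteed by the preliminaries in Section~2.
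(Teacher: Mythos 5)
Your proposal is correct and follows essentially the same route as the paper: compare $\pi^\star$ against the fast policy $\fastpolicy$ to get $\sum_k \ctg{\pi^\star}_k(\sinit) \le DK$ via $c_k \le 1$ and $\policytime{\fastpolicy}(\sinit) \le D$, then lower-bound $\ctg{\pi^\star}_k(\sinit) \ge \cmin \policytime{\pi^\star}(\sinit)$ via Assumption~\ref{ass:c-min} and combine. The extra remarks on properness and the flow constraints are fine but not needed beyond what the paper already establishes.
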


\begin{proof}
    Denote by $\fastpolicy$ the fast policy, i.e., $\fastpolicy = \argmin_{\pi \in \proppolicyset} \policytime{\pi}(\sinit)$.
    By definition of the SSP-diameter we have that $\policytime{\fastpolicy}(\sinit) \le D$.
    Now, recall that $\pi^\star$ is the best policy in hindsight and therefore
    \begin{align}
        \label{eq:opt-better-than-fast}
        \frac{1}{K} \sum_{k=1}^K \ctg{\pi^\star}_k(\sinit)
        \le
        \frac{1}{K} \sum_{k=1}^K \ctg{\fastpolicy}_k(\sinit)
        \le
        \frac{1}{K} \sum_{k=1}^K \policytime{\fastpolicy}(\sinit)
        \le
        D,
    \end{align}
    where the second inequality follows because $c_k(s,a) \le 1$.
    
    However, we also have that $c_k(s,a) \ge \cmin$ and therefore $\ctg{\pi^\star}_k(\sinit) \ge \cmin \policytime{\pi^\star}(\sinit)$.
    Thus, combining with \cref{eq:opt-better-than-fast}, we obtain
    \[
        \cmin \policytime{\pi^\star}(\sinit)
        \le
        \frac{1}{K} \sum_{k=1}^K \ctg{\pi^\star}_k(\sinit)
        \le D.
    \]
    This finishes the proof since $\policytime{\pi^\star}(\sinit) \le \frac{D}{\cmin}$.
\end{proof}

\subsection{Proof of \texorpdfstring{\cref{thm:exp-reg-full-info}}{}}

\begin{lemma}
    \label{lem:bound-R-q-pos}
    Let $\tau \ge 1$.
    For every $q \in \boundedocset{\tau}$ it holds that $R(q) \le \tau \log \tau$.
\end{lemma}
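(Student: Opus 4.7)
The plan is to parameterize any $q \in \boundedocset{\tau}$ by its total mass $N = \sum_{s,a} q(s,a)$ and the induced probability distribution $p(s,a) = q(s,a)/N$, then reduce the bound on $R(q)$ to the (negative) entropy inequality $\sum_{s,a} p(s,a) \log p(s,a) \le 0$ and the monotonicity of $x \mapsto x \log x$ on $[1,\infty)$.

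\textbf{Main steps.} First I would observe that by the linear constraint $\sum_{s \in S} \sum_{a \in A} q(s,a) \le \tau$ built into $\boundedocset{\tau}$, we have $N \le \tau$. If $N = 0$ then $q \equiv 0$ and $R(q) = 0 \le \tau \log \tau$, since $\tau \ge 1$; so assume $N > 0$. Define $p(s,a) = q(s,a)/N$, which is a probability distribution over $S \times A$. Then a direct expansion gives
\begin{align*}
R(q)
&= \sum_{s,a} q(s,a) \log q(s,a) - q(s,a) \\
&= \sum_{s,a} N p(s,a) \bigl( \log N + \log p(s,a) \bigr) - N p(s,a) \\
&= N \log N - N + N \sum_{s,a} p(s,a) \log p(s,a).
\end{align*}

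Next I would invoke the fact that the Shannon entropy $-\sum_{s,a} p(s,a) \log p(s,a)$ of a probability distribution is non-negative, so $\sum_{s,a} p(s,a) \log p(s,a) \le 0$. Combining this with the previous display yields $R(q) \le N \log N - N \le N \log N$.

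Finally, I would finish by bounding $N \log N$ by $\tau \log \tau$ using $N \le \tau$ and $\tau \ge 1$. This splits into two easy cases: if $N \le 1$ then $N \log N \le 0 \le \tau \log \tau$, whereas if $1 < N \le \tau$ then $N \log N \le \tau \log \tau$ because $x \mapsto x \log x$ is non-decreasing on $[1,\infty)$. The main ``obstacle'' is really only bookkeeping — handling the $N < 1$ regime where $N \log N$ is negative — but the assumption $\tau \ge 1$ makes the target $\tau \log \tau$ non-negative and absorbs that case trivially. No convex-analytic vertex argument over the polytope $\boundedocset{\tau}$ is required; the entropy identity does all the work.
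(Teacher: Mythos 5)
Your proof is correct and follows essentially the same route as the paper's: both factor the logarithm as $\log q(s,a) = \log C + \log(q(s,a)/C)$ for a normalizing constant $C$ and observe that the resulting relative-entropy-like term is non-positive, the only difference being that you normalize by the total mass $N$ (invoking Shannon entropy and then monotonicity of $x \log x$ on $[1,\infty)$) while the paper normalizes directly by $\tau$ and bounds each term $q(s,a)\log(q(s,a)/\tau) \le 0$ pointwise. No gap; your version is, if anything, marginally tighter since you retain the $-N$ term before discarding it.
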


\begin{proof}
    \begin{align*}
        R(q)
        & =
        \sum_{s \in S} \sum_{a \in A} q(s,a) \log q(s,a) - \sum_{s \in S} \sum_{a \in A} q(s,a)
        \\
        & \le
        \sum_{s \in S} \sum_{a \in A} q(s,a) \log q(s,a)
        \\
        & =
        \sum_{s \in S} \sum_{a \in A} q(s,a) \log \frac{q(s,a)}{\tau} + \sum_{s \in S} \sum_{a \in A} q(s,a) \log \tau
        \\
        & \le
        \sum_{s \in S} \sum_{a \in A} q(s,a) \log \tau
        \le
        \tau \log \tau,
    \end{align*}
    where the first two inequalities follow from non-positivity, and the last one from the definition of $\boundedocset{\tau}$.
\end{proof}

\begin{lemma}
    \label{lem:bound-R-q-neg}
    Let $\tau \ge 1$.
    For every $q \in \boundedocset{\tau}$ it holds that $-R(q) \le \tau (1 + \log (|S| |A|))$.
\end{lemma}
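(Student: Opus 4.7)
The plan is to rewrite $-R(q)$ in a form that separates the ``mass'' of $q$ from its entropy-like contribution, and then apply two simple facts: (i) the total mass $Q \eqdef \sum_{s,a} q(s,a)$ equals the expected time $\policytime{\pi^q}(\sinit)$ and hence satisfies $1 \le Q \le \tau$, and (ii) the Shannon entropy of a probability distribution on $|S| |A|$ atoms is bounded by $\log(|S||A|)$.

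Concretely, I would first observe that by definition of $R$,
\[
    -R(q) \;=\; \sum_{s,a} q(s,a) \;-\; \sum_{s,a} q(s,a) \log q(s,a) \;=\; Q \;-\; \sum_{s,a} q(s,a) \log q(s,a).
\]
Assuming $Q > 0$ (the $Q = 0$ case is trivial since then $q \equiv 0$ and $R(q) = 0$), set $p(s,a) = q(s,a)/Q$, which is a probability distribution on $S \times A$. Then a direct substitution gives
\[
    - \sum_{s,a} q(s,a) \log q(s,a) \;=\; -Q \log Q \;-\; Q \sum_{s,a} p(s,a) \log p(s,a) \;\le\; -Q \log Q + Q \log(|S||A|),
\]
where the inequality is the standard entropy bound. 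Combining,
\[
    -R(q) \;\le\; Q \bigl( 1 - \log Q + \log(|S||A|) \bigr).
\]

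Next I would use that any occupancy measure must have $Q \ge 1$: since $q$ corresponds to a (proper) policy started at $\sinit$, we have $Q = \policytime{\pi^q}(\sinit) \ge 1$ (the initial state is visited at least once, or equivalently the expected hitting time of $\ssink$ is at least $1$). This immediately gives $-\log Q \le 0$, so
\[
    -R(q) \;\le\; Q \bigl( 1 + \log(|S||A|) \bigr) \;\le\; \tau \bigl( 1 + \log(|S||A|) \bigr),
\]
using $Q \le \tau$ from the defining constraint of $\boundedocset{\tau}$.

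I do not foresee any real obstacle beyond being careful about the case $Q = 0$ (which cannot happen for $q \in \boundedocset{\tau}$ since the initial state is always visited once), and about ensuring the entropy inequality is applied with the normalized distribution $p$ rather than with $q$ itself — a naive application of $-x\log x \le \log(|S||A|) \cdot x$ would give a weaker bound that does not exploit the $Q\log Q$ cancellation. The proof is only a few lines once this normalization trick is in place.
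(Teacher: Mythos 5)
Your proof is correct and follows essentially the same approach as the paper: both decompose $-R(q)$ by normalizing $q$ so that Shannon's entropy bound $-\sum_i p_i \log p_i \le \log(|S||A|)$ applies, and both then use the constraint $\sum_{s,a} q(s,a) \le \tau$ from the definition of $\boundedocset{\tau}$. The only cosmetic difference is that the paper normalizes by $\tau$ (treating $q/\tau$ as a sub-probability distribution) whereas you normalize by the total mass $Q$ and additionally invoke $Q \ge 1$ to discard the $-Q\log Q$ term; both routes are valid and of the same length.
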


\begin{proof}
    Similarly to \cref{lem:bound-R-q-pos} we have that
    \begin{align*}
        - R(q)
        & =
        - \sum_{s \in S} \sum_{a \in A} q(s,a) \log \frac{q(s,a)}{\tau}
        +
        \sum_{s \in S} \sum_{a \in A} q(s,a)
        -
        \sum_{s \in S} \sum_{a \in A} q(s,a) \log \tau
        \\
        & \le
        - \tau \sum_{s \in S} \sum_{a \in A} \frac{q(s,a)}{\tau} \log \frac{q(s,a)}{\tau} + \tau
        \le
        \tau \log (|S| |A|) + \tau,
    \end{align*}
    where the first inequality follows because the last term is non-positive and from the definition of $\boundedocset{\tau}$, and the last inequality follows from properties of Shannon's entropy.
\end{proof}

\begin{proof}[Proof of \cref{thm:exp-reg-full-info}]
    We start with a fundamental inequality of OMD (see, e.g., \cite{zimin2013online}) that holds for every $q \in \boundedocset{D/\cmin}$ (by \cref{lem:best-in-bounded-oc-set} it also holds for $q^{\pi^\star}$),
    \begin{align}
        \label{eq:standard-omd}
        \sum_{k=1}^K \langle q_k - q^{\pi^\star} , c_k \rangle
        \le
        \sum_{k=1}^K \langle q_k - q'_{k+1} , c_k \rangle
        +
        \frac{\KL{q^{\pi^\star}}{q_1}}{\eta}.
    \end{align}
    For the first term we use the exact form of $q'_{k+1}$ and the inequality $e^x \ge 1+x$ to obtain
    \[
        q'_{k+1}(s,a)
        =
        q_k(s,a) e^{- \eta c_k(s,a)}
        \ge
        q_k(s,a) - \eta q_k(s,a) c_k(s,a).
    \]
    We substitute this back and obtain
    \begin{align}
        \nonumber
        \sum_{k=1}^K \langle q_k - q'_{k+1} , c_k \rangle
        & \le
        \eta \sum_{k=1}^K \sum_{s \in S} \sum_{a \in A} q_k(s,a) c_k(s,a)^2
        \le
        \eta \sum_{k=1}^K \sum_{s \in S} \sum_{a \in A} q_k(s,a)
        \\
        \label{eq:omd-term-1}
        & =
        \eta \sum_{k=1}^K \policytime{\pi_k}(\sinit)
        \le
        \eta K \frac{D}{\cmin},
    \end{align}
    where the last inequality follows from the definition of $\boundedocset{D/\cmin}$.
    
    Next we use \cref{lem:bound-R-q-pos,lem:bound-R-q-neg} to bound the second term of \cref{eq:standard-omd}.
    Recall that $q_1$ minimizes $R$ in $\boundedocset{D/\cmin}$, this implies that $\langle \nabla R(q_1), q^{\pi^\star} - q_1 \rangle \ge 0$ because otherwise we could decrease $R$ by taking small step in the direction $q^{\pi^\star} - q_1$.
    Thus we obtain
    \begin{align}
        \nonumber
        \KL{q^{\pi^\star}}{q_1}
        & =
        R(q^{\pi^\star}) - R(q_1) - \langle \nabla R(q_1), q^{\pi^\star} - q_1 \rangle
        \le
        R(q^{\pi^\star}) - R(q_1)
        \\
        \label{eq:omd-term-2}
        & \le 
        \frac{D}{\cmin} \log \frac{D}{\cmin}
        +
        \frac{D}{\cmin} (1 + \log (|S| |A|))
        \le
        \frac{3D}{\cmin} \log \frac{D |S| |A|}{\cmin}.
    \end{align}
    
    By substituting \cref{eq:omd-term-1,eq:omd-term-2} into \cref{eq:standard-omd} and choosing $\eta = \sqrt{\frac{3 \log \frac{D |S| |A|}{\cmin}}{K}}$, we obtain,
    \begin{align}
        \label{eq:reg-olo-bound}
        \sum_{k=1}^K \langle q_k - q^{\pi^\star} , c_k \rangle
        \le
        \eta K \frac{D}{\cmin} + \frac{3D}{\cmin  \eta} \log \frac{D |S| |A|}{\cmin}
        \le
        \frac{2D}{\cmin} \sqrt{3 K \log \frac{D |S| |A|}{\cmin}}.
    \end{align}
    This finishes the proof since
    \[
        \bbE [ \regret ]
        =
        \bbE \left[ \sum_{k=1}^K \langle q_k - q^{\pi^\star} , c_k \rangle \right].
    \]
\end{proof}

\subsection{SSP-O-REPS picks proper policies}

For every policy $\pi_k$ chosen by SSP-O-REPS it holds that $\policytime{\pi_k}(\sinit) \le D/\cmin$.
If there exists some state $s \in S$ such that $\policytime{\pi_k}(s) = \infty$, then the probability to reach it must be zero, since otherwise $\policytime{\pi_k}(\sinit) = \infty$.
Thus there exists $B > 0$ such that if $s$ is reachable from $\sinit$ using $\pi_k$ then $\policytime{\pi_k}(s) \le B$.
By \cref{lem:high-prob-time}, this implies that the goal state will be reached in every episode with probability 1.
Thus, all policies chosen by SSP-O-REPS are proper.

\newpage
\section{Pseudo-code for SSP-O-REPS2}
\label{sec:code-2}

\begin{algorithm}
    \caption{\sc SSP-O-REPS2}
    \label{alg:alg-2}
    \begin{algorithmic} 
        
        \STATE {\bfseries Input:} state space $S$, action space $A$, transition function $P$, minimal cost $\cmin$, optimization parameter $\eta$.
        
         \STATE {\bfseries Initialization:}
         
         \STATE Compute the SSP-diameter $D$ and the fast policy $\fastpolicy$ (see \cref{sec:comp-D}).
         \STATE Set $q_0(s,a) = 1$ and $c_0(s,a) = 0$ for every $(s,a) \in S \times A$.

         \smallskip
         
         \FOR{$k=1,2,\ldots$}
         
         \STATE Compute $\lambda_k,v_k$ as follows (using, e.g., gradient descent):
         \[
            \lambda_k, v_k 
            = 
            \argmin_{\lambda \ge 0, v} \sum_{s \in S} \sum_{a \in A} q_{k-1}(s,a) e^{-\lambda +B_{k-1}^v(s,a)} + \lambda \frac{D}{\cmin} - v(\sinit),
         \]
         where $B_k^v(s,a) = v(s) - \eta c_k(s,a) - \sum_{s' \in S} P(s' \mid s,a) v(s')$.

         \STATE Compute $q_k$ as follows for every $(s,a) \in S \times A$:
         \[
            q_k(s,a)
            =
           q_{k-1}(s,a) e^{-\lambda_k +B_{k-1}^{v_k}(s,a)}. 
         \]

         \STATE Compute $\pi_k$ as follows for every $(s,a) \in S \times A$:
         \[
            \pi_k(a \mid s)
            =
            \frac{q_k(s,a)}{\sum_{a' \in A} q_k(s,a') }.
         \]
         
         \STATE Set $\policytime{\pi_k}(s) \gets \frac{D}{\cmin}$ for every $s \in S$ such that $q^{\pi_k}(s) = \sum_{a \in A} q^{\pi_k}(s,a) = 0$.

         \STATE Compute $\policytime{\pi_k}$ by solving the following linear equations (the Bellman equations):
         \[
            \policytime{\pi_k} (s) 
            = 
            1 + \sum_{a \in A} \sum_{s' \in S} \pi_k(a \mid s) P(s' \mid s,a) \policytime{\pi_k}(s')
            \quad \forall s \in \{ s \in S : \sum_{a \in A} q^{\pi_k}(s,a) >0 \}.
         \]
         
         \smallskip
         
         \STATE Set $s_1^k \gets \sinit$, $i \gets 1$.

        \WHILE{$s_i^k \neq \ssink$ and $\policytime{\pi_k}(s_i^k) < \frac{D}{\cmin}$}
        
        \STATE Play action according to $\pi_k$, i.e., $a_i^k \sim \pi_k(\cdot \mid s_i^k)$.
        
        \STATE Observe next state $s_{i+1}^k \sim P(\cdot \mid s_i^k,a_i^k)$, $i \gets i+1$.

        \ENDWHILE
        
        \smallskip

        \WHILE{$s_i^k \neq \ssink$}
        
        \STATE Play action according to $\fastpolicy$, i.e., $a_i^k \sim \fastpolicy(\cdot \mid s_i^k)$.
        
        \STATE Observe next state $s_{i+1}^k \sim P(\cdot \mid s_i^k,a_i^k)$, $i \gets i+1$.

        \ENDWHILE
        
        \smallskip
        
        \STATE Set $\Ik \gets i-1$.
        
        \STATE Observe cost function $c_k$ and suffer cost $\sum_{j=1}^{I_k}c_k(s_j^k,a_j^k)$.
        \ENDFOR
    \end{algorithmic}
\end{algorithm}

\newpage
\section{Proofs for \texorpdfstring{\cref{sec:high-prob-known-P}}{}}
\label{sec:high-prob-proofs}

\subsection{Proof of \texorpdfstring{\cref{lem:dev-from-exp-cost}}{}}

\begin{lemma}
    \label{lem:high-prob-time}
    Let $\sigma$ be a strategy such that the expected time of reaching the goal state when starting at state $s$ is at most $\tau$ for every $s \in S$.
   Then, the probability that $\sigma$ takes more than $m$ steps to reach the goal state is at most $2e^{-\frac{m}{4\tau}}$.
\end{lemma}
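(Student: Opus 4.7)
The plan is to prove a geometric tail bound by iterating Markov's inequality. The key observation is that the hypothesis — expected hitting time at most $\tau$ from \emph{every} state — is preserved under conditioning on the current state, so once the agent has not reached $\ssink$ after $2\tau$ steps, the exact same bound applies to the remaining trajectory from whichever state it happens to be in.

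First, by Markov's inequality, for any state $s \in S$, the probability that $\sigma$ starting from $s$ takes more than $2\tau$ steps to reach $\ssink$ is at most $\policytime{\sigma}(s)/(2\tau) \le 1/2$. Let $T$ denote the hitting time of $\ssink$ under $\sigma$ starting from an arbitrary state. Applying this bound and then the strong Markov property iteratively yields, for every positive integer $k$,
\begin{align*}
    \Pr[T > 2 \tau k]
    & \le
    \Pr[T > 2\tau (k-1)] \cdot \max_{s \in S} \Pr[T_s > 2\tau]
    \\
    & \le
    \tfrac{1}{2} \Pr[T > 2\tau (k-1)]
    \le
    2^{-k},
\end{align*}
where $T_s$ denotes the hitting time starting from $s$.

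Second, to convert this into the stated bound for general $m$, I would set $k = \lfloor m/(2\tau) \rfloor$, so that $T > m$ implies $T > 2\tau k$. Then $\Pr[T > m] \le 2^{-k} \le 2 \cdot 2^{-m/(2\tau)} = 2 \exp(-m \ln 2 / (2\tau))$. Since $\ln 2 \ge 1/2$, this is at most $2 \exp(-m/(4\tau))$, which is exactly the claimed bound.

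I do not anticipate any serious obstacle here: the only subtle point is making sure the Markov-inequality argument can be iterated, and this is immediate from the assumption that the $\tau$-bound on the expected hitting time holds uniformly over \emph{all} starting states (not merely $\sinit$). The factor $1/4$ in the exponent and the leading constant $2$ come out of converting the base-$2$ geometric decay to base $e$ and paying a factor of $2$ for rounding $m/(2\tau)$ down to an integer.
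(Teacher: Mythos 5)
Your proof is correct and follows essentially the same route as the paper's: iterate Markov's inequality over blocks of $2\tau$ steps (using that the expected-hitting-time bound holds from every state) to get the $2^{-k}$ geometric tail, then round $m/(2\tau)$ down and convert base $2$ to base $e$ using $\ln 2 \ge 1/2$. Your write-up is slightly more explicit about the strong Markov property justifying the iteration, but the argument and constants are identical.
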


\begin{proof}
    By Markov inequality, the probability that $\sigma$ takes more than $2\tau$ steps before reaching the goal state is at most $1/2$.
    Iterating this argument, we get that the probability that $\sigma$ takes more than $2 k \tau$ steps before reaching the goal state is at most $2^{-k}$ for every integer $k \ge 0$.
    In general, for any $m \ge 0$, the probability that $\sigma$ takes more than $m$ steps before reaching the goal state is at most $2^{-\lfloor\frac{m}{2\tau} \rfloor} \le 2 \cdot 2^{-\frac{m}{2\tau}} \le 2 e^{-\frac{m}{4\tau}}$.
\end{proof}

\begin{proof}[Proof of \cref{lem:dev-from-exp-cost}]
    Define
    \[
        X_k 
        =  
        \sum_{i=1}^{\Ik} c_k(s^k_i,a^k_i)
        -
        \bbE \Bigl[ \sum_{i=1}^{\Ik} c_k(s^k_i,a^k_i) \mid P,\sigma_k,s_1^k=\sinit \Bigr].
    \]
    This is a martingale difference sequence, and in order to use \cref{thm:unbounded-azuma} we need to show that $\Pr[|X_k| > m] \le 2e^{-\frac{m}{4 \tau}}$ for every $k=1,2,\dots$ and $m \ge 0$.
    This follows immediately from \cref{lem:high-prob-time} since the total cost is bounded by the total time.
    
    By \cref{thm:unbounded-azuma}, $\sum_{k=1}^K X_k \le 44 \tau \sqrt{K \log^3 \frac{4K}{\delta}}$ with probability $1 - \delta$, which gives the Lemma's statement.
\end{proof}

\subsection{Proof of \texorpdfstring{\cref{thm:hp-reg-full-info}}{}}

\begin{lemma}
    \label{lem:switch-imp-exp-cost}
    For every $k=1,\dots,K$ it holds that
    \begin{align*}
        \bbE \Bigl[ \sum_{i=1}^{\Ik} c_k(s^k_i,a^k_i) \mid P,\sigma_k,s_1^k=\sinit \Bigr]
        \le
        \bbE \Bigl[ \sum_{i=1}^{\Ik} c_k(s^k_i,a^k_i) \mid P,\pi_k,s_1^k=\sinit \Bigr]
        =
        \ctg{\pi_k}_k(\sinit).
    \end{align*}
\end{lemma}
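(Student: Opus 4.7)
The plan is to couple the trajectories of $\sigma_k$ and of $\pi_k$ until the moment of the policy switch, and then use the strong Markov property to compare the expected tails. Let $\tau_k$ denote the (random) first time step at which a state $s$ with $\policytime{\pi_k}(s) \ge D/\cmin$ is reached during episode $k$, setting $\tau_k = \infty$ if no such state is ever visited. Under both strategies the learner plays exactly $\pi_k$ for all steps before $\tau_k$, so by coupling we may assume the sample paths agree up to time $\tau_k$. Hence the only difference between the two expected costs is what happens from $\tau_k$ onward, and on the event $\{\tau_k = \infty\}$ there is no difference at all.

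On the event $\{\tau_k < \infty\}$, let $S_k = s^k_{\tau_k}$ denote the bad state at which the switch occurs. By the strong Markov property, conditionally on $S_k = s$ the expected remaining cost of $\sigma_k$ equals $\ctg{\fastpolicy}_k(s)$, while the expected remaining cost of $\pi_k$ equals $\ctg{\pi_k}_k(s)$. Therefore it suffices to show that $\ctg{\fastpolicy}_k(s) \le \ctg{\pi_k}_k(s)$ for every state $s$ with $\policytime{\pi_k}(s) \ge D/\cmin$.

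For the upper bound on the fast policy's tail, use $c_k(s',a') \le 1$ together with the Bellman equation for the cost-to-go to obtain $\ctg{\fastpolicy}_k(s) \le \policytime{\fastpolicy}(s) \le D$, where the final inequality is the definition of the SSP-diameter. For the lower bound on $\pi_k$'s tail, use Assumption~\ref{ass:c-min}, i.e., $c_k(s',a') \ge \cmin$, to get $\ctg{\pi_k}_k(s) \ge \cmin \policytime{\pi_k}(s) \ge \cmin \cdot D/\cmin = D$. Combining these two bounds yields $\ctg{\fastpolicy}_k(s) \le D \le \ctg{\pi_k}_k(s)$, which gives the desired pointwise inequality.

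Taking expectation over $\tau_k$ and $S_k$ and adding back the common (pre-switch) portion of the cost proves the lemma. The main subtlety is the conceptual one of justifying the decomposition; the arithmetic itself is short and relies only on the two inequalities $\ctg{\pi}_k(s) \le \policytime{\pi}(s)$ and $\ctg{\pi}_k(s) \ge \cmin \policytime{\pi}(s)$, which hold for every proper policy under Assumption~\ref{ass:c-min}. The equality in the statement is just the definition of $\ctg{\pi_k}_k(\sinit)$ as the expected total cost of $\pi_k$ starting from $\sinit$.
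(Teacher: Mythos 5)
Your proposal is correct and follows essentially the same argument as the paper: both rest on the observation that $\sigma_k$ coincides with $\pi_k$ until a ``bad'' state $s$ is reached, and then compare the tails via the two inequalities $\ctg{\fastpolicy}_k(s) \le \policytime{\fastpolicy}(s) \le D$ and $\ctg{\pi_k}_k(s) \ge \cmin \policytime{\pi_k}(s) \ge D$. You merely spell out the coupling and strong-Markov decomposition that the paper leaves implicit.
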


\begin{proof}
    Until a state $s \in S$ with $\policytime{\pi_k}(s) \ge D/\cmin$ is reached, the strategy $\sigma_k$ is the same as the policy $\pi_k$.
    If such a state is reached then $\ctg{\pi_k}(s) \ge \cmin \policytime{\pi_k}(s) \ge \cmin \frac{D}{\cmin} = D$, where the first inequality is because all costs are bounded from below by $\cmin$.
    On the other hand, $\ctg{\fastpolicy}(s) \le \policytime{\fastpolicy}(s) \le D$, where the last inequality follows by the definition of the fast policy and the SSP-diameter.
    Therefore, $\ctg{\fastpolicy}(s) \le \ctg{\pi_k}(s)$.
\end{proof}

\begin{lemma}
    \label{lem:learner-strategy-bounded-time}
    For every $k=1,\dots,K$, the strategy $\sigma_k$ of the learner ensures that the expected time to the goal state from any initial state is at most $D/\cmin$.
\end{lemma}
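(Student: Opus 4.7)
The plan is to split the state space into a set of \emph{bad} states $\mathcal{B} = \{s \in S : \policytime{\pi_k}(s) \ge D/\cmin\}$ and its complement, and argue the bound on the two cases separately. Recall that SSP-O-REPS2 sets $\policytime{\pi_k}(s) = D/\cmin$ on any state unreachable from $\sinit$ under $\pi_k$, so $\mathcal{B}$ includes all such states, and on states outside $\mathcal{B}$ the quantity $\policytime{\pi_k}(s)$ is genuinely the (finite) expected hitting time of $\ssink$ under $\pi_k$.

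For $s \in \mathcal{B}$, the strategy $\sigma_k$ plays $\fastpolicy$ from the outset, so the expected time to the goal is $\policytime{\fastpolicy}(s) \le D \le D/\cmin$, where the first inequality is the defining property of the SSP-diameter and the fast policy, and the second uses $\cmin \le 1$. This case is immediate and needs no further work.

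The substantive case is $s \notin \mathcal{B}$. Here $\sigma_k$ plays $\pi_k$ until hitting $\mathcal{B} \cup \{\ssink\}$ and then switches to $\fastpolicy$. I would let $\tau$ be the hitting time of $\mathcal{B} \cup \{\ssink\}$ under $\pi_k$ starting from $s$, and $X$ the state where this hit occurs (with $X = \ssink$ if the goal is reached first). Since $\policytime{\pi_k}(s)$ is finite, $\tau$ is almost surely finite, so the strong Markov property gives
\begin{align*}
    \bbE[\text{time of } \sigma_k \text{ from } s]
    & = \bbE[\tau] + \bbE\bigl[\policytime{\fastpolicy}(X) \ind\{X \ne \ssink\}\bigr],\\
    \policytime{\pi_k}(s)
    & = \bbE[\tau] + \bbE\bigl[\policytime{\pi_k}(X) \ind\{X \ne \ssink\}\bigr].
\end{align*}
On the event $\{X \in \mathcal{B}\}$ we have $\policytime{\fastpolicy}(X) \le D \le D/\cmin \le \policytime{\pi_k}(X)$ by definition of $\mathcal{B}$ and the diameter, so subtracting these two identities yields
\[
    \bbE[\text{time of } \sigma_k \text{ from } s]
    \le \policytime{\pi_k}(s) < D/\cmin,
\]
which completes the bound in the second case.

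The main thing to be careful about is the strong-Markov decomposition: it needs $\tau < \infty$ almost surely and $\pi_k$ to behave as a Markov policy on the stopped trajectory, and it is also important that unreachable states are placed in $\mathcal{B}$ by the algorithm so that the coupling-style comparison $\policytime{\fastpolicy}(X) \le \policytime{\pi_k}(X)$ holds whenever $X \ne \ssink$. Once these two points are settled, the argument is essentially one inequality on the hit state.
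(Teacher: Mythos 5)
Your proof is correct and follows essentially the same route as the paper: the paper also splits into the case $\policytime{\pi_k}(s) \ge D/\cmin$ (where $\fastpolicy$ is played immediately and the time is at most $D$) and the case $\policytime{\pi_k}(s) < D/\cmin$ (where it simply asserts that switching to $\fastpolicy$ "only decreases the expected time"). Your strong-Markov decomposition at the hitting time of $\mathcal{B} \cup \{\ssink\}$ is just a rigorous spelling-out of that one-line claim, and your attention to $\tau < \infty$ and to unreachable states being placed in $\mathcal{B}$ is sound.
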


\begin{proof}
    Let $s \in S$.
    If $\policytime{\pi_k}(s) \ge D/\cmin$, then we play the fast policy $\fastpolicy$ when we start in $s$.
    Thus, the expected time to the goal when starting in $s$ will be at most $D$.
    
    If $\policytime{\pi_k}(s) < D/\cmin$, then the expected time to the goal when starting in $s$ will also be at most $D/\cmin$ since playing $\sigma_k$ only decreases the expected time.
\end{proof}

\begin{proof}[Proof of \cref{thm:hp-reg-full-info}]
    We decompose the regret into two terms as follows,
    \begin{align*}
        \regret
        & =
        \sum_{k=1}^K \sum_{i=1}^{\Ik} c_k(s^k_i,a^k_i) 
        - 
        \sum_{k=1}^K \ctg{\pi^\star}_k (\sinit)
        \\
        & =
        \sum_{k=1}^K \sum_{i=1}^{\Ik} c_k(s^k_i,a^k_i)
        -
        \sum_{k=1}^K \bbE \Bigl[ \sum_{i=1}^{\Ik} c_k(s^k_i,a^k_i) \mid P,\sigma_k,s_1^k=\sinit \Bigr]
        \\
        & \qquad +
        \sum_{k=1}^K \bbE \Bigl[ \sum_{i=1}^{\Ik} c_k(s^k_i,a^k_i) \mid P,\sigma_k,s_1^k=\sinit \Bigr]
        -
        \sum_{k=1}^K \ctg{\pi^\star}_k(\sinit).
    \end{align*}
    The first term accounts for the deviations in the performance of the learner's strategies from their expected value, and is bounded with high probability using \cref{lem:dev-from-exp-cost}.
    
    The second term is the difference between the expected performance of the learner's strategies and the best policy in hindsight.
    Using \cref{lem:switch-imp-exp-cost}, we can bound it as follows,
    \begin{align*}
        \sum_{k=1}^K \bbE \Bigl[ \sum_{i=1}^{\Ik} c_k(s^k_i,a^k_i) \mid P,\sigma_k,s_1^k=\sinit \Bigr]
        -
        \sum_{k=1}^K \ctg{\pi^\star}_k(\sinit)
        & \le
        \sum_{k=1}^K \ctg{\pi_k}_k(\sinit)
        -
        \sum_{k=1}^K \ctg{\pi^\star}_k(\sinit)
        \\
        & =
        \sum_{k=1}^K \langle q^{\pi_k} - q^{\pi^\star} , c_k \rangle
        \\
        & \le
        \frac{2D}{\cmin} \sqrt{3 K \log \frac{D |S| |A|}{\cmin}},
    \end{align*}
    where the last inequality follows from \cref{eq:reg-olo-bound}, and the equality follows because
    \[
        \ctg{\pi}_k(\sinit)
        =
        \sum_{s \in S} \sum_{a \in A} q^\pi(s,a) c_k(s,a)
        =
        \langle q^\pi,c_k \rangle.
    \]
\end{proof}

\newpage
\section{Implementation details for SSP-O-REPS3}
\label{sec:imp-det-3}

\subsection{Computing \texorpdfstring{$q_k$}{}}

After extending the occupancy measures, we must extend our additional definitions.
Define $\KL{q}{q'}$ as the unnormalized Kullback–Leibler divergence between two occupancy measures $q$ and $q'$:
\[
    \KL{q}{q'} 
    = 
    \sum_{s \in S} \sum_{a \in A} \sum_{s' \in S^+} q(s,a,s') \log \frac{q(s,a,s')}{q'(s,a,s')} + q'(s,a,s') - q(s,a,s'),
\]
where $S^+ = S \cup \{ \ssink \}$.
Furthermore, let $R(q)$ define the unnormalized negative entropy of the occupancy measure $q$:
\[
    R(q) 
    = 
    \sum_{s \in S} \sum_{a \in A} \sum_{s' \in S^+} q(s,a,s') \log q(s,a,s') - q(s,a,s'). 
\]
SSP-O-REPS3 chooses its occupancy measures as follows:
\begin{align*}
    q_1
    & =
    q^{P_1,\pi_1} 
    = 
    \argmin_{q \in \wtboundedocset{D/\cmin}{e(1)}} R(q)
    \\
    q_{k+1}
    & =
    q^{P_{k+1},\pi_{k+1}}
    = 
    \argmin_{q \in \wtboundedocset{D/\cmin}{e(k+1)}} \eta \langle q , c_k \rangle + \KL{q}{q_k}.
\end{align*}

As shown in \cite{rosenberg2019full}, each of these steps can be split into an unconstrained minimization step, and a projection step.
Thus, $q_1$ can be computed as follows:
\begin{align*}
    q'_1
    & =
    \argmin_{q} R(q)
    \\
    q_1
    & =
    \argmin_{q \in \wtboundedocset{D/\cmin}{e(1)}} \KL{q}{q'_1},
\end{align*}
where $q'_1$ has a closed-from solution $q'_1(s,a,s') = 1$ for every $(s,a,s') \in S \times A \times S^+$.
Similarly, $q_{k+1}$ is computed as follows for every $k=1,\dots,K-1$:
\begin{align*}
    q'_{k+1}
    & =
    \argmin_{q} \eta \langle q , c_k \rangle + \KL{q}{q_k}
    \\
    q_{k+1}
    & =
    \argmin_{q \in \wtboundedocset{D/\cmin}{e(k+1)}} \KL{q}{q'_{k+1}},
\end{align*}
where again $q'_{k+1}$ has a closed-from solution $q'_{k+1}(s,a,s') = q_k(s,a,s') e^{- \eta c_k(s,a)}$ for every $(s,a,s') \in S \times A \times S^+$.

Therefore, we just need to show that the projection step can be computed efficiently (the implementation follows \cite{rosenberg2019full,jin2019learning}).
We start by formulating the projection step as a constrained convex optimization problem (where $e = e(k+1)$):
\begin{align*}
    \min_q
    &
    \quad \KL{q}{q'_{k+1}}
    &
    \\
    s.t.
    &
    \quad \sum_{a \in A} \sum_{s' \in S^+} q(s,a,s') 
    -
    \sum_{s' \in S} \sum_{a' \in A} q(s',a',s)
    =
    \indevent{s = \sinit}
    &
    \forall s \in S
    \\
    &
    \quad q(s,a,s') \le \left( \bar P_e(s' \mid s,a) + \epsilon_e(s' \mid s,a) \right) \sum_{s'' \in S^+} q(s,a,s'')
    &
    \forall (s,a,s') \in S \times A \times S^+
    \\
    &
    \quad q(s,a,s') \ge \left( \bar P_e(s' \mid s,a) - \epsilon_e(s' \mid s,a) \right) \sum_{s'' \in S^+} q(s,a,s'')
    &
    \forall (s,a,s') \in S \times A \times S^+
    \\
    &
    \quad \sum_{s \in S} \sum_{a \in A}  \sum_{s' \in S^+} q(s,a,s') \le \frac{D}{\cmin}
    &
    \\
    &
    \quad q(s,a,s') \ge 0
    &
    \forall (s,a,s') \in S \times A \times S^+
\end{align*}

To solve the problem, consider the Lagrangian:
\begin{align*}
    \mathcal{L} & (q,\lambda, v,\mu)
    =
    \KL{q}{q'_{k+1}} + \lambda \left( \sum_{s \in S} \sum_{a \in A} \sum_{s' \in S^+}q(s,a,s') - \frac{D}{\cmin} \right)
    \\
    & \quad + 
    \sum_{s \in S} v(s) \left( \sum_{s' \in S} \sum_{a' \in A} q(s',a',s) + \indevent{s = \sinit} - \sum_{a \in A} \sum_{s' \in S^+} q(s,a,s') \right)
    \\
    & \quad + 
    \sum_{s \in S} \sum_{a \in A}  \sum_{s' \in S^+} \mu^+(s,a,s') \left( q(s,a,s') - \left( \bar P_e(s' \mid s,a) + \epsilon_e(s' \mid s,a) \right) \sum_{s'' \in S^+} q(s,a,s'') \right)
    \\
    & \quad + 
    \sum_{s \in S} \sum_{a \in A}  \sum_{s' \in S^+} \mu^-(s,a,s') \left( \left( \bar P_e(s' \mid s,a) - \epsilon_e(s' \mid s,a) \right) \sum_{s'' \in S^+} q(s,a,s'') - q(s,a,s') \right)
    \\
    & =
    \KL{q}{q'_{k+1}} + v(\sinit) - \lambda \frac{D}{\cmin}
    \\
    & \quad +
    \sum_{s \in S} \sum_{a \in A} \sum_{s' \in S^+} q(s,a,s') \Biggl( \lambda + v(s') - v(s) + \mu^+(s,a,s') - \mu^-(s,a,s')
    \\
    & \qquad \qquad \qquad \qquad \qquad \qquad \quad -
    \sum_{s'' \in S^+} \bar P_e(s'' \mid s,a) (\mu^+(s,a,s'') - \mu^-(s,a,s''))
    \\
    & \qquad \qquad \qquad \qquad \qquad \qquad \quad -
    \sum_{s'' \in S^+} \epsilon_e(s'' \mid s,a) (\mu^+(s,a,s'') + \mu^-(s,a,s'')) \Biggr)
\end{align*}
where $\lambda$, $\{ v(s) \}_{s \in S}$, $\{ \mu^+(s,a,s')\}_{(s,a,s') \in S \times A \times S^+}$ and $\{ \mu^-(s,a,s')\}_{(s,a,s') \in S \times A \times S^+}$ are Lagrange multipliers, and we set $v(\ssink) = 0$ for convenience.
Differentiating the Lagrangian with respect to any $q(s, a,s')$, we get
\begin{align*}
    \frac{\partial \mathcal{L} (q,\lambda,v,\mu)}{\partial q(s,a,s')}
    & =
    \log \frac{q(s,a,s')}{q'_{k+1}(s,a,s')} + \lambda + v(s') - v(s) + \mu^+(s,a,s') - \mu^-(s,a,s')
    \\
    & \quad -
    \sum_{s'' \in S^+} \bar P_e(s'' \mid s,a) (\mu^+(s,a,s'') - \mu^-(s,a,s''))
    \\
    & \quad -
    \sum_{s'' \in S^+} \epsilon_e(s'' \mid s,a) (\mu^+(s,a,s'') + \mu^-(s,a,s'')).
\end{align*}

Next we define
\begin{align}
    \nonumber
    B_k^{v,\mu}(s,a,s')
    & =
    v(s) - v(s') + \mu^-(s,a,s') - \mu^+(s,a,s') - \eta c_k(s,a)
    \\
    \nonumber
    & \quad +
    \sum_{s'' \in S^+} \bar P_{e(k+1)}(s'' \mid s,a) (\mu^+(s,a,s'') - \mu^-(s,a,s''))
    \\
    \label{eq:B-k-def}
    & \quad +
    \sum_{s'' \in S^+} \epsilon_{e(k+1)}(s'' \mid s,a) (\mu^+(s,a,s'') + \mu^-(s,a,s'')).
\end{align}

Hence, setting the gradient to zero, we obtain the formula for $q_{k+1}(s, a)$:
\begin{align}
    \nonumber
    q_{k+1}(s,a,s')
    & =
    q'_{k+1}(s,a,s') e^{-\lambda + \eta c_k(s,a) + B_k^{v,\mu}(s,a,s')}
    \\
    \label{eq:q-k-update-unknown}
    & =
    q_k(s,a,s') e^{-\lambda + B_k^{v,\mu}(s,a,s')},
\end{align}
where the last equality follows from the formula of $q'_{k+1}(s,a,s')$, and setting $c_0(s,a) = 0$ and $q_0(s,a,s') = 1$ for every $(s,a,s') \in S \times A \times S^+$.

We now need to compute the value of $\lambda,v,\mu$ at the optimum.
To that end, we write the dual problem $\mathcal{D}(\lambda, v,\mu) = \min_q \mathcal{L}(q,\lambda,v,\mu)$ by substituting $q_{k+1}$ back into $\mathcal{L}$:
\begin{align*}
    \mathcal{D}(\lambda, v,\mu)
    & =
    \sum_{s \in S} \sum_{a \in A} \sum_{s' \in S^+} q'_{k+1}(s,a,s') - \sum_{s \in S} \sum_{a \in A} \sum_{s' \in S^+} q_{k+1}(s,a,s') + v(\sinit) - \lambda \frac{D}{\cmin}
    \\
    & = 
    - \sum_{s \in S} \sum_{a \in A} \sum_{s' \in S^+} q_k(s,a,s') e^{-\lambda + B_k^{v,\mu}(s,a,s')}
    +
    v(\sinit) - \lambda \frac{D}{\cmin} + \sum_{s \in S} \sum_{a \in A} \sum_{s' \in S^+} q'_{k+1}(s,a,s').
\end{align*}

Now we obtain $\lambda,v,\mu$ by maximizing the dual.
Equivalently, we can minimize the negation of the dual (and ignore the term $\sum_{s \in S} \sum_{a \in A} \sum_{s' \in S^+} q'_{k+1}(s,a,s')$), that is:
\begin{align*}
    \lambda_{k+1}, v_{k+1}, \mu_{k+1}
    = 
    \argmin_{\lambda \ge 0, v, \mu \ge 0} \sum_{s \in S} \sum_{a \in A} \sum_{s' \in S^+} q_k(s,a,s') e^{-\lambda + B_k^{v,\mu}(s,a,s')} + \lambda \frac{D}{\cmin} - v(\sinit).
\end{align*}
This is  a convex optimization problem with only non-negativity constraints (and no constraints about the relations between the variables), which can be solved efficiently using iterative methods like gradient descent.

\subsection{Computing the optimistic fast policy}
\label{sec:comp-optimistic-D}

The optimistic fast policy $\optimisticfastpolicy{e}$ is a deterministic stationary policy that together with the optimistic fast transition function from the confidence set of epoch $e$, minimizes the time to the goal state from all states simultaneously out of all pairs of policies and transition functions from the confidence set.
Essentially, this is the optimal pair of policy and transition function from the confidence set w.r.t the constant cost function $c(s,a) = 1$ for every $s \in S$ and $a \in A$.

The existence of the optimistic fast policy is proven in \cite{tarbouriech2019noregret}, and there they also show that it can be computed efficiently with Extended Value Iteration.
In \cite{cohen2020ssp}, the authors compute the following optimistic fast transition function for every $(s,a,s') \in S \times A \times S$:
\[
    \wt P^f_e(s' \mid s,a)
    =
    \max \left\{ 0 , \bar P_e(s' \mid s,a) - 28 A^e(s,a) - 4 \sqrt{\bar P_e(s' \mid s,a) A^e(s,a)} \right\},
\]
where the remaining probability mass goes to $\wt P^f_e(\ssink \mid s,a)$.
Then, $\optimisticfastpolicy{e}$ is computed by finding the fast policy w.r.t $\wt P^f_e$ (see \cref{sec:comp-D}).

While this method is simpler and more efficient than Extended Value Iteration, the authors do not prove that this is indeed the optimistic fast policy.
However, this policy is sufficient for their analysis and for our analysis as well.
For simplicity, throughout the analysis we assume that $\optimisticfastpolicy{e}$ is the optimistic fast policy, but every step of the proof works with this computation as well.

\newpage
\section{Pseudo-code for SSP-O-REPS3}
\label{sec:code-3}

\begin{algorithm}
    \caption{\sc SSP-O-REPS3}
    \label{alg:alg-3}
    \begin{algorithmic} 
        
        \STATE {\bfseries Input:} state and space $S$, action space $A$, minimal cost $\cmin$, optimization parameter $\eta$ and confidence parameter $\delta$.

        \STATE {\bfseries Initialization:}
        
        \STATE Obtain SSP-diameter $D$ from user or estimate it (see \cref{sec:estimate-diameter-proofs}).
         
        \STATE Set $q_0(s,a,s') = 1$ and $c_0(s,a) = 0$ for every $(s,a,s') \in S \times A \times S^+$.
        
        \STATE Set $e \gets 0$ and for every $(s,a,s') \in S \times A \times S^+$: $N^0(s,a) \gets 0,N^0(s,a,s') \gets 0,n^0(s,a) \gets 0,n^0(s,a,s') \gets 0$.

        \smallskip
         
        \FOR{$k=1,2,\ldots$}
        
        \STATE $e \gets e+1$, start new epoch (\cref{alg:alg-4}).
         
         \smallskip
         
         \STATE Set $s_1^k \gets \sinit$, $i \gets 1$.
        
        \WHILE{$s_i^k \neq \ssink$ and $\optimisticpolicytime{\pi_k}_k(s_i^k) < \frac{D}{\cmin}$ and $\forall a \in A. \,  n^e(s_i^k,a) + N^e(s_i^k,a) > \alpha \numvisitsuntilknownbern$}
        
        \STATE Play action according to $\pi_k$, i.e., $a_i^k \sim \pi_k(\cdot \mid s_i^k)$.
        
        \STATE Observe next state $s_{i+1}^k \sim P(\cdot \mid s_i^k,a_i^k)$.
        
        \STATE Update counters: $n^e(s_i^k,a_i^k) \gets n^e(s_i^k,a_i^k)+1$,$n^e(s_i^k,a_i^k,s_{i+1}^k) \gets n^e(s_i^k,a_i^k,s_{i+1}^k)+1$.
        
        \STATE $i \gets i+1$.
        
        \IF{$n^e(s_{i-1}^k,a_{i-1}^k) \ge N^e(s_{i-1}^k,a_{i-1}^k)$}
        
            \STATE $e \gets e+1$, start new epoch (\cref{alg:alg-4}).
        
            \STATE break
        
        \ENDIF

        \ENDWHILE
        
        \smallskip
        
        \WHILE{$s_i^k \neq \ssink$}
        
        \IF{$\exists a \in A. \,  n^e(s_i^k,a) + N^e(s_i^k,a) \le \alpha \numvisitsuntilknownbern$}
        
        \STATE Play the least played action $a_i^k = \argmin_{a \in A} n^e(s_i^k,a) + N^e(s_i^k,a)$.
        
        \ELSE
        
        \STATE Play according to $\optimisticfastpolicy{e}$, i.e., $a_i^k \sim \optimisticfastpolicy{e}(\cdot \mid s_i^k)$.
        
        \ENDIF
        
         \STATE Observe next state $s_{i+1}^k \sim P(\cdot \mid s_i^k,a_i^k)$.
        
        \STATE Update counters: $n^e(s_i^k,a_i^k) \gets n^e(s_i^k,a_i^k)+1$,$n^e(s_i^k,a_i^k,s_{i+1}^k) \gets n^e(s_i^k,a_i^k,s_{i+1}^k)+1$.
        
        \STATE $i \gets i+1$.
        
        \IF{$n^e(s_{i-1}^k,a_{i-1}^k) \ge N^e(s_{i-1}^k,a_{i-1}^k)$}
        
        \STATE $e \gets e+1$, start new epoch (\cref{alg:alg-4}).
        
        \ENDIF

        \ENDWHILE
        
        \smallskip
        
        \STATE Set $\Ik \gets i-1$.
        
        \STATE Observe cost function $c_k$ and suffer cost $\sum_{j=1}^{I_k}c_k(s_j^k,a_j^k)$.
        
        \ENDFOR
    \end{algorithmic}
\end{algorithm}

\newpage

\begin{algorithm}
    \caption{\sc Start new epoch}
    \label{alg:alg-4}
    \begin{algorithmic} 
    
    \STATE Update counters for every $(s,a,s') \in S \times A \times S^+$:
    \begin{align*}
        N^e(s,a) \gets N^{e-1}(s,a) + n^{e-1}(s,a)
        & \quad ; \quad
        n^e(s,a) \gets 0
        \\
        N^e(s,a,s') \gets N^{e-1}(s,a,s') + n^{e-1}(s,a,s')
        & \quad ; \quad
        n^e(s,a,s') \gets 0
    \end{align*}
    
    \STATE Update confidence set for every $(s,a,s') \in S \times A \times S^+$:
    \begin{align*}
        \bar P_e(s' \mid s,a)
        & =
        \frac{N^e(s,a,s')}{N_+^e(s,a)}
        \\
        \epsilon_e(s' \mid s,a)
        & =
        4 \sqrt{\bar P_e (s' \mid s,a) A^e(s,a)}
        +
        28 A^e(s,a),
    \end{align*}
    where $A^e(s,a) = \frac{\log (|S| |A| N^e_+(s,a) / \delta)}{N^e_+(s,a)}$.
    
    \IF{$e$ is the first epoch of episode $k$}
    
    \STATE Compute $\lambda_k,v_k,\mu_k$ as follows (using, e.g., gradient descent):
         \[
            \lambda_k, v_k, \mu_k
            = 
            \argmin_{\lambda \ge 0, v, \mu \ge 0} \sum_{s \in S} \sum_{a \in A} \sum_{s' \in S^+} q_{k-1}(s,a,s') e^{-\lambda + B_{k-1}^{v,\mu}(s,a,s')} + \lambda \frac{D}{\cmin} - v(\sinit),
         \]
         where $B_k^{v,\mu}(s,a,s')$ is defined in \cref{eq:B-k-def}.
         
         \STATE Compute $q_k$ as follows for every $(s,a,s') \in S \times A \times S^+$:
         \[
            q_k(s,a,s')
            =
           q_{k-1}(s,a,s') e^{-\lambda_k +B_{k-1}^{v_k,\mu_k}(s,a,s')}. 
         \]
         
         \STATE Compute $\pi_k$ and $P_k$ as follows for every $(s,a,s') \in S \times A \times S^+$:
         \[
            \pi_k(a \mid s)
            =
            \frac{\sum_{s' \in S^+} q_k(s,a,s')}{\sum_{a' \in A} \sum_{s' \in S^+} q_k(s,a',s') }
            \quad ; \quad
            P_k(s' \mid s,a)
            =
            \frac{ q_k(s,a,s')}{\sum_{s'' \in S^+} q_k(s,a,s'') }
         \]
         
        \STATE Set $\optimisticpolicytime{\pi_k}_k(s) \gets \frac{D}{\cmin}$ for every $s \in S$ such that $\sum_{a \in A} \sum_{s' \in S^+} q_k(s,a,s') = 0$.

        \STATE Compute $\optimisticpolicytime{\pi_k}_k$ by solving the following linear equations:
         \[
            \optimisticpolicytime{\pi_k}_k (s) 
            = 
            1 + \sum_{a \in A} \sum_{s' \in S} \pi_k(a | s) P_k(s' | s,a) \optimisticpolicytime{\pi_k}_k(s')
            \quad \forall s \in \{ s \in S : \sum_{a \in A} \sum_{s' \in S^+} q_k(s,a,s') >0 \}.
         \]
         
    \ELSE
    
    \STATE Compute the optimistic fast policy $\optimisticfastpolicy{e}$ (see \cref{sec:comp-optimistic-D}).
    
    \ENDIF
         
    \end{algorithmic}
\end{algorithm}

\newpage
\section{Analysis of SSP-O-REPS3 (proofs for \texorpdfstring{\cref{sec:high-prob-unknown-P}}{})}
\label{sec:unknown-P-proofs}

\subsection{Overview}

Our analysis follows the framework of \cite{cohen2020ssp} for analyzing optimism in SSPs, but makes the crucial adaptations needed to handle the adversarial environment.

We have two objectives: bounding the number of steps $\totaltime$ taken by the algorithm (to show that we reach the goal in every episode) and bounding the regret.
To bound the total time we split the time steps into \emph{intervals}.
The first interval begins at the first time step, and an interval ends once (1) an episode ends, (2) an epoch ends, (3) an unknown state is reached, or (4) a state $s$ such that $\optimisticpolicytime{\pi_k}_k(s) \ge D/\cmin$ is reached when playing $\pi_k$ in episode $k$, i.e., there is a switch.

Intuitively, we will bound the length of every interval by $\tO(D/\cmin)$ with high probability, and then use the number of intervals $\tO(K + D|S|^2|A|/\cmin^2)$ to bound the total time $T$.
Finally, we will show that the regret scales with the square root of the total variance (which is the number of intervals times the variance in each interval) to finish the proof.
While intuitive, this approach is technically difficult and therefore we apply these principles in a different way.

We start by showing that the confidence sets contain $P$ with high probability, which is a common result (see, e.g., \cite{zanette2019tighter,efroni2019tight}).
Define $\geventi{m}$ the event that $P$ is in the confidence set of the epoch that interval $m$ belongs to.

\begin{lemma}[\cite{cohen2020ssp}, Lemma 4.2]
    \label{lem:conf-set-hold}
    With probability at least $1 - \delta/2$, the event $\geventi{m}$ holds for all intervals $m$ simultaneously.
\end{lemma}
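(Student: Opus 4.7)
The plan is to establish this by applying an empirical Bernstein concentration inequality to each transition estimate and then taking union bounds over states, actions, successors, and possible sample sizes. First, fix a triple $(s,a,s') \in S \times A \times (S \cup \{g\})$ and a positive integer $n$. On the event that $(s,a)$ has been visited exactly $n$ times, the empirical estimate $\bar P(s' \mid s,a)$ is the average of $n$ Bernoulli indicators with mean $P(s' \mid s,a)$, so by the empirical Bernstein inequality with probability at least $1 - \delta'$,
\[
    |\bar P(s' \mid s,a) - P(s' \mid s,a)|
    \le
    2 \sqrt{\bar P(s' \mid s,a) \frac{\log(1/\delta')}{n}} + O\Bigl(\frac{\log(1/\delta')}{n}\Bigr).
\]
The standard manipulation that lets one replace the true variance by the empirical variance inside the square root (using $\sqrt{P} \le \sqrt{\bar P} + \sqrt{|\bar P - P|}$ and solving the resulting quadratic) at the price of an additive lower-order term then yields precisely $\epsilon_e(s' \mid s,a) = 4 \sqrt{\bar P_e(s' \mid s,a) A^e(s,a)} + 28 A^e(s,a)$, provided $\delta'$ is chosen proportional to $\delta / (|S|^2 |A| n^2)$; the numerical constants $4$ and $28$ come from tracking these manipulations carefully.

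To lift this pointwise bound to a uniform bound across all epochs, I would exploit the doubling structure of the algorithm: an epoch only ends when some counter $N^e(s,a)$ doubles or an episode ends, so the values that $N_+^e(s,a)$ takes across epochs lie in an essentially geometric grid, giving at most $O(\log T)$ distinct values per state-action pair. Taking a union bound over these values, over the $|S|^2 |A|$ triples $(s,a,s')$, and observing that the $1/n^2$ factor in the chosen $\delta'$ sums to a constant over all candidate sample sizes, gives that the confidence set of every epoch simultaneously contains $P$ with failure probability at most $\delta/2$. The $\log(|S| |A| N_+^e(s,a)/\delta)$ factor inside $A^e(s,a)$ is exactly what absorbs these union bounds.

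The main technical subtlety, and the reason this is typically stated as a standalone lemma, is that $N^e(s,a)$ is a stopping time for the filtration generated by the trajectory, so the ``$n$ i.i.d.\ Bernoullis'' argument above is not literally applicable. The clean fix is to apply Bernstein's inequality to the martingale increments $\ind\{s_{t+1}=s'\} - P(s' \mid s_t, a_t)$ along the natural filtration of the process, and then restrict attention to the set of sample sizes $n = 1, 2, 4, 8, \ldots$ at which new epochs can begin; this circumvents the need for uniform concentration over a continuous parameter. This is by now a standard argument in the UCBVI-style literature \cite{azar2017minimax,cohen2020ssp}, and since our confidence sets are defined identically to those of \cite{cohen2020ssp}, the lemma follows verbatim from their Lemma~4.2.
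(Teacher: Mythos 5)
The paper does not prove this lemma at all: it is imported verbatim from \cite{cohen2020ssp} (their Lemma 4.2), and your sketch of the empirical-Bernstein-plus-union-bound argument, including the stopping-time caveat, is exactly the standard argument behind that cited result, so your proposal is correct and takes essentially the same route. The only small imprecision is your remark that new epochs begin only at sample sizes $n = 1, 2, 4, 8, \ldots$ (epochs also end when episodes end, so $N^e(s,a)$ need not be a power of two), but this is harmless since your union bound over all integers $n$ with weights proportional to $1/n^2$ already covers every possible counter value.
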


There are two dependant probabilistic events that are important for the analysis.
The first are the events $\geventi{m}$, and the second is that the deviation in the cost of a given policy from its expected value is not large.
To disentangle these events we define an alternative regret for every $\numintervals = 1,2,\dots$,
\[
    \tregret{\numintervals}
    =
    \sum_{m=1}^{\numintervals} \sum_{h=1}^{\Hm} \sum_{a \in A} \optimisticpi{m} (a \mid s_h^m) c_m(s_h^m,a) \indgeventi{m}
    -
    \sum_{k=1}^K \ctg{\pi^\star}_k(\sinit),
\]
where $c_m=c_k$ for the episode $k$ that interval $m$ belongs to, $\optimisticpi{m}$ is the policy followed by the learner in interval $m$, $\Hm$ is the length of interval $m$, and the trajectory visited in interval $m$ is $\traj{m} = ( s_1^m, a_1^m , \ldots , s_{\Hm}^m, a_{\Hm}^m, s_{\Hm+1}^m)$.

We focus on bounding $\tregret{\numintervals}$ because we can use it to obtain a bound on $\regret$.
This is done using \cref{lem:conf-set-hold} and an application of Azuma inequality, when $\numintervals$ is the number of intervals in which the first $K$ episodes elapse (we show that the learner indeed completes these $K$ episodes).

As mentioned, bounding the length of each interval complicates the analysis, and therefore we introduce artificial intervals.
That is, an interval $m$ also ends at the first time step $H$ such that $\sum_{h=1}^{H} \sum_{a \in A} \optimisticpi{m} (a \mid s_h^m) c_m(s_h^m,a) \ge D/\cmin$.
The artificial intervals are only introduced for the analysis and do not affect the algorithm.
Now, the length of each interval is bounded by $2D/\cmin^2$ and we can bound the number of intervals as follows.

\begin{lemma}
    \label{obs:cost-bounds-intervals}
    Let $\wt C_M = \sum_{m=1}^{\numintervals} \sum_{h=1}^{\Hm} \sum_{a \in A} \optimisticpi{m} (a \mid s_h^m) c_m(s_h^m,a)$.
    The total time satisfies $\totaltime \le \wt C_M / \cmin$ and the total number of intervals satisfies
    \[
        M
        \le
        \frac{\cmin \wt C_M}{D} + 2 |S| |A| \log T + 2K + 2\alpha \frac{D |S|^2 |A|}{\cmin^2} \log \frac{D |S| |A|}{\delta \cmin}.
    \]
\end{lemma}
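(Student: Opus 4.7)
The statement consists of two separate inequalities, which I would prove independently. For $\totaltime \le \wt C_M / \cmin$, the plan is to observe that at every time step $h$ of every interval $m$ the one-step expected cost of the policy $\optimisticpi{m}$ satisfies
\[
    \sum_{a \in A} \optimisticpi{m}(a \mid s_h^m) c_m(s_h^m,a) \ge \cmin,
\]
since $\optimisticpi{m}(\cdot \mid s_h^m)$ is a probability distribution and every instantaneous cost is at least $\cmin$ by \cref{ass:c-min}. Summing this lower bound over $h=1,\dots,\Hm$ and then over $m=1,\dots,\numintervals$ gives $\wt C_M \ge \cmin \sum_m \Hm = \cmin \totaltime$, which rearranges to the first claim.

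For the bound on $M$, the plan is to partition the intervals according to which of the five possible stopping rules triggered termination and to bound each count separately. Write $M = M_1 + M_2 + M_3 + M_4 + M_5$ for the numbers of intervals that end respectively because an episode ended, because an epoch ended, because an unknown state was visited, because a ``bad'' state with $\optimisticpolicytime{\pi_k}_k(s)\ge D/\cmin$ was reached while executing $\pi_k$, and because of the artificial cumulative-cost termination introduced at the start of the analysis.

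The counts $M_1 \le K$ and $M_4 \le K$ are immediate: the latter holds because once the bad-state switch fires in some episode $k$ the learner moves to the second while loop for the remainder of the episode and never consults $\optimisticpolicytime{\pi_k}_k$ again, so at most one such switch can occur per episode. A standard doubling argument gives $M_2 \le 2|S||A|\log \totaltime$, as each epoch ends when the visit count of some pair $(s,a)$ doubles and each pair can trigger this event at most $\lceil\log_2 \totaltime\rceil+1$ times. To bound $M_3$ I would use that a state becomes known as soon as each of its $|A|$ actions has been played at least $\Phi = \alpha \cdot \numvisitsuntilknownbern$ times, so the total number of visits to unknown states (which dominates $M_3$) is at most $|S|\cdot|A|\cdot\Phi$, matching the last term of the stated bound up to an absolute constant. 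Finally $M_5 \le \cmin \wt C_M/D$ because each artificial interval accumulates at least $D/\cmin$ units of the telescoped sum $\wt C_M$ by construction. Adding the five bounds yields the second inequality.

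The only step that requires a bit of care is $M_4 \le K$: one has to read off from the pseudocode that the switch condition involving $\optimisticpolicytime{\pi_k}_k$ is checked only inside the first while loop, so after a bad-state switch no additional switch-termination can occur before the current episode finishes. Everything else reduces to standard counting arguments on visitation counters.
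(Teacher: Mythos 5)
Your proof is correct and is essentially the intended argument: the paper states this lemma without an explicit proof, treating it as a counting observation, and your case analysis (per-step cost at least $\cmin$ for the first claim; partitioning intervals by which of the five stopping rules fired, with the episode-end and bad-state-switch counts each at most $K$, the doubling argument for epochs, the $|S||A|\Phi$ bound on unknown-state visits, and the $D/\cmin$ threshold for artificial intervals) is exactly the accounting that produces the four terms in the stated bound. The one detail worth noting is that the factor $2$ in the last term is most naturally explained by each unknown-state visit terminating up to two intervals (the interval that ends upon reaching the unknown state and the subsequent length-one forced-exploration interval), rather than by slack in the visit count, but this does not affect correctness.
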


Note that a confidence set update occurs only in the end of an epoch and thus $\geventi{m} = \geventi{m-1}$ for most intervals.
Also, for artificial intervals the policy does not change.
Next we bound $\wt C_M$ as a function of the number of intervals $M$.
Through summation of our confidence bounds, and by showing that the variance in each interval is bounded by $D^2 / \cmin^2$ we are able to obtain the following, when \cref{lem:conf-set-hold} holds,
\[
    \wt C_M
    \le
    \sum_{k=1}^K \langle q_k , c_k \rangle 
    + \tO \biggl(
    \frac{D|S|}{\cmin} \sqrt{\numintervals |A|} 
    + 
    \frac{D^2 |S|^2 |A|}{\cmin^2} \biggr).
\]
Substituting in \cref{obs:cost-bounds-intervals} and solving for $\wt C_M$ we get
\begin{align*}
    \wt R_M
    & =
    \wt C_M
    -
    \sum_{k=1}^K \ctg{\pi^\star}_k(\sinit)
    \le
    \sum_{k=1}^K \langle q_k - q^{P,\pi^\star} , c_k \rangle
    \\
    & \qquad + \tO \biggl(
    \frac{D|S|}{\cmin} \sqrt{|A| K} 
    + 
    \frac{D^2 |S|^2 |A|}{\cmin^2} \biggr),
\end{align*}
Notice that the first term on the RHS of the inequality is exactly the regret of OMD, and therefore analyzing it similarly to \cref{thm:exp-reg-full-info} gives the final bound (see \cref{sec:omd-unknown-P-proofs}).

\subsection{Notations}
\label{sec:bern-analysis-notations}

Denote the trajectory visited in interval $m$ by $\traj{m} = ( s_1^m, a_1^m , \ldots , s_{\Hm}^m, a_{\Hm}^m, s_{\Hm+1}^m)$, where $a_h^m$ is the action taken in $s_h^m$, and $\Hm$ is the length of the interval.
In addition, the concatenation of trajectories in the intervals up to and including interval $m$ is denoted by $\trajconcat{m}$, that is 
$\trajconcat{m} = \cup_{m'=1}^m \traj{m'}$.

The policy that the learner follows in interval $m$ is denoted by $\optimisticpi{m}$, and the transition function that was involved in the choice of $\optimisticpi{m}$ is denoted by $\optimisticP{m}$.
For the first interval of every episode these are chosen by OMD, i.e., $\pi_k$ and $P_k$, and for other intervals these are the optimistic fast policy $\optimisticfastpolicy{e}$ and the transition function chosen from the confidence set together with it $\wt P^f_e$, for the epoch $e$ that interval $m$ belongs to.
Notice that intervals with unknown states are of length $1$. Thus, there is no policy since only one action is performed -- we ignore visits to unknown states and we suffer their cost directly in \cref{lem:bern-reg-decomp}.

The expected cost of $\optimisticpi{m}$ w.r.t $\optimisticP{m}$ is denoted by $\optimisticctg{m}$, and the expected time to the goal is denoted by $\optimisticpolicytime{m}$.
For intervals in which we follow the optimistic fast policy, we will show that $\optimisticpolicytime{m}(s) \le D$ for every $s \in S$ when $\geventi{m}$ holds.
We would like to have a similar property for intervals in which we follow the OMD policy, i.e., the first interval of every episode.

Note that for the first interval $m$ of episode $k$, we have that $\optimisticpolicytime{\pi_k}_k = \optimisticpolicytime{m}$, and recall that reaching a state $s \in S$ such that $\optimisticpolicytime{\pi_k}_k(s) \ge D/\cmin$ ends the current interval.
We would like to take advantage of this fact in order to make sure that $\optimisticpolicytime{m}$ is always bounded by $D/\cmin$.
Similarly to \cref{sec:high-prob-known-P}, we compute $\optimisticpolicytime{\pi_k}_k(s)$ only for states $s$ that are reachable from $\sinit$ w.r.t $P_k$.
Since reaching a state $s$ with $\optimisticpolicytime{\pi_k}_k(s) \ge D/\cmin$ yields the start of a new interval for which we use the optimistic fast policy, we can set $\optimisticpolicytime{\pi_k}_k(s) = D/\cmin$ for states that are not reachable from $\sinit$ without affecting the algorithm's choices.

We make another change to $\optimisticP{m}$ for interval $m$ that is the first interval of episode $k$.
Since reaching a state $s \in S$ such that $\optimisticpolicytime{\pi_k}_k(s) \ge D/\cmin$ ends the interval, we tweak $\optimisticP{m}$ such that from such a state it goes directly to the goal with expected time of $D/\cmin$ and expected cost of $D$ (can be done with a self-loop that has $\cmin/D$ probability to go to $\ssink$).
Thus, when we consider the expected cost of $\optimisticpi{m}$ w.r.t $\optimisticP{m}$, we have that $\optimisticctg{m}(\sinit) \le \optimisticctg{\pi_k}_k(\sinit)$ because we only decreased the cost from some states.
However, notice that now $\optimisticP{m}$ is in the confidence set only for states that we did not tweak.
We show that this does not affect the analysis, since reaching those states ends the interval.

We would like to emphasize that tweaking $\optimisticP{m}$ is only done in hindsight as a part of the analysis, and does not change the algorithm.

\subsection{Properties of the learner's policies}

\begin{lemma}
    \label{lem:bern-opt-val-bound}
    Let $m$ be an interval.
    If $m$ is the first interval of episode $k$ then $\optimisticpolicytime{m}(s) \le D/\cmin$ for every $s \in S$.
    Otherwise, if $\geventi{m}$ holds then $\optimisticpolicytime{m}(s) \le D$ for every $s \in S$.
\end{lemma}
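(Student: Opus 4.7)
The plan is to treat the two cases separately, leveraging in each case the specific way the pair $(\optimisticpi{m}, \optimisticP{m})$ was chosen in \cref{sec:bern-analysis-notations}. For the first case, when $m$ is the first interval of episode $k$, I would rely on the tweak to $\optimisticP{m}$: every ``bad'' state $s$ -- one with $\optimisticpolicytime{\pi_k}_k(s) \ge D/\cmin$, including by convention every state unreachable from $\sinit$ under $(\pi_k, P_k)$ -- is replaced by a self-loop to $\ssink$ with success probability $\cmin/D$, so that $\optimisticpolicytime{m}(s) = D/\cmin$ by direct computation. It then remains to bound $\optimisticpolicytime{m}$ on the complementary set of good states, where $\optimisticpolicytime{\pi_k}_k(s) < D/\cmin$ by definition.

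At a good state the transitions are unchanged between $P_k$ and $\optimisticP{m}$, so writing $f := \optimisticpolicytime{\pi_k}_k$ and $g := \optimisticpolicytime{m}$, both functions satisfy the same linear Bellman recursion
\[
h(s) = 1 + \sum_{a \in A} \pi_k(a \mid s) \sum_{s'' \in S} P_k(s'' \mid s, a) h(s''),
\]
with the only difference being the boundary values at bad states: $f(s'') \ge D/\cmin$ for the untweaked recursion and $g(s'') = D/\cmin$ for the tweaked one. Since $g \le f$ pointwise on the bad states and the Bellman operator associated with a proper policy is monotone, the inequality propagates to good states, giving $g(s) \le f(s) < D/\cmin$ there as well and closing Case 1.

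For the second case, $\optimisticpi{m} = \optimisticfastpolicy{e}$ and $\optimisticP{m} = \wt P^f_e$ by construction, and by definition of the optimistic fast policy this pair minimizes the expected hitting time of $\ssink$ from every state simultaneously over all policy-transition pairs $(\pi, P')$ with $P'$ in the confidence set of epoch $e$. On the event $\geventi{m}$ the true transition $P$ lies in that confidence set, so $(\fastpolicy, P)$ is an admissible competitor and delivers
\[
\optimisticpolicytime{m}(s) \le \policytime{\fastpolicy}(s) \le D
\]
for every $s \in S$, where the last inequality is the definition of the SSP-diameter.

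The main obstacle I expect is the monotonicity step in Case 1: the tweak alters the dynamics precisely at those states where $\optimisticpolicytime{\pi_k}_k$ itself could be large, and one must verify carefully that patching their values down to $D/\cmin$ cannot inflate expected times elsewhere. This reduces to a comparison of fixed points of the same linear operator under different non-negative boundary data, which goes through once one notes that $\pi_k$ behaves like a proper policy on the good-state sub-SSP whose sinks are $\ssink$ together with the bad states.
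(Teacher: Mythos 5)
Your proof is correct and follows essentially the same route as the paper: Case 1 via the tweak of $\optimisticP{m}$ at ``bad'' states (the paper simply cites this as holding ``by definition,'' while you spell out the fixed-point monotonicity argument it implicitly relies on), and Case 2 via optimism, comparing against the admissible pair $(\fastpolicy, P)$ on the event $\geventi{m}$. The extra care you take with the good-state sub-SSP is a valid and welcome elaboration, not a deviation.
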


\begin{proof}
   The first case holds by definition of $\optimisticP{m}$ for intervals that are in the beginning of some episode (see discussion in \cref{sec:bern-analysis-notations}).
   The second case follows by optimism and the fact that $P$ is in the confidence set (see \cite{cohen2020ssp}, Lemma B.2).
\end{proof}

\begin{lemma}
    \label{lem:bellman-optimistic}
    Let $m$ be an interval and let $1 \le h \le \Hm$. 
    If $\geventi{m}$ holds then the following Bellman equations hold:
    \begin{align*}
        \optimisticctg{m}(s_h^m)
        & =
        \sum_{a \in A} \optimisticpi{m}(a \mid s_h^m) c_m(s_h^m,a) + \sum_{a \in A} \sum_{s' \in S} \optimisticpi{m}(a \mid s_h^m) \optimisticP{m}(s' \mid s_h^m,a) \optimisticctg{m}(s')
        \\
        \optimisticpolicytime{m}(s_h^m)
        & =
        1 + \sum_{a \in A} \sum_{s' \in S} \optimisticpi{m}(a \mid s_h^m) \optimisticP{m}(s' \mid s_h^m,a) \optimisticpolicytime{m}(s').
    \end{align*}
\end{lemma}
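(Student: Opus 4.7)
The plan is to reduce this to the standard Bellman equations for proper policies in SSPs, namely Eq.~\eqref{eq:bell-time} from the preliminaries, applied to the SSP instance defined by the transition function $\optimisticP{m}$, cost function $c_m$, and policy $\optimisticpi{m}$. The key thing that must be checked is that $\optimisticpi{m}$ is a \emph{proper} policy in the MDP induced by $\optimisticP{m}$, so that the Bertsekas--Tsitsiklis fixed-point equations apply; once this is in place, the two displayed equations are literally the cost and time Bellman equations instantiated at the visited state $s_h^m$.

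Concretely, I would split into the two cases used throughout Section~\ref{sec:high-prob-unknown-P}. If $m$ is not the first interval of its episode, the learner follows the optimistic fast policy $\optimisticfastpolicy{e}$ with $\optimisticP{m} = \wt P^f_e$, and \cref{lem:bern-opt-val-bound} (invoked under the assumption that $\geventi{m}$ holds) gives $\optimisticpolicytime{m}(s) \le D < \infty$ for every $s \in S$, so the pair $(\optimisticpi{m}, \optimisticP{m})$ is proper. If $m$ is the first interval of episode $k$, we use the version of $\optimisticP{m}$ that has been tweaked as described in Section~\ref{sec:bern-analysis-notations}: from any state $s$ with $\optimisticpolicytime{\pi_k}_k(s) \ge D/\cmin$ the transition goes directly to $\ssink$ in expected time $D/\cmin$. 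Again by \cref{lem:bern-opt-val-bound} we get $\optimisticpolicytime{m}(s) \le D/\cmin < \infty$ for every $s \in S$, hence $(\optimisticpi{m}, \optimisticP{m})$ is proper on this modified instance.

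With properness established, I would invoke the standard SSP fixed-point result of \cite{bertsekas1991analysis}. Its conditions are satisfied here: by Assumption~\ref{ass:c-min} all instantaneous costs are at least $\cmin>0$, so any improper policy would incur infinite expected cost from some state, and we have just exhibited a proper policy $\optimisticpi{m}$. Therefore the cost-to-go $\optimisticctg{m}$ and the expected hitting time $\optimisticpolicytime{m}$ are the unique finite solutions to the corresponding Bellman equations, i.e.\
\begin{align*}
\optimisticctg{m}(s) &= \sum_{a \in A} \optimisticpi{m}(a \mid s) \Bigl( c_m(s,a) + \sum_{s' \in S} \optimisticP{m}(s' \mid s,a)\, \optimisticctg{m}(s') \Bigr), \\
\optimisticpolicytime{m}(s) &= 1 + \sum_{a \in A} \sum_{s' \in S} \optimisticpi{m}(a \mid s)\, \optimisticP{m}(s' \mid s,a)\, \optimisticpolicytime{m}(s'),
\end{align*}
for every $s \in S$. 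Specializing to $s = s_h^m$ yields exactly the claimed identities.

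The only real subtlety, which I view as the main obstacle, is the first-interval case with the tweaked $\optimisticP{m}$: one must be careful that the tweak does not break the properness argument or the validity of Bertsekas' theorem. This is fine because the tweak only \emph{reduces} expected hitting times (it routes ``bad'' states straight to $\ssink$), so the resulting MDP still has at least one proper policy and strictly positive costs, and the fixed-point theorem applies verbatim. Notably, we do not need $\optimisticP{m}$ to lie in the confidence set for tweaked states; the Bellman equations are purely a statement about the SSP $(\optimisticP{m}, c_m)$ under the policy $\optimisticpi{m}$, independent of how this pair was constructed.
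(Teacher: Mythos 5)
Your proposal is correct and follows essentially the same route as the paper: split into the optimistic-fast-policy case and the first-interval (OMD) case, establish properness of $(\optimisticpi{m},\optimisticP{m})$ from the boundedness of $\optimisticpolicytime{m}$ (via \cref{lem:bern-opt-val-bound} and the tweaked transition function), and then read off the standard SSP Bellman equations. Your closing remark about the tweak is exactly the point the paper makes by noting that no changes were made to $\optimisticP{m}$ or $c_m$ at states that can actually be visited during the interval, since reaching a ``bad'' state ends it.
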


\begin{proof}
    For the optimistic fast policy $\optimisticfastpolicy{e}$ the Bellman equations hold for every $s \in S$ since it is proper w.r.t $\wt P^f_e$ (see \cite{cohen2020ssp}, Lemma B.11).
    When $\optimisticpi{m}$ is the policy chosen by OMD $\pi_k$, reaching a state $s$ such that $q^{P_k,\pi_k}(s) = 0$ will end the interval (since we set $\optimisticpolicytime{\pi_k}_k(s) = D/\cmin$ for these states).
    Thus, it suffices to show that the Bellman equations hold for all states in $\{ s \in S : q^{P_k,\pi_k}(s) > 0 \}$.
    
    For these states we have that $\optimisticpolicytime{m}$ is bounded by $D/\cmin$ and therefore $\optimisticpi{m}$ is proper w.r.t $\optimisticP{m}$ and the Bellman equations hold.
    Note that we did not make changes to $\optimisticP{m}$ or $c_m$ in states that can be visited during the interval.
\end{proof}

\subsection{Regret decomposition}

\begin{lemma}
    \label{lem:bern-reg-decomp}
    It holds that
    \[
        \tregret{M} 
        \le 
        \sum_{m=1}^\numintervals \tregret{m}^1 + \sum_{m=1}^\numintervals \tregret{m}^2 - \sum_{k=1}^K \ctg{\pi^\star}_k(\sinit) + \alpha \frac{D |S|^2 |A|}{\cmin^2} \log \frac{D |S| |A|}{\delta \cmin},
    \]
    where
    \begin{align*}
        \tregret{m}^1 
        & = 
        \bigl( \optimisticctg{m}(s_1^m) - \optimisticctg{m}(s_{\Hm+1}^m) \bigr) \indgeventi{m} 
        \\
        \tregret{m}^2 
        & = 
        \sum_{h=1}^{\Hm} \Biggl( \optimisticctg{m}(s_{h+1}^m) - \sum_{a \in A} \sum_{s' \in S} \optimisticpi{m}(a \mid s_h^m) \optimisticP{m}(s' \mid s_h^m,a) \optimisticctg{m}(s') \Biggr) \indgeventi{m}.
    \end{align*}
\end{lemma}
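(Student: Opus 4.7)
My plan is to peel the cost sum in $\tregret{M}$ interval by interval and apply the Bellman equation of \cref{lem:bellman-optimistic} inside each interval, then handle the pathological intervals (the singleton intervals that start at an unknown state) separately via a counting argument.

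Concretely, first I would split the sum over intervals $m=1,\dots,M$ into those where $\optimisticpi{m}$ is a genuine policy (namely, $\pi_k$ in the first interval of episode $k$ or the optimistic fast policy $\optimisticfastpolicy{e}$ in subsequent intervals) and those ``exploration'' intervals of length $\Hm=1$ in which the learner is forced to play the least-played action at an unknown state. For the exploration intervals, only one action is played, $c_m \in [0,1]$, and each state becomes known after at most $|A| \cdot \alpha \numvisitsuntilknownbern$ visits as unknown, so the total cost contributed by such intervals is at most $\alpha |S| |A| \numvisitsuntilknownbern = \alpha \frac{D |S|^2 |A|}{\cmin^2} \log \frac{D|S||A|}{\delta \cmin}$, matching the additive term in the statement.

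For every remaining interval $m$ with $\indgeventi{m}=1$, I would invoke \cref{lem:bellman-optimistic} at each step $h=1,\dots,\Hm$ to write
\begin{align*}
    \sum_{a \in A}\optimisticpi{m}(a\mid s_h^m) c_m(s_h^m,a)
    &=
    \optimisticctg{m}(s_h^m)
    - \sum_{a \in A}\sum_{s' \in S}\optimisticpi{m}(a\mid s_h^m)\optimisticP{m}(s'\mid s_h^m,a)\optimisticctg{m}(s').
\end{align*}
Summing this identity over $h$ and adding and subtracting $\sum_{h=1}^{\Hm}\optimisticctg{m}(s_{h+1}^m)$, the first piece telescopes into $\optimisticctg{m}(s_1^m)-\optimisticctg{m}(s_{\Hm+1}^m)$, which is exactly $\tregret{m}^1$, while the remaining piece is exactly $\tregret{m}^2$. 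Summing over $m$ and subtracting $\sum_k \ctg{\pi^\star}_k(\sinit)$ recovers the stated bound, where the inequality (rather than equality) absorbs the exploration-interval contribution into the additive constant.

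The one nontrivial point, and the step I would double-check carefully, is the legitimacy of invoking the Bellman equation in the first interval of an episode. There, $\optimisticP{m}$ has been tweaked (as described in \cref{sec:bern-analysis-notations}) at states $s$ with $\optimisticpolicytime{\pi_k}_k(s) \ge D/\cmin$ or with $q^{P_k,\pi_k}(s)=0$, and a priori this tweaked $\optimisticP{m}$ need not belong to the epoch's confidence set. The key observation, which \cref{lem:bellman-optimistic} already formalizes, is that any such ``bad'' state terminates the interval, so none of the $s_h^m$ for $h=1,\dots,\Hm$ lies in the tweaked region; thus the Bellman identity we use only queries $\optimisticP{m}$ and $\optimisticctg{m}$ at untouched states and the argument goes through cleanly. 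Beyond this subtlety, the proof is a straightforward telescoping calculation.
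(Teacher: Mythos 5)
Your proposal is correct and follows essentially the same route as the paper's proof: account separately for the cost of visits to unknown states (bounded by $\alpha |S| |A| \cdot \frac{D|S|}{\cmin^2}\log\frac{D|S||A|}{\delta\cmin}$), then rewrite each per-step expected cost via the Bellman equations of \cref{lem:bellman-optimistic}, add and subtract $\optimisticctg{m}(s_{h+1}^m)$, and telescope within each interval. Your flagged subtlety about the tweaked $\optimisticP{m}$ in the first interval of an episode is exactly the point the paper defers to \cref{lem:bellman-optimistic}, so nothing is missing.
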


\begin{proof}
    First we have a cost of at most $1$ every time we visit an unknown state.
    Each state becomes known after $\alpha |A| \numvisitsuntilknownbern$ visits, and therefore the total cost from these visits is at most $\alpha |S| |A| \numvisitsuntilknownbern$.
    From now on we will ignore visits to unknown states throughout the analysis because we calculated their contribution to the total cost.
    
    We can use the Bellman equations w.r.t $\optimisticP{m}$ (\cref{lem:bellman-optimistic}) to have the following interpretation of the costs for every interval $m$ and time $h$:
    \begin{align}
        \nonumber
        \sum_{a \in A} \optimisticpi{m}(a \mid s_h^m) & c_m(s_h^m, a) \indgeventi{m}
        = 
        \\
        \nonumber
        & =
        \Biggl( \optimisticctg{m}(s_h^m) - \sum_{a \in A} \sum_{s' \in S} \optimisticpi{m}(a \mid s_h^m) \optimisticP{m}(s' \mid s_h^m,a) \optimisticctg{m}(s') \Biggr) \indgeventi{m} 
        \\
        \nonumber
        &= 
        \Biggl( \optimisticctg{m}(s_h^m) -  \optimisticctg{m}(s_{h+1}^m) \Biggr) \indgeventi{m}
        \\
        \label{eq:cost-formula}
        & \quad + 
        \Biggl( \optimisticctg{m}(s_{h+1}^m) - \sum_{a \in A} \sum_{s' \in S} \optimisticpi{m}(a \mid s_h^m) \optimisticP{m}(s' \mid s_h^m,a) \optimisticctg{m}(s') \Biggr) \indgeventi{m}.
    \end{align}
    We now write 
    $
        \tregret{M}
        =
        \sum_{m=1}^\numintervals \sum_{h=1}^{\Hm} \sum_{a \in A} \optimisticpi{m}(a \mid s_h^m) c_m(s_h^m, a) \indgeventi{m} 
        - 
        \sum_{k=1}^K \ctg{\pi^\star}_k(\sinit),
    $ 
    and substitute for each cost using \cref{eq:cost-formula} to get the lemma, noting that the first term telescopes within the interval.
\end{proof}

\begin{lemma}
    \label{lem:bern-first-reg-term-bound}
    It holds that
    \[
        \sum_{m=1}^\numintervals \tregret{m}^1 \le 
        2 D |S| |A| \log \totaltime + \alpha \frac{D^2 |S|^2 |A|}{\cmin^2} \log \frac{D |S| |A|}{\delta \cmin} + \sum_{k=1}^K \optimisticctg{\pi_k}_k(\sinit) \indgeventi{m(k)},
    \]
    where $m(k)$ is the first interval of episode $k$.
\end{lemma}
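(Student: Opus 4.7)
The plan is to classify each interval by the policy it runs: a \emph{main} interval (the first interval of episode $k$, running $\pi_k$), a \emph{type-B} interval (running the optimistic fast policy $\optimisticfastpolicy{e}$), or a \emph{type-C} interval (the length-one forced-exploration interval that begins at an unknown state). By the convention of \cref{sec:bern-analysis-notations}, the cost of type-C intervals is already extracted as the additive term $\alpha \frac{D|S|^2|A|}{\cmin^2}\log\frac{D|S||A|}{\delta\cmin}$ inside \cref{lem:bern-reg-decomp}, so I take their contribution to $\tregret{m}^1$ to be zero.

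For a main interval $m(k)$, the identity $\optimisticctg{m(k)}(s_1^{m(k)}) = \optimisticctg{\pi_k}_k(\sinit)$ holds by construction, so $\tregret{m(k)}^1 = \optimisticctg{\pi_k}_k(\sinit)\,\indgeventi{m(k)} - \optimisticctg{m(k)}(s_{\Hm+1}^{m(k)})\,\indgeventi{m(k)}$. For a type-B interval, \cref{lem:bern-opt-val-bound} gives $\optimisticctg{m}(\cdot)\le D$ on $\geventi{m}$, hence $\tregret{m}^1 \le D\,\indgeventi{m}$ after dropping the non-positive second term. Summing these two bounds naively would leave an unwanted $DK$ contribution coming from the type-B intervals that open immediately after a main interval ends at a bad state; the rest of the argument absorbs that contribution by exploiting the negative term of $\tregret{m(k)}^1$.

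The key cancellation works as follows. When $m(k)$ ends because $\pi_k$ reaches a bad state $s^{*}$, i.e.\ a state with $\optimisticpolicytime{\pi_k}_k(s^{*})\ge D/\cmin$, the tweak of $\optimisticP{m(k)}$ from \cref{sec:bern-analysis-notations} is built precisely so that $\optimisticctg{m(k)}(s^{*}) = D$, making the negative term equal to exactly $-D\,\indgeventi{m(k)}$. The next interval $m(k)+1$ is a type-B interval starting at $s^{*}$, and because no action is taken between the two intervals (the bad-state check is performed \emph{before} picking an action) the epoch does not change, so $\indgeventi{m(k)+1} = \indgeventi{m(k)}$. Thus the upper bound $+D\,\indgeventi{m(k)+1}$ on the type-B contribution is canceled by $-D\,\indgeventi{m(k)}$, and the total net contribution from the pair is at most $\optimisticctg{\pi_k}_k(\sinit)\,\indgeventi{m(k)}$.

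It then suffices to count the type-B intervals whose start is \emph{not} a bad-state switch from a main interval. Such an interval is opened either by an epoch change, and the standard doubling argument bounds the number of epoch changes by $|S||A|\log_2 \totaltime \le 2|S||A|\log \totaltime$, or by the termination of a preceding type-C interval, whose count is at most the total number of visits to unknown states $|S|\cdot|A|\cdot\alpha\numvisitsuntilknownbern = \alpha \frac{D|S|^2|A|}{\cmin^2}\log\frac{D|S||A|}{\delta\cmin}$ (a state becomes known after $\alpha|A|\numvisitsuntilknownbern$ visits). Multiplying each count by the $D$ bound per interval yields exactly the two remaining terms $2D|S||A|\log \totaltime$ and $\alpha \frac{D^2|S|^2|A|}{\cmin^2}\log\frac{D|S||A|}{\delta\cmin}$, completing the bound. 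The main obstacle is bookkeeping rather than any deep inequality: one must verify that the epoch is preserved across a bad-state switch (so the indicator-matched cancellation is rigorous), that each type-B interval is charged to exactly one start trigger so nothing is counted twice, and that the type-C convention of zero contribution is consistent with the separate $\alpha \frac{D|S|^2|A|}{\cmin^2}\log\frac{D|S||A|}{\delta\cmin}$ term already sitting inside \cref{lem:bern-reg-decomp}.
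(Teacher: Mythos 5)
Your decomposition is essentially the paper's telescoping argument reorganized by interval type, and the central cancellation you identify (the tweak forcing $\optimisticctg{m(k)}(s^{*}) = D$ at a bad-state switch, absorbing the $+D$ of the following fast-policy interval) is exactly the paper's case for that transition. The counts for epoch changes ($2|S||A|\log \totaltime$) and unknown-state visits ($\alpha\frac{D|S|^2|A|}{\cmin^2}\log\frac{D|S||A|}{\delta\cmin}$, times $D$ each) also match. However, there is a genuine gap: your classification and your counting both omit the \emph{artificial} intervals. Recall that for the analysis an interval is also terminated at the first step $H$ where $\sum_{h=1}^{H}\sum_{a}\optimisticpi{m}(a\mid s_h^m)c_m(s_h^m,a)\ge D/\cmin$, and the sum $\sum_{m=1}^{\numintervals}\tregret{m}^1$ ranges over these intervals too. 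An artificial split produces a successor interval running the \emph{same} policy: if it splits a fast-policy interval, the successor is a type-B interval opened by neither an epoch change nor a type-C termination nor a bad-state switch, so your claim that every such interval has one of those three triggers is false; since you drop the negative term and charge $+D$ to each type-B interval, and the number of artificial splits is controlled only by $\cmin \wt C_M/D$ (of order $K$), this injects an extra $\Theta(DK)$ into the bound. Worse, if the split occurs inside the first interval of an episode, the successor still runs $\pi_k$ and fits none of your three classes; its starting value $\optimisticctg{m+1}(s_1^{m+1})$ is only bounded by $D/\cmin$, not $D$.

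The fix is the one the paper uses: for a pair of consecutive intervals separated by an artificial split, do not drop the negative term; instead use $\optimisticctg{m+1}=\optimisticctg{m}$, $\geventi{m+1}=\geventi{m}$ and $s_1^{m+1}=s_{\Hm+1}^m$ to conclude that $\optimisticctg{m+1}(s_1^{m+1})\indgeventi{m+1}-\optimisticctg{m}(s_{\Hm+1}^m)\indgeventi{m}=0$, i.e., the telescoping continues across the split with zero net contribution. In other words, you must retain the $-\optimisticctg{m}(s_{\Hm+1}^m)\indgeventi{m}$ terms of non-main intervals for cancellation (as you already do for the bad-state switch) rather than discarding them, and only pay $+D$ at transitions where the policy genuinely changes to the optimistic fast policy, which are exactly the epoch-change and unknown-state events you counted.
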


\begin{proof}
    For every two consecutive intervals $m,m+1$ we have one of the following:
    \begin{enumerate}
        \item If interval $m$ ended in the goal state then
        $
            \optimisticctg{m}(s_{\Hm +1}^m) 
            =
            \optimisticctg{m}(\ssink) = 0
        $ 
        and
        $
            \optimisticctg{m+1}(s_1^{m+1})
            =
            \optimisticctg{m(k)}(\sinit) 
            \le
            \optimisticctg{\pi_k}_k(\sinit)
        $, where $m+1$ is the first interval of episode $k$.
        Therefore,
        \[
            \optimisticctg{m+1}(s_1^{m+1}) \indgeventi{m+1}  
            - 
            \optimisticctg{m}(s_{\Hm + 1}^m) \indgeventi{m} 
            \le
            \optimisticctg{\pi_k}_k(\sinit) \indgeventi{m(k)}.
        \]
        This happens at most $K$ times, once for every value $k$.
        
        \item If interval $m$ ended since the sum of expected costs in the interval passed $D/\cmin$, then we did not change policy.
        Thus, $\optimisticctg{m} = \optimisticctg{m+1}$, $\geventi{m}=\geventi{m+1}$ and $s_1^{m+1} = s_{\Hm +1}^m$. We get
        \[
            \optimisticctg{m+1}(s_1^{m+1}) \indgeventi{m+1}  
            - 
            \optimisticctg{m}(s_{\Hm +1}^m) \indgeventi{m}
            =
            0.
        \]
        
        \item If interval $m$ ended by reaching an unknown state, then we switch policy. Thus,
        \[
            \optimisticctg{m+1}(s_1^{m+1}) \indgeventi{m+1}  
            - 
            \optimisticctg{m}(s_{\Hm +1}^m) \indgeventi{m}
            \le 
            \optimisticctg{m+1}(s_1^{m+1}) \indgeventi{m+1} 
            \le 
            D,
        \]
        where the last inequality follows because we switched to the optimistic fast policy and thus its expected time will be bounded by $D$ if $P$ is in the confidence set (see \cref{lem:bern-opt-val-bound}).
        This happens at most $|S| |A| \alpha \numvisitsuntilknownbern$ times.
        
        Here we ignored the unknown state (since we accounted for its cost in \cref{lem:bern-reg-decomp}) and jumped right to the next interval, which is controlled by the optimistic fast policy.
        
        \item If interval $m$ ended with doubling the visits to some state-action pair, then similarly to the previous article,
        \[
            \optimisticctg{m+1}(s_1^{m+1}) \indgeventi{m+1}  
            - 
            \optimisticctg{m}(s_{\Hm +1}^m) \indgeventi{m}
            \le 
            \optimisticctg{m+1}(s_1^{m+1}) \indgeventi{m+1} 
            \le 
            D.
        \]
        This happens at most $2 |S| |A| \log \totaltime$.
        
        \item If $m$ is the first interval of an episode $k$ and it ended because we reached a ``bad'' state then $\optimisticctg{m}(s^m_{\Hm+1}) = D$ and $\optimisticctg{m+1}(s^{m+1}_1) \le D$ since this is the optimistic fast policy. Thus,
        \[
            \optimisticctg{m+1}(s_1^{m+1}) \indgeventi{m+1}  
            - 
            \optimisticctg{m}(s_{\Hm +1}^m) \indgeventi{m}
            \le
            0.
        \]
    \end{enumerate}
\end{proof}

\begin{lemma}
    \label{lem:optimistic-val-func-diff-to-expected}
    With probability at least $1 - \delta / 6$,  the following holds for all $M = 1,2,\ldots$ simultaneously.
    \begin{align*}
        \sum_{m=1}^\numintervals \tregret{m}^2
        \le
        \sum_{m=1}^\numintervals \bbE \bigl[ \tregret{m}^2 \mid \trajconcat{m-1} \bigr]
        +
        \frac{6 D}{\cmin} \sqrt{\numintervals \log \frac{4 \numintervals}{\delta}},
    \end{align*}
    where $\bbE [\cdot \mid \trajconcat{m-1}]$ is the expectation conditioned on the trajectories up to interval $m$.
\end{lemma}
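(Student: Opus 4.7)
The plan is to recognize $Y_m := \tregret{m}^2 - \bbE[\tregret{m}^2 \mid \trajconcat{m-1}]$ as a martingale difference sequence with respect to the filtration generated by the concatenated trajectories $\{\trajconcat{m}\}_{m \ge 0}$. Since each interval is contained in a single epoch, the indicator $\indgeventi{m}$ is $\trajconcat{m-1}$-measurable, so $Y_m$ is well defined as a centered random variable. I would then apply the Hoeffding--Azuma inequality to $\sum_m Y_m$ and take a union bound over $M$.

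The main technical step is a uniform bound of the form $|\tregret{m}^2| \le O(D/\cmin)$ when $\indgeventi{m} = 1$ (it is identically zero otherwise). The naive argument, bounding each summand by $2D/\cmin$ via \cref{lem:bern-opt-val-bound} and using the artificial-interval length bound $\Hm \le 2D/\cmin^2$, only gives the far too loose $O(D^2/\cmin^3)$. The improvement comes from the Bellman identity of \cref{lem:bellman-optimistic}: I would rewrite $\sum_a \sum_{s'} \optimisticpi{m}(a \mid s_h^m)\optimisticP{m}(s' \mid s_h^m,a)\optimisticctg{m}(s')$ as $\optimisticctg{m}(s_h^m) - \sum_a \optimisticpi{m}(a \mid s_h^m)c_m(s_h^m,a)$, which collapses $\tregret{m}^2$ into the telescoping form
\[
\bigl[\optimisticctg{m}(s_{\Hm+1}^m) - \optimisticctg{m}(s_1^m)\bigr]\indgeventi{m} + \sum_{h=1}^{\Hm} \sum_a \optimisticpi{m}(a \mid s_h^m) c_m(s_h^m,a) \indgeventi{m}.
\]
The first bracket is bounded by $D/\cmin$ by \cref{lem:bern-opt-val-bound}, while the cost sum is at most $D/\cmin + 1$ by the artificial-interval cutoff rule (the running cost is below $D/\cmin$ before the last step, and the last step contributes at most $1$). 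Hence $|\tregret{m}^2| \le 3D/\cmin$ and consequently $|Y_m| \le 6D/\cmin$.

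Armed with this uniform bound, the concentration step is routine: a Hoeffding--Azuma application at confidence $\delta/(6M(M+1))$, combined with the union bound $\sum_M 1/(M(M+1)) = 1$, produces a deviation of order $(D/\cmin)\sqrt{M\log(M/\delta)}$ simultaneously for all $M \ge 1$, matching the statement up to the constants absorbed into the logarithm. The only real obstacle is the potential blow-up in $|\tregret{m}^2|$ from long intervals; the Bellman telescoping above neutralizes it cleanly, so the rest of the argument is standard.
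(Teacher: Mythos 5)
Your proposal is correct and follows essentially the same route as the paper: both identify $\tregret{m}^2$ minus its conditional expectation as a bounded martingale difference, use the Bellman identity of \cref{lem:bellman-optimistic} to telescope the interval sum into $\optimisticctg{m}(s_{\Hm+1}^m)-\optimisticctg{m}(s_1^m)$ plus the accumulated expected cost, bound the result by $3D/\cmin$ via \cref{lem:bern-opt-val-bound} and the artificial-interval cutoff, and conclude with an anytime Azuma inequality. The only cosmetic difference is that you carry out the union bound over $M$ by hand, whereas the paper invokes its packaged anytime version (\cref{thm:anytime-azuma}).
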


\begin{proof}
    Consider the martingale difference sequence $(Y^m)_{m=1}^\infty$ defined by $Y^m = X^m - \bbE[X^m \mid \trajconcat{m-1}]$ and
    \[
        X^m 
        = 
        \sum_{h=1}^{\Hm} \Biggl( \optimisticctg{m}(s_{h+1}^m) - \sum_{a \in A} \sum_{s' \in S} \optimisticpi{m}(a \mid s_h^m) \optimisticP{m}(s' \mid s_h^m,a) \optimisticctg{m}(s') \Biggr) \indgeventi{m}.
    \]
    The Bellman equations of $\optimisticpi{m}$ w.r.t $\optimisticP{m}$ (\cref{lem:bellman-optimistic}) obtain
    \begin{align*}
        |X^m|
        & = 
        \biggl| \biggl( \underbrace{\optimisticctg{m}(s_{\Hm +1}^m) - \optimisticctg{m}(s_{1}^m)}_{\le D/\cmin} + 
        \\
        & \qquad +
        \underbrace{\sum_{h=1}^{\Hm} \optimisticctg{m}(s_h^m) - \sum_{a \in A} \sum_{s' \in S} \optimisticpi{m}(a \mid s_h^m) \optimisticP{m}(s' \mid s_h^m,a) \optimisticctg{m}(s')}_{= \sum_{h=1}^{\Hm} \sum_{a \in A} \optimisticpi{m}(a \mid s_h^m) c_m(s_h^m,a)} \biggr) \indgeventi{m} \biggr|
        \\
        & \le
        \frac{D}{\cmin} + \sum_{h=1}^{\Hm} \sum_{a \in A} \optimisticpi{m}(a \mid s_h^m) c_m(s_h^m,a)
        \le
        \frac{3D}{\cmin}
    \end{align*}
    where for the first inequality we used \cref{lem:bern-opt-val-bound,lem:bellman-optimistic}, and the last inequality follows because the cost in every interval is at most $2D/\cmin$.
    
    Therefore, we use anytime Azuma inequality (\cref{thm:anytime-azuma}) to obtain that with probability at least $1-\delta / 6$:
    \begin{align*}
        \sum_{m=1}^M X^m 
        \le
        \sum_{m=1}^M \bbE \bigl[ X^m \mid \trajconcat{m-1} \bigr] 
        +
        \frac{6 D}{\cmin} \sqrt{M \log \frac{4 M}{\delta}}. \qquad 
        \label{eq:bern-azuma-deviations}
    \end{align*}
\end{proof}

\subsection{Bounding the variance within an interval}

\begin{lemma}[\cite{cohen2020ssp}, Lemma B.13]
    \label{lem:relbernstein} 
    Denote $A^m(s,a) = \frac{\log (|S| |A| N_+^{e(m)}(s,a) / \delta)}{ N_+^{e(m)}(s,a)}$, where $e(m)$ is the epoch that interval $m$ belongs to.
    When $\geventi{m}$ holds we have for any $(s,a,s') \in S \times A \times S^+$:
    \[
        \bigl| P (s' \mid s,a) - \optimisticP{m} (s' \mid s,a) \bigr|
        \le
        8 \sqrt{P(s' \mid s,a) A^m(s,a)}
        +
        136 A^m(s,a).
    \]
\end{lemma}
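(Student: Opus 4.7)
}
The plan is to reduce the statement to the definition of the confidence set, applied twice, and then to pass from the empirical transition $\bar P_{e(m)}$ to the true transition $P$ on the right-hand side. For brevity write $e = e(m)$, $A = A^m(s,a)$, $P = P(s'\mid s,a)$, $\wt P = \optimisticP{m}(s'\mid s,a)$ and $\bar P = \bar P_e(s'\mid s,a)$. By assumption $\geventi{m}$ holds, so $P$ lies in the confidence set of epoch $e$, which by definition gives
\[
|P - \bar P| \le 4\sqrt{\bar P \cdot A} + 28 A.
\]
Moreover, $\optimisticP{m}$ was chosen from the same confidence set (it is the transition component of an occupancy measure in $\wtboundedocset{D/\cmin}{e}$, or the optimistic-fast transition $\wt P^f_e$, both of which are by construction in the confidence set), so the identical bound holds for $|\wt P - \bar P|$. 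Applying the triangle inequality yields
\[
|P - \wt P| \;\le\; 8\sqrt{\bar P \cdot A} + 56 A.
\]

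The second step is to replace $\bar P$ by $P$ on the right-hand side. From $|P - \bar P| \le 4\sqrt{\bar P A} + 28 A$ we obtain $\bar P \le P + 4\sqrt{\bar P A} + 28 A$, i.e., writing $x = \sqrt{\bar P}$ and $y = \sqrt{A}$,
\[
x^2 - 4xy - (P + 28 y^2) \le 0.
\]
Solving this quadratic in $x$ gives
\[
\sqrt{\bar P} \;\le\; 2\sqrt A + \sqrt{P + 32 A} \;\le\; \sqrt P + (2 + \sqrt{32})\sqrt A \;\le\; \sqrt P + 8\sqrt A,
\]
where I used $\sqrt{a+b}\le\sqrt a+\sqrt b$ and $2+\sqrt{32}<8$. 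Multiplying by $\sqrt A$ and then by the factor $8$ from the triangle-inequality step,
\[
8\sqrt{\bar P\cdot A} \;\le\; 8\sqrt{P\cdot A} + 64 A.
\]
Substituting back into the triangle-inequality estimate gives $|P-\wt P|\le 8\sqrt{P\cdot A}+120 A\le 8\sqrt{P\cdot A}+136 A$, which is the claim (the slack in the constant $136$ is just to absorb the exact value of $2+\sqrt{32}$).

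I do not anticipate a serious obstacle here: the only place where some care is needed is the quadratic manipulation that upgrades the bound from $\bar P$ on the RHS to $P$, because we have to avoid circular reasoning (one might be tempted to use $\bar P \le 1$, which would inflate the dependence on $A$ instead of yielding the desired $\sqrt{P\cdot A}$ scaling). The quadratic-inequality argument above cleanly circumvents this and gives a bound in terms of the true $P$, which is what the analysis in \cref{sec:unknown-P-proofs} needs to later sum the variance terms against $\sum_{s'} P(s'\mid s,a) = 1$ when bounding the regret decomposition.
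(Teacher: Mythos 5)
Your proof is correct. Note that the paper does not actually prove this lemma itself: it imports it verbatim from \cite{cohen2020ssp} (their Lemma B.13), and your derivation is exactly the standard argument behind that result --- triangle inequality through the empirical estimate $\bar P_e$ (using that both $P$, on the event $\geventi{m}$, and $\optimisticP{m}$, by construction, lie in the confidence set), followed by the quadratic self-bounding step that converts $\sqrt{\bar P A}$ into $\sqrt{P A}$ at the cost of an additive $O(A)$ term; your constants ($120 \le 136$) check out. The only caveat worth flagging is one the paper itself acknowledges elsewhere: for the first interval of an episode $\optimisticP{m}$ is tweaked at ``bad'' states and is no longer in the confidence set there, so the bound as you derive it holds only at states that can actually be visited during the interval, which is all the analysis ever needs.
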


\begin{lemma}
    \label{lem:sum-bern-bounds}
    Denote $A_h^m = A^m(s_h^m,a_h^m)$.
    For every interval $m$ it holds that,
    \begin{align*}
        \bbE [ \tregret{m}^2 \mid \trajconcat{m-1} ] 
        & \le 
        16 \bbE \Biggl[ 
        \sum_{h=1}^{\Hm} \sqrt{|S| \bbV_h^m A_h^m} \indgeventi{m} 
        \biggm| \trajconcat{m-1} \Biggr]
        + 
        272 \bbE \Biggl[ \sum_{h=1}^{\Hm} \frac{D}{\cmin} |S| A_h^m \indgeventi{m} 
        \biggm| \trajconcat{m-1} \Biggr],
    \end{align*}
    where $\bbV_h^m$ is the empirical variance defined as
    \[
        \bbV_h^m 
        = 
        \sum_{s' \in S^+} P(s' \mid s_h^m,a_h^m) \Biggl(\optimisticctg{m}(s') - \mu_h^m \Biggr)^2,
    \]
    and $\mu_h^m = \sum_{a \in A} \sum_{s' \in S^+} \optimisticpi{m}(a \mid s_h^m) P(s' \mid s_h^m,a) \optimisticctg{m}(s')$.
\end{lemma}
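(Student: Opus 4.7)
The plan is to take the conditional expectation inside the summand of $\tregret{m}^2$, exploit the fact that $s_{h+1}^m$ is drawn from the \emph{true} transition $P(\cdot\mid s_h^m,a_h^m)$ while the subtracted term uses $\optimisticP{m}$, and then invoke the Bernstein-style bound of Lemma~\ref{lem:relbernstein} together with Cauchy--Schwarz to produce the variance $\bbV_h^m$. Since $\optimisticpi{m}, \optimisticP{m}, \optimisticctg{m}$ and $\indgeventi{m}$ are all determined by $\trajconcat{m-1}$, I may treat them as fixed throughout.

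First, because $a_h^m\sim\optimisticpi{m}(\cdot\mid s_h^m)$ and $s_{h+1}^m\sim P(\cdot\mid s_h^m,a_h^m)$, the tower property gives
\[
    \bbE\bigl[\optimisticctg{m}(s_{h+1}^m) \bigm| s_h^m,\trajconcat{m-1}\bigr]
    =
    \sum_{a\in A}\optimisticpi{m}(a\mid s_h^m)\sum_{s'\in S^+} P(s'\mid s_h^m,a)\,\optimisticctg{m}(s'),
\]
so the conditional expectation of the $h$-th summand of $\tregret{m}^2$ equals
\[
    \indgeventi{m}\sum_{a\in A}\optimisticpi{m}(a\mid s_h^m)\sum_{s'\in S^+}\bigl(P-\optimisticP{m}\bigr)(s'\mid s_h^m,a)\,\optimisticctg{m}(s').
\]
Now I apply the zero-sum trick: since $\sum_{s'}(P-\optimisticP{m})(s'\mid s_h^m,a)=0$ for every $a$, I may replace $\optimisticctg{m}(s')$ by $\optimisticctg{m}(s')-\mu_h^m$ at no cost, where $\mu_h^m$ depends only on $s_h^m$.

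On the event $\geventi{m}$, Lemma~\ref{lem:relbernstein} bounds each entrywise gap by $8\sqrt{P(s'\mid s_h^m,a)\,A_h^m}+136\,A_h^m$, where I write $A_h^m=A^m(s_h^m,a_h^m)$ abusively inside the expectation. I split the bound into its two additive pieces. For the square-root piece, Cauchy--Schwarz over $s'\in S^+$ yields
\[
    \sum_{s'\in S^+}\sqrt{P(s'\mid s_h^m,a)}\,|\optimisticctg{m}(s')-\mu_h^m|
    \le
    \sqrt{|S^+|}\,\sqrt{\sum_{s'\in S^+}P(s'\mid s_h^m,a)\bigl(\optimisticctg{m}(s')-\mu_h^m\bigr)^2},
\]
and the inner radicand is exactly $\bbV_h^m$ when $a=a_h^m$. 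For the linear piece I use that by Lemma~\ref{lem:bern-opt-val-bound} the cost-to-go $\optimisticctg{m}$ is bounded by $D/\cmin$ (since $c_m\le1$ and $\optimisticpolicytime{m}\le D/\cmin$ in either case), so $|\optimisticctg{m}(s')-\mu_h^m|\le D/\cmin$, giving $\sum_{s'}A_h^m|\optimisticctg{m}(s')-\mu_h^m|\le|S^+|\,A_h^m\,D/\cmin$.

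Combining and taking the outer conditional expectation over $(s_h^m,a_h^m)$ — which turns the per-$a$ average under $\optimisticpi{m}(\cdot\mid s_h^m)$ into an expectation with $a=a_h^m$ — and finally summing over $h\le\Hm$, I obtain the stated inequality after absorbing $|S^+|\le2|S|$ into the constants $16$ and $272$. The main technical obstacle is not any single step but the correct choice of centering $\mu_h^m$ (independent of $a$) that makes the variance $\bbV_h^m$ appear after Cauchy--Schwarz while remaining compatible with the zero-sum cancellation; everything else is a routine application of Lemma~\ref{lem:relbernstein} and the $\optimisticctg{m}\le D/\cmin$ bound from Lemma~\ref{lem:bern-opt-val-bound}.
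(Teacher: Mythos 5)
Your proposal is correct and follows essentially the same route as the paper's proof: reduce $\bbE[\tregret{m}^2\mid\trajconcat{m-1}]$ to the $(P-\optimisticP{m})$ gap weighted by $\optimisticctg{m}$, center by $\mu_h^m$ using that both are distributions over $S^+$, apply Lemma~\ref{lem:relbernstein}, and use Cauchy--Schwarz/Jensen over $s'$ together with $\optimisticctg{m}\le D/\cmin$ and $|S^+|\le 2|S|$. The only cosmetic difference is that the paper justifies dropping the $\optimisticctg{m}(s_{h+1}^m)$ terms via the optional stopping theorem applied to the martingale difference sequence $Z_h^m$ (using that $\Hm$ is a stopping time bounded by $2D/\cmin^2$), whereas you do a per-step tower-property argument; these are equivalent here since $\{h\le\Hm\}$ is measurable with respect to the history up to step $h$ and $\Hm$ is bounded.
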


\begin{proof}
    Denote
    \begin{align*}
        X^m 
        & = 
        \sum_{h=1}^{\Hm} \Biggl( \optimisticctg{m}(s_{h+1}^m) - \sum_{a \in A} \sum_{s' \in S} \optimisticpi{m}(a \mid s_h^m) \optimisticP{m}(s' \mid s_h^m,a) \optimisticctg{m}(s') \Biggr) \indgeventi{m}
        \\
        Z_h^m
        & =
        \Biggl( \optimisticctg{m}(s_{h+1}^m) - \sum_{a \in A} \sum_{s' \in S} \optimisticpi{m}(a \mid s_h^m) P(s' \mid s_h^m,a) \optimisticctg{m}(s') \Biggr) \indgeventi{m}.
    \end{align*}
    Think of the interval as an infinite stochastic process, and note that, conditioned on $\trajconcat{m-1}$,
    $
        \bigl(Z_h^m \bigr)_{h=1}^\infty
    $
    is a martingale difference sequence w.r.t $(\traj{h})_{h=1}^\infty$, where $\traj{h}$ is the trajectory of the learner from the beginning of the interval and up to and including time $h$. This holds since, by conditioning on $\trajconcat{m-1}$, $\geventi{m}$ is determined and is independent of the randomness generated during the interval. 
    
    Note that $\Hm$ is a stopping time with respect to $(Z_h^m)_{h=1}^\infty$ which is bounded by $2D / \cmin^2$. 
    Hence by the optional stopping theorem
    $
        \bbE [ \sum_{h=1}^{\Hm} Z_h^m \mid \trajconcat{m-1}] 
        = 
        0
    $,
    which gets us
    \begin{align*}
        \bbE & [X^m \mid \trajconcat{m-1} ]
        =
        \\
        & =
        \bbE \Biggl[ \sum_{h=1}^{\Hm} \Biggl( \optimisticctg{m}(s_{h+1}^m) - \sum_{a \in A} \sum_{s' \in S} \optimisticpi{m}(a \mid s_h^m) \optimisticP{m}(s' \mid s_h^m,a) \optimisticctg{m}(s') \Biggr) \indgeventi{m} \mid \trajconcat{m-1} \Biggr]
        \\
        & =
        \bbE \Biggl[ \sum_{h=1}^{\Hm} Z_h^m \mid \trajconcat{m-1} \Biggr]
        +
        \bbE \Biggl[ \sum_{h=1}^{\Hm} \sum_{a \in A} \sum_{s' \in S} \bigl( P(s' \mid s_h^m,a) - \optimisticP{m}(s' \mid s_h^m,a) \bigr) \optimisticpi{m}(a \mid s_h^m) \optimisticctg{m}(s') \indgeventi{m} \mid \trajconcat{m-1} \Biggr]
        \\
        & =
        \bbE \Biggl[ \sum_{h=1}^{\Hm} \sum_{a \in A} \sum_{s' \in S} \bigl( P(s' \mid s_h^m,a) - \optimisticP{m}(s' \mid s_h^m,a) \bigr) \optimisticpi{m}(a \mid s_h^m) \optimisticctg{m}(s') \indgeventi{m} \mid \trajconcat{m-1} \Biggr].
    \end{align*}
    
    Furthermore, we have
    \begin{align*}
        \bbE \Biggl[ & \sum_{h=1}^{\Hm} \sum_{a \in A} \sum_{s' \in S} \bigl( P(s' \mid s_h^m,a) - \optimisticP{m}(s' \mid s_h^m,a) \bigr) \optimisticpi{m}(a \mid s_h^m) \optimisticctg{m}(s') \indgeventi{m} \mid \trajconcat{m-1} \Biggr] =
        \\
        & =
        \bbE \Biggl[ \sum_{h=1}^{\Hm} \sum_{a \in A} \sum_{s' \in S^+} \bigl( P(s' \mid s_h^m,a) - \optimisticP{m}(s' \mid s_h^m,a) \bigr) \optimisticpi{m}(a \mid s_h^m) \optimisticctg{m}(s') \indgeventi{m} \mid \trajconcat{m-1} \Biggr]
        \\
        & =
        \bbE \Biggl[ \sum_{h=1}^{\Hm} \sum_{s' \in S^+} \bigl( P(s' \mid s_h^m,a_h^m) - \optimisticP{m}(s' \mid s_h^m,a_h^m) \bigr) \optimisticctg{m}(s') \indgeventi{m} \mid \trajconcat{m-1} \Biggr]
        \\
        & =
        \bbE \Biggl[ \sum_{h=1}^{\Hm} \sum_{s' \in S^+} \bigl( P(s' \mid s_h^m,a_h^m) - \optimisticP{m}(s' \mid s_h^m,a_h^m) \bigr)
        \Biggl(\optimisticctg{m}(s') - \mu_h^m \Biggr) \indgeventi{m} \mid \trajconcat{m-1} \Biggr]
        \\
        & \le
        \bbE \Biggl[ 8 \sum_{h=1}^{\Hm} \sum_{s' \in S^+} \sqrt{A_h^m P(s' \mid s_h^m,a_h^m) \Biggl(\optimisticctg{m}(s') - \mu_h^m \Biggr)^2} \indgeventi{m} \mid \trajconcat{m-1} \Biggr]
        \\
        & \qquad +
        \bbE \Biggl[136 \sum_{h=1}^{\Hm} \sum_{s' \in S^+} A_h^m \Biggl| \optimisticctg{m}(s') - \mu_h^m \Biggr| \indgeventi{m} \mid \trajconcat{m-1} \Biggr]
        \\  
        & \le
        \bbE \Biggl[ 16 \sum_{h=1}^{\Hm} \sqrt{ |S| \bbV_h^m A_h^m } \indgeventi{m} + 272 |S| \frac{D}{\cmin} A_h^m \indgeventi{m} \mid \trajconcat{m-1} \Biggr],
    \end{align*}
    where the first equality follows because $\optimisticctg{m}(\ssink)=0$ and the second by the definition of $a_h^m$.
    The third equality follows since  $P(\cdot \mid s_h^m, a_h^m)$ and $\optimisticP{m}(\cdot \mid s_h^m, a_h^m)$ are probability distributions over $S^+$ whence $\mu_h^m$ does not depend on $s'$.
    The first inequality follows from \cref{lem:relbernstein}, and the second inequality from Jensen's inequality, \cref{lem:bern-opt-val-bound}, $|S^+| \le 2 |S|$, and the definition of $\bbV_h^m$.
\end{proof}

The following lemma will help us bound the variance within an interval, and it follows by the fact that known states were visited many times so our estimation of the transition function in these states is relatively accurate.

\begin{lemma}[\cite{cohen2020ssp}, Lemma B.14]
    \label{lem:bern-known-state}
    Let $m$ be an interval and $s$ be a known state. 
    If $\geventi{m}$ holds then for every $a \in A$ and $s' \in S^+$,
    \[
        \bigl| \optimisticP{m} \bigl(s' \mid s, a \bigr)
        -
        P \bigl(s' \mid s, a \bigr) \bigr|
        \le
        \frac{1}{8} \sqrt{\frac{\cmin^2 \cdot P \bigl(s' \mid s, a \bigr)}{|S| D}} 
        + 
        \frac{\cmin^2}{4 |S| D}.
    \]
\end{lemma}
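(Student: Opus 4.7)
The plan is to reduce the statement to the Bernstein-type confidence bound of \cref{lem:relbernstein} and then exploit the large visit counts guaranteed by the ``known state'' definition. Under $\geventi{m}$, \cref{lem:relbernstein} asserts
\[
    |\optimisticP{m}(s' \mid s,a) - P(s' \mid s,a)| \le 8\sqrt{P(s' \mid s,a)\, A^m(s,a)} + 136\, A^m(s,a),
\]
so it suffices to prove $A^m(s,a) \le \cmin^2/(C|S|D)$ for an absolute constant $C$ as large as one wishes. Specifically, $C \ge 64^2$ turns the square-root term into at most $\tfrac{1}{8}\sqrt{\cmin^2 P(s' \mid s,a)/(|S|D)}$, and $C \ge 544$ turns the linear term into at most $\cmin^2/(4|S|D)$, together matching the lemma's right-hand side exactly.

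To bound $A^m(s,a)$, I would use that once $s$ is known, $N_+^{e(m)}(s,a) \ge \Phi := \alpha \cdot \numvisitsuntilknownbern$ for every action $a$. Because $x \mapsto \log(|S||A|x/\delta)/x$ is nonincreasing for $x$ above a small absolute threshold (which $\Phi$ easily clears), this gives
\[
    A^m(s,a) \;=\; \frac{\log(|S||A|\,N_+^{e(m)}(s,a)/\delta)}{N_+^{e(m)}(s,a)} \;\le\; \frac{\log(|S||A|\Phi/\delta)}{\Phi}.
\]
Expanding the numerator via $\log(ab)=\log a + \log b$, one sees that $\log(|S||A|\Phi/\delta) = O(\log(D|S||A|/(\delta \cmin)))$ up to a lower-order $\log\log$ term. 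The denominator $\Phi$ carries a matching factor $\log(D|S||A|/(\delta\cmin))$, so the logarithms cancel and one is left with $A^m(s,a) \le (\text{const}/\alpha)\cdot \cmin^2/(|S|D)$. Taking $\alpha$ to be a sufficiently large absolute constant --- which is precisely the role of the $\alpha$ baked into the algorithm's definition of $\Phi$ --- forces $A^m(s,a) \le \cmin^2/(C|S|D)$ with the required $C$. Substituting back into the Bernstein bound and separating the two terms yields the claim.

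The only genuinely delicate point is the logarithm cancellation in the display above: a priori the visit count $N_+^{e(m)}(s,a)$ grows across epochs and can greatly exceed $\Phi$, so $\log N_+^{e(m)}(s,a)$ could in principle be large. The monotonicity of $\log(x)/x$ sidesteps this by letting us evaluate at $x=\Phi$; what then remains is the purely arithmetic check that $\log(|S||A|\Phi/\delta)$ (which hides a $\log\log$ term) is absorbed by the single $\log(D|S||A|/(\delta\cmin))$ factor inside $\Phi$. This bookkeeping fixes the absolute constant $\alpha$ once and for all and is the main place where care is needed.
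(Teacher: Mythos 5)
Your derivation is correct and follows exactly the route the cited source takes: the paper itself imports this lemma from \cite{cohen2020ssp} (Lemma B.14) without reproving it, and there it is obtained precisely by combining the Bernstein-type bound of \cref{lem:relbernstein} with the lower bound $N_+^{e(m)}(s,a) \ge \Omega(\alpha \cdot \numvisitsuntilknownbern)$ that the known-state definition guarantees (up to a factor of $2$ from the doubling-epoch schedule, absorbed into $\alpha$). Your constant bookkeeping ($C \ge 64^2$ for the square-root term, $C \ge 544$ for the linear term) checks out, so there is nothing to add.
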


Define $\mu^m(s) = \sum_{a \in A} \sum_{s' \in S^+} \optimisticpi{m}(a \mid s) P(s' \mid s,a) \optimisticctg{m}(s')$ and therefore $\mu_h^m = \mu^m(s_h^m)$.
Similarly, define $\bbV^m(s,a) = \sum_{s' \in S^+} P(s' \mid s,a) \Biggl(\optimisticctg{m}(s') - \mu^m(s) \Biggr)^2$ and therefore $\bbV_h^m = \bbV^m(s_h^m,a_h^m)$.
The next lemma bounds the variance within a single interval.

\begin{lemma}
    \label{lem:bounded-V-h-i}
    For any interval $m$ it holds that
    $
        \bbE \bigl[\sum_{h=1}^{\Hm} \bbV_h^m \indgeventi{m} \mid \trajconcat{m-1} \bigr]
        \le 
        64 D^2 / \cmin^2.
    $
\end{lemma}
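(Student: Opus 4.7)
The plan is to use a law-of-total-variance / telescoping argument on the process $\tilde J^m(s_h^m)$ along the interval, relying on the Bellman equations for $\tilde J^m$ under $\tilde P^m$ (Lemma~\ref{lem:bellman-optimistic}) and controlling the discrepancy between $\tilde P^m$ and $P$ on known states via Lemma~\ref{lem:bern-known-state} (all $s_h^m$ for $1\le h \le H^m$ are known by construction, since reaching an unknown state terminates the interval).

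First, I would average $\mathbb{V}_h^m$ over $a_h^m \sim \tilde\pi^m(\cdot\mid s_h^m)$. Since $\mu^m(s_h^m) = \mu_h^m$ is by definition the conditional mean of $\tilde J^m(s_{h+1}^m)$ given $s_h^m$ (under $\tilde\pi^m$ and the \emph{true} $P$), expanding the square gives the clean identity
\[
\bbE_{a_h^m}[\mathbb{V}_h^m \mid s_h^m] \;=\; \bbE[\tilde J^m(s_{h+1}^m)^2 \mid s_h^m] \;-\; \mu^m(s_h^m)^2.
\]
This already eliminates the cross term and is the key step that enables telescoping.

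Second, I would use the Bellman equation for $\tilde J^m$ under $(\tilde\pi^m, \tilde P^m)$, namely $\tilde J^m(s_h^m) = c^m(s_h^m) + \sum_a \tilde\pi^m(a\mid s_h^m)\sum_{s'}\tilde P^m(s'\mid s_h^m,a)\tilde J^m(s')$ with $c^m(s) = \sum_a \tilde\pi^m(a\mid s)c_m(s,a)$, together with Lemma~\ref{lem:bern-known-state} to show that $\mu^m(s_h^m) \approx \tilde J^m(s_h^m) - c^m(s_h^m)$ up to a small error on known states. Substituting, using $a^2 - b^2 \le (a-b+b)^2 - b^2$ style algebra, and telescoping yields
\[
\sum_{h=1}^{H^m}\!\bigl[\bbE[\tilde J^m(s_{h+1}^m)^2\mid s_h^m] - \mu^m(s_h^m)^2\bigr] \;\le\; \tilde J^m(s_{H^m+1}^m)^2 - \tilde J^m(s_1^m)^2 \;+\; 2\sum_h \tilde J^m(s_h^m)c^m(s_h^m) \;+\; \text{err}.
\]
I would then bound the telescoped boundary term by $(D/\cmin)^2$ using Lemma~\ref{lem:bern-opt-val-bound}, and the cross term by $2\cdot(D/\cmin)\cdot\sum_h c^m(s_h^m) \le 2\cdot(D/\cmin)\cdot (2D/\cmin) = 4D^2/\cmin^2$, where the last step uses the artificial-interval stopping rule that caps the sum of expected per-step costs inside an interval at $2D/\cmin$. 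Taking the outer expectation and applying the tower rule to pass to $\bbE[\,\cdot\mid \trajconcat{m-1}]$ gives the claimed $O(D^2/\cmin^2)$ bound; slack in the constants accommodates the final factor of $64$.

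The main obstacle I foresee is controlling the cumulative ``err'' term from the $\tilde P^m\ne P$ discrepancy: the per-step error from Lemma~\ref{lem:bern-known-state} must be leveraged very carefully because an interval can be as long as $2D/\cmin^2$, so a naive triangle-inequality bound on $|\mu^m(s) - (\tilde J^m(s)-c^m(s))|$ summed step-by-step is too loose. The fix is to exploit the specific $\cmin^2/(|S|D)$ calibration of the confidence radius in Lemma~\ref{lem:bern-known-state}, together with Cauchy--Schwarz on $\sum_{s'}\sqrt{P(s'\mid s,a)}\,\tilde J^m(s')$ and the variance-aware Bernstein form, so that the accumulated error over the interval remains of order $D^2/\cmin^2$ rather than blowing up with the interval length. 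Only after this cancellation do all four pieces combine to give the stated $64 D^2/\cmin^2$.
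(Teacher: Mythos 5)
Your proposal is correct and rests on the same pillars as the paper's proof --- the Bellman equations under $\optimisticP{m}$ (\cref{lem:bellman-optimistic}), the calibrated known-state bound of \cref{lem:bern-known-state}, the caps $\optimisticctg{m} \le D/\cmin$ and $\sum_h c^m(s_h^m) \le 2D/\cmin$ coming from \cref{lem:bern-opt-val-bound} and the artificial-interval rule, and a final self-bounding inequality --- but the decomposition is genuinely different. The paper first converts the sum of conditional variances into the expected square of the martingale sum $\sum_h Z_h^m$ via the optional-stopping identity of \cref{lem:martingalevariance}, then bounds $\bigl|\sum_h Z_h^m\bigr|$ pathwise by three terms (a telescoping sum of $\optimisticctg{m}$, the Bellman/cost term, and the $P$-versus-$\optimisticP{m}$ discrepancy) and squares at the end; you instead telescope $\optimisticctg{m}(\cdot)^2$ directly through the law of total variance, which keeps everything at the level of second moments, avoids the lossy $(a+b)^2 \le 2(a^2+b^2)$ step, and yields somewhat better constants. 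The two routes are interchangeable here; the paper's has the minor advantage that the pathwise bound on $\sum_h Z_h^m$ is essentially recycled from the proof of \cref{lem:optimistic-val-func-diff-to-expected}. One point where your sketch must be made precise: when controlling the discrepancy term, the difference $\optimisticP{m}(\cdot \mid s,a) - P(\cdot \mid s,a)$ has to be paired with the \emph{centered} values $\optimisticctg{m}(s') - \mu_h^m$ (legitimate because both are probability distributions over $S \cup \{\ssink\}$, so the constant $\mu_h^m$ drops out of the inner product) before applying \cref{lem:bern-known-state} and Cauchy--Schwarz; pairing it with $\optimisticctg{m}(s')$ itself, as written, gives $O(\sqrt{D})$ per step rather than $O(\sqrt{\cmin^2 \bbV_h^m / D})$, and after multiplying by $\optimisticctg{m}(s_h^m) \le D/\cmin$ and summing over an interval of length up to $2D/\cmin^2$ the accumulated error is of order $D^{5/2}/\cmin^3$, which overwhelms the target $D^2/\cmin^2$. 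With that centering in place (this is exactly the ``variance-aware'' step you allude to), the self-bounding inequality closes as you describe and the lemma follows.
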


\begin{proof}
    Denote
    \[
        Z_h^m 
        = 
        \Biggl( \optimisticctg{m}(s_{h+1}^m) - \sum_{a \in A} \sum_{s' \in S} \optimisticpi{m}(a \mid s_h^m) P(s' \mid s_h^m,a) \optimisticctg{m}(s') \Biggr) \indgeventi{m},
    \]
    and think of the interval as an infinite stochastic process. Note that, conditioned on $\trajconcat{m-1}$,
    $
        \bigl(Z_h^m \bigr)_{h=1}^\infty
    $
    is a martingale difference sequence w.r.t $(\traj{h})_{h=1}^\infty$, where $\traj{h}$ is the trajectory of the learner from the beginning of the interval and up to time $h$ and including. This holds since, by conditioning on $\trajconcat{m-1}$, $\geventi{m}$ is determined and is independent of the randomness generated during the interval. 
    Note that $\Hm$ is a stopping time with respect to $(Z_h^m)_{h=1}^\infty$ which is bounded by $2D / \cmin^2$. 
    Therefore, applying \cref{lem:martingalevariance} obtains
    \begin{equation} 
        \label{eq:timeslotvariance}
        \bbE \Biggl[\sum_{h=1}^{\Hm} \bbV_h^m \ind\{\geventi{m}\} \mid \trajconcat{m-1} \Biggr] 
        =
        \bbE \Biggl[\Biggl(\sum_{h=1}^{\Hm} Z_h^m \ind\{\geventi{m}\} \Biggr)^2  \mid \trajconcat{m-1} \Biggr].
    \end{equation}
    We now proceed by bounding $|\sum_{h=1}^{\Hm} Z_h^m |$ when $\geventi{m}$ occurs.
    Therefore,
    \begin{align}
        \nonumber 
        \Biggl| & \sum_{h=1}^{\Hm} Z_h^m \Biggr|
        =
        \Biggl|\sum_{h=1}^{\Hm} \optimisticctg{m}(s_{h+1}^m) - \sum_{a \in A} \sum_{s' \in S} \optimisticpi{m}(a \mid s_h^m) P(s' \mid s_h^m,a) \optimisticctg{m}(s') \Biggr| 
        \\
        \label{eq:bern-var-bound-telescope}
        & \le
        \Biggl|\sum_{h=1}^{\Hm} \optimisticctg{m}(s_{h+1}^m) - \optimisticctg{m}(s_{h}^m) \Biggr| 
        \\
        \label{eq:bern-var-bound-bellman}
        & \qquad + 
        \Biggl|\sum_{h=1}^{\Hm} \optimisticctg{m}(s_{h}^m) - \sum_{a \in A} \sum_{s' \in S} \optimisticpi{m}(a \mid s_h^m) \optimisticP{m}(s' \mid s_h^m,a) \optimisticctg{m}(s') \Biggr| 
        \\
        \label{eq:bern-var-bound-err}
        & \qquad +
        \Biggl| \sum_{h=1}^{\Hm} \sum_{a \in A} \sum_{s' \in S^+} \optimisticpi{m}(a \mid s_h^m) \Bigl(  \optimisticP{m}(s' \mid s_h^m,a) - P(s' \mid s_h^m,a) \Bigr) \Bigl(\optimisticctg{m}(s') - \mu_h^m \Bigr) \Biggr|,
    \end{align}
    where \cref{eq:bern-var-bound-err} is given as $P(\cdot \mid s_h^m, a)$ and $\optimisticP{m}(\cdot \mid s_h^m, a)$ are probability distributions over $S^+$, $\mu_h^m$ is constant w.r.t $s'$, and $\optimisticctg{m}(\ssink) = 0$.
    
    We now bound each of the three terms above individually.
    \cref{eq:bern-var-bound-telescope} is a telescopic sum that is at most $D/\cmin$ on $\geventi{m}$ (\cref{lem:bern-opt-val-bound}).
    For \cref{eq:bern-var-bound-bellman}, we use the Bellman equations for $\optimisticpi{m}$ w.r.t $\optimisticP{m}$ (\cref{lem:bellman-optimistic}) thus it is at most $2D/\cmin$
    (see proof of \cref{lem:optimistic-val-func-diff-to-expected}).
    For \cref{eq:bern-var-bound-err}, recall that all states at times $h=1,\ldots,\Hm$ are known by definition of $\Hm$. Hence by \cref{lem:bern-known-state}, 
    \begin{align*}
        \Biggl| \sum_{s' \in S^+} \Bigl(  P(s' \mid s_h^m,a) - \optimisticP{m}(s' \mid s_h^m,a) \Bigr) \Bigl(\optimisticctg{m}(s') - \mu_h^m \Bigr) \Biggr| 
        & \le
        \frac{1}{8} \sum_{s' \in S^+} \sqrt{\frac{\cmin^2 P(s' \mid s_h^m,a) \Bigl(\optimisticctg{m}(s') - \mu_h^m \Bigr)^2}{|S| D}}
        \\
        & \qquad +
        \sum_{s' \in S^+} \frac{\cmin^2}{4 |S| D} \underbrace{\Bigl|\optimisticctg{m}(s') - \mu_h^m \Bigr|}_{\le D/\cmin}
        \\
        & \le
        \frac{1}{4} \sqrt{\frac{\cmin^2 \bbV^m(s_h^m,a)}{D}}
        + \frac{\cmin}{2},
    \end{align*}
    where the last inequality follows from Jensen's inequality and because $|S^+| \le 2|S|$.
    Therefore,
    \begin{align*}
        \Biggl| \sum_{a \in A} & \sum_{s' \in S^+} \optimisticpi{m}(a \mid s_h^m) \Bigl(  P(s' \mid s_h^m,a) - \optimisticP{m}(s' \mid s_h^m,a) \Bigr) \Bigl(\optimisticctg{m}(s') - \mu_h^m \Bigr) \Biggr| 
        \le
        \\
        & \le
        \sum_{a \in A} \optimisticpi{m}(a \mid s_h^m) \biggl( \frac{1}{4} \sqrt{\frac{\cmin^2 \bbV^m(s_h^m,a)}{D}}
        + \frac{\cmin}{2} \biggr)
        \\
        & \le
        \frac{1}{4} \sqrt{\frac{\cmin^2 \sum_{a \in A} \optimisticpi{m}(a \mid s_h^m) \bbV^m(s_h^m,a)}{D}}
        + \frac{\cmin}{2},
    \end{align*}
    where the last inequality follows again from Jensen's inequality.
    We use Jensen's inequality one last time to obtain
    \begin{align*}
        \sum_{h=1}^{\Hm} & \frac{1}{4} \sqrt{\frac{\cmin^2 \sum_{a \in A} \optimisticpi{m}(a \mid s_h^m) \bbV^m(s_h^m,a)}{D}}
        +
        \sum_{h=1}^{\Hm} \frac{\cmin}{2}
        \le
        \\
        & \le \frac{1}{4} \sqrt{\Hm \sum_{h=1}^{\Hm} \frac{\cmin^2 \sum_{a \in A} \optimisticpi{m}(a \mid s_h^m) \bbV^m(s_h^m,a)}{D}}
         + \frac{\cmin \Hm}{2}
         \\
         & \le
         \frac{1}{2} \sqrt{\sum_{h=1}^{\Hm} \sum_{a \in A} \optimisticpi{m}(a \mid s_h^m) \bbV^m(s_h^m,a)}
         + \frac{D}{\cmin},
    \end{align*}
    where we used the fact that $\Hm \le 2D/\cmin^2$.
    
    Plugging these bounds back into \cref{eq:timeslotvariance} gets us
    \begin{align*}
        \bbE \Biggl[\sum_{h=1}^{\Hm} \bbV_h^m \indgeventi{m} \biggm| \trajconcat{m-1} \Biggr] \nonumber
        & \le
        \bbE \Biggl[
        \Biggl(
        \frac{4D}{\cmin}
        + 
        \frac{1}{2} \sqrt{\sum_{h=1}^{\Hm} \sum_{a \in A} \optimisticpi{m}(a | s_h^m) \bbV^m(s_h^m,a) \indgeventi{m}}
        \Biggr)^2 \biggm| \trajconcat{m-1} \Biggr] 
        \\
        &\le
        \frac{32 D^2}{\cmin^2}
        +
        \frac{1}{2} \bbE \Biggl[\sum_{h=1}^{\Hm} \sum_{a \in A} \optimisticpi{m}(a | s_h^m) \bbV^m(s_h^m,a) \indgeventi{m} \biggm| \trajconcat{m-1} \Biggr]
        \\
        & =
        \frac{32 D^2}{\cmin^2}
        +
        \frac{1}{2} \bbE \Biggl[\sum_{h=1}^{\Hm} \bbV_h^m \indgeventi{m} \biggm| \trajconcat{m-1} \Biggr], 
    \end{align*}
    where the second inequality is by the elementary inequality $(a+b)^2 \le 2(a^2 + b^2)$, and the last equality is by definition of $a_h^m$ and $\bbV_h^m$.
    Rearranging gets us $\bbE \bigl[\sum_{h=2}^{\Hm} \bbV_h^m \indgeventi{m} \mid \trajconcat{m-1} \bigr] \le 64 D^2/\cmin^2$, and the lemma follows.
\end{proof}

\begin{lemma}
    \label{lem:sum-bern-bounds-cont}
    With probability at least $1 - \delta / 6$,  the following holds for all $M = 1,2,\ldots$ simultaneously.
    \begin{align*}
        \sum_{m=1}^\numintervals \bbE [\tregret{m}^2 \mid \trajconcat{m-1}] 
        & \le
        573 \frac{D|S|}{\cmin} \sqrt{\numintervals |A| \log^2 \frac{\totaltime |S| |A|}{\delta}}
        + 
        5440 \frac{D}{\cmin} |S|^2 |A| \log^2 \frac{\totaltime |S| |A|}{\delta}.
    \end{align*}
\end{lemma}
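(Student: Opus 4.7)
\textbf{Proof proposal for \cref{lem:sum-bern-bounds-cont}.}

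The plan is to combine \cref{lem:sum-bern-bounds} with \cref{lem:bounded-V-h-i} via a double application of Cauchy--Schwarz, then control the resulting sum of confidence radii $A_h^m$ through the standard epoch-doubling pigeonhole count, and finally pass from conditional expectations to realized sums using an anytime martingale concentration. By \cref{lem:sum-bern-bounds} it suffices to bound the two sums
\[
    \Xi_1 \eqdef \sum_{m=1}^M \bbE\Bigl[\textstyle\sum_{h=1}^{H^m} \sqrt{|S|\bbV_h^m A_h^m}\,\indgeventi{m}\Bigm|\trajconcat{m-1}\Bigr],
    \quad
    \Xi_2 \eqdef \sum_{m=1}^M \bbE\Bigl[\textstyle\sum_{h=1}^{H^m} A_h^m\, \indgeventi{m}\Bigm|\trajconcat{m-1}\Bigr],
\]
which, multiplied by the $16$ and $272 D|S|/\cmin$ prefactors from \cref{lem:sum-bern-bounds}, will yield respectively the $\sqrt{M|A|}$ and the additive terms of the target bound.

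For $\Xi_1$, I first use Cauchy--Schwarz within a single interval to write $\sum_h \sqrt{|S| \bbV_h^m A_h^m} \le \sqrt{|S|}\sqrt{\sum_h \bbV_h^m}\,\sqrt{\sum_h A_h^m}$, then apply conditional Cauchy--Schwarz inside the expectation, and finally Cauchy--Schwarz across $m$. This gives $\Xi_1 \le \sqrt{|S|\, \Sigma_{\bbV}\, \Xi_2}$ where $\Sigma_{\bbV} = \sum_m \bbE[\sum_h \bbV_h^m \indgeventi{m} \mid \trajconcat{m-1}]$. By \cref{lem:bounded-V-h-i}, $\Sigma_{\bbV} \le 64 M D^2/\cmin^2$, so the whole analysis reduces to bounding $\Xi_2$.

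To bound $\Xi_2$, I would pass to the pathwise sum $S_A = \sum_m \sum_h A_h^m \indgeventi{m}$ using the martingale differences $Y_m - \bbE[Y_m\mid\trajconcat{m-1}]$ with $Y_m = \sum_h A_h^m \indgeventi{m}$. On $\geventi{m}$ each summand is bounded by $\log(|S||A|T/\delta)$, so Freedman's inequality combined with a dyadic union bound over $M$ yields $\Xi_2 \le 2 S_A + \tO(1)$ uniformly in $M$. The realized $S_A$ is then controlled by the familiar epoch-doubling count: for each $(s,a)$, summing $\log(|S||A|n/\delta)/n$ over $n$ up to $N(s,a)$ gives $O(\log^2(T|S||A|/\delta))$ (the extra log coming from $N(s,a)$ doubling $O(\log T)$ times), so $S_A = O(|S||A|\log^2(T|S||A|/\delta))$. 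Plugging back, $\Xi_1 = \tO(\tfrac{D|S|}{\cmin}\sqrt{M|A|})$ and the second term of \cref{lem:sum-bern-bounds} contributes $\tO(\tfrac{D|S|^2|A|}{\cmin})$, matching the claim with the explicit constants $573$ and $5440$ once all sub-log factors are tracked.

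The main obstacle I expect is the uniform-in-$M$ concentration step: the per-interval totals $Y_m$ are only bounded by the artificial interval length $2D/\cmin^2$, which is too crude for vanilla Azuma. The remedy is Freedman's inequality, whose variance proxy can itself be bounded by $\log(|S||A|T/\delta)\cdot \Xi_2$, giving a self-bounding inequality that can be inverted to absorb the deviation into the target bound; the same device handles the uniformity over $M$ via a standard doubling trick.
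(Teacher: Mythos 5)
Your overall architecture matches the paper's: the double Cauchy--Schwarz reduction to the sum $\Xi_2$ of confidence radii, the per-interval variance bound from \cref{lem:bounded-V-h-i}, a multiplicative (Freedman-type) martingale bound to pass from conditional expectations to the realized sum, and the epoch-doubling pigeonhole for the pathwise sum are exactly the steps the paper takes via \cref{lem:sum-bern-bounds}, \cref{lem:bern-sum-visits}, \cref{lem:martingalte-multiplicative-bound} and \cref{lem:sum-inverse-visit-count-bound}. The gap is in the concentration step for $\Xi_2$ --- precisely the step you flag as the main obstacle. Your remedy asserts that the Freedman variance proxy is bounded by $\log(|S||A|\totaltime/\delta)\cdot\Xi_2$, which requires the \emph{per-interval total} $Y_m=\sum_{h=1}^{\Hm}A_h^m\indgeventi{m}$ to be bounded by $\log(|S||A|\totaltime/\delta)$; the justification you give (``each summand is bounded by $\log(|S||A|\totaltime/\delta)$'') only controls individual terms, while an interval can contain up to $2D/\cmin^2$ of them. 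With the crude bound $Y_m\le (2D/\cmin^2)\log(|S||A|\totaltime/\delta)$ that you fall back on, the self-bounding inversion leaves an additive error of order $(D/\cmin^2)\log(\cdot)$ in $\Xi_2$, which after multiplication by the $272D|S|/\cmin$ prefactor, and inside the $\sqrt{M\Xi_2}$ term, produces contributions of order $D^2|S|/\cmin^3$ and $(D^{3/2}/\cmin^2)\sqrt{|S|M}$ that are neither present in nor dominated by the stated bound.

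The missing ingredient is structural rather than probabilistic: every state visited during an interval is \emph{known}, because reaching an unknown state ends the interval by construction, so $N_+^{e(m)}(s_h^m,a_h^m)\ge\alpha\numvisitsuntilknownbern$ and hence $\sum_{h=1}^{\Hm}A_h^m\le\cmin^2\Hm/D\le 2$ (this is \cref{lem:sum-inverse-visit-count-in-interval-bound}). With $|Y_m|\le 2$, the multiplicative bound \cref{lem:martingalte-multiplicative-bound} gives $\Xi_2\le 2S_A+8\log(12M/\delta)$ uniformly over $M$, and the rest of your argument goes through essentially verbatim. So the proof is repairable, but as written the key inequality it rests on is unjustified, and the justified fallback yields a strictly weaker bound than the lemma claims.
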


\begin{proof}
    From \cref{lem:sum-bern-bounds} we have that 
    \begin{align*}
        \bbE [ \tregret{m}^2 \mid \trajconcat{m-1} ] 
        & \le 
        16 \bbE \Biggl[ 
        \sum_{h=1}^{\Hm} \sqrt{|S| \bbV_h^m A_h^m} \indgeventi{m} 
        \biggm| \trajconcat{m-1} \Biggr]
        + 
        272 \bbE \Biggl[ \sum_{h=1}^{\Hm} \frac{D}{\cmin} |S| A_h^m \indgeventi{m} 
        \biggm| \trajconcat{m-1} \Biggr],
    \end{align*}
    
    Moreover, by applying the Cauchy-Schwartz inequality twice, we get that
    \begin{align*}
        \bbE \Biggl[ \sum_{h=1}^{\Hm}  \sqrt{\bbV_h^m A_h^m} & \indgeventi{m} \biggm| \trajconcat{m-1} \Biggr]
        \le 
        \bbE \Biggl[ \sqrt{\sum_{h=1}^{\Hm} \bbV_h^m \indgeventi{m}} \cdot \sqrt{\sum_{h=1}^{\Hm} A_h^m \indgeventi{m}} \biggm| \trajconcat{m-1} \Biggr] 
        \\
        &\le
        \sqrt{\bbE \Biggl[ 
        \sum_{h=1}^{\Hm} A_h^m \indgeventi{m} \biggm| \trajconcat{m-1} \Biggr]} \cdot \sqrt{\bbE \Biggl[ 
        \sum_{h=1}^{\Hm} \bbV_h^m \indgeventi{m} \biggm| \trajconcat{m-1} \Biggr]} 
        \\
        &\le
        \frac{8D}{\cmin}
        \sqrt{\bbE \Biggl[ 
        \sum_{h=1}^{\Hm} A_h^m \indgeventi{m} \biggm| \trajconcat{m-1} \Biggr]},
    \end{align*}
    where the last inequality is by \cref{lem:bounded-V-h-i}.
    We sum over all intervals to obtain
    \begin{align*}
        \sum_{m=1}^\numintervals \bbE [\tregret{m}^2 \mid \trajconcat{m-1}] 
        & \le
        \frac{128 D}{\cmin} \sum_{m=1}^\numintervals \sqrt{|S| \Biggl[ 
        \sum_{h=1}^{\Hm} A_h^m \indgeventi{m} \biggm| \trajconcat{m-1} \Biggr]}
        +
        \frac{272 D |S|}{\cmin} \sum_{m=1}^\numintervals \Biggl[ 
        \sum_{h=1}^{\Hm} A_h^m \indgeventi{m} \biggm| \trajconcat{m-1} \Biggr]
        \\
        & \le
        \frac{128 D}{\cmin} \sqrt{\numintervals |S| \sum_{m=1}^\numintervals  \Biggl[ 
        \sum_{h=1}^{\Hm} A_h^m \indgeventi{m} \biggm| \trajconcat{m-1} \Biggr]}
        +
        \frac{272 D |S|}{\cmin} \sum_{m=1}^\numintervals \Biggl[ 
        \sum_{h=1}^{\Hm} A_h^m \indgeventi{m} \biggm| \trajconcat{m-1} \Biggr],
    \end{align*}
    where the last inequality follows from Jensen's inequality.
    We finish the proof using \cref{lem:bern-sum-visits} below.
\end{proof}

\begin{lemma}
    \label{lem:bern-sum-visits}
    With probability at least $1 - \delta / 6$, the following holds for $M=1,2,\ldots$ simultaneously.
    \[
        \sum_{m=1}^\numintervals \bbE \Biggl[ \sum_{h=1}^{\Hm} A_h^m \indgeventi{m} \mid \trajconcat{m-1} \Biggr]
        \le
        20 |S| |A| \log^2 \frac{\totaltime |S| |A|}{\delta}.
    \]
\end{lemma}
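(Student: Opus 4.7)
The plan is to establish the bound in two stages: a \emph{deterministic} bound on the realized sum $\sum_m \sum_h A_h^m \indgeventi{m}$ via the epoch counter-doubling structure, and a \emph{variance-aware} martingale concentration (Freedman-type) to transfer this to the conditional-expectation sum. Write $X_m = \sum_{h=1}^{\Hm} A_h^m \indgeventi{m}$. A key preliminary step is a uniform per-interval bound $X_m \le C := O(\log(\totaltime |S| |A|/\delta))$. This follows because the algorithm's inner while loop exits as soon as an unknown state is reached, so every state visited strictly before the last step of an interval is \emph{known}; there, every action has been played at least $\Phi = \alpha \numvisitsuntilknownbern$ times, so each term is at most $\log(|S| |A| \totaltime/\delta)/\Phi$, and since $\Hm \le 2 D/\cmin^2$ those contributions aggregate to $O(1/|S|)$. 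The single possibly-unknown final visit contributes at most $\log(|S| |A| \totaltime/\delta)$, giving $X_m \le C$.

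Second, I would bound the realized sum by the standard counter-doubling argument. Grouping visits by state-action pair and epoch, for each fixed $(s,a)$ and epoch $e$ the number of visits during $e$ is at most $N^e(s,a) + 1 \le 2 N_+^e(s,a)$, so the per-epoch contribution is at most $2 \log(|S| |A| \totaltime/\delta)$. Summing over the $O(\log \totaltime)$ epochs in which $(s,a)$ is visited (the counter doubles each such time) and then over the $|S| |A|$ pairs yields deterministically $\sum_{m=1}^{M} X_m \le O(|S| |A| \log^2(\totaltime |S| |A|/\delta))$.

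Finally, to lift this to the conditional expectation, I would apply an anytime Freedman inequality to the martingale $X_m - \bbE[X_m \mid \trajconcat{m-1}]$. The increments are bounded by $C$ and, using $X_m \le C$, the conditional second moments satisfy $\bbE[X_m^2 \mid \trajconcat{m-1}] \le C \cdot \bbE[X_m \mid \trajconcat{m-1}]$. With probability at least $1 - \delta/6$, simultaneously for all $M$, this yields
\[
    \sum_{m=1}^{M} \bbE[X_m \mid \trajconcat{m-1}] \le \sum_{m=1}^{M} X_m + O\Bigl(\sqrt{C \log(M/\delta) \sum_{m=1}^{M} \bbE[X_m \mid \trajconcat{m-1}]}\Bigr) + O(C \log(M/\delta)),
\]
which, solved as a quadratic in $S := \sum_m \bbE[X_m \mid \trajconcat{m-1}]$, gives $S \le 2 \sum_m X_m + O(C \log(M/\delta))$. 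Combining with the deterministic bound produces the stated $20 |S| |A| \log^2(\totaltime |S| |A|/\delta)$ after tuning constants. The main obstacle is precisely the per-interval bound $X_m = O(\log)$: it is not immediate from $\Hm \le 2D/\cmin^2$ alone and crucially exploits the interplay between forced exploration (at most one unknown visit per interval) and the known-state threshold $\Phi$. Without it, a naive Azuma step would contribute a $\sqrt{M}$ factor that is incompatible with the $M$-free right-hand side of the lemma.
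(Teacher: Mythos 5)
Your proposal is correct and follows essentially the same route as the paper: a per-interval bound on $X^m$ (the paper's \cref{lem:sum-inverse-visit-count-in-interval-bound}, which obtains the constant $2$ because unknown-state visits are segregated into their own length-one intervals and accounted for elsewhere), the deterministic counter-doubling bound $\sum_{e} n^e(s,a)/N_+^e(s,a) \le 2 \log T$ on the realized sum, and a multiplicative martingale inequality (\cref{lem:martingalte-multiplicative-bound}) to pass to the conditional expectations. Your derivation of the multiplicative bound from a Freedman-type inequality and your slightly more conservative $O(\log)$ per-interval bound are only cosmetic deviations from the paper's argument.
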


\begin{proof}
    Define the infinite sequence of random variables: $X^m = \sum_{h=1}^{\Hm} A_h^m \indgeventi{m}$ for which $|X^m| \le 2$ due to \cref{lem:sum-inverse-visit-count-in-interval-bound} below.
    We apply \cref{eq:anytime-bern-3} of \cref{lem:martingalte-multiplicative-bound} to obtain with probability at least $1 - \delta / 6$, for all $M =1,2,\ldots$ simultaneously
    \begin{align*}
        \sum_{m=1}^\numintervals \bbE \bigl[ X^m \mid \trajconcat{m-1} \bigr]
        \le
        2 \sum_{m=1}^\numintervals X^m
        +
        8 \log \frac{12 M}{\delta}.
    \end{align*}
    Now, we bound the sum over $X^m$ by rewriting it as a sum over epochs (since the confidence sets update only in the beginning of a new epoch):
    \[
        \sum_{m=1}^\numintervals X^m
        \le
        \sum_{m=1}^\numintervals \sum_{h=1}^{\Hm} \frac{\log(|S| |A| N_+^{e(m)}(s_{h}^m,a_{h}^m) / \delta)}{N_+^{e(m)}(s_{h}^m,a_{h}^m)}
        \le
        \log \frac{|S| |A| \totaltime}{\delta} \sum_{s \in S} \sum_{a \in A} \sum_{e=1}^E \frac{n^e(s,a)}{N_+^e(s,a)},
    \]
    where $n^e(s,a)$ is the number of visits to $(s,a)$ during epoch $e$. 
    From \cref{lem:sum-inverse-visit-count-bound} below we have that for every $(s,a) \in S \times A$,
    \[
        \sum_{e=1}^{E} \frac{n^e(s,a)}{N_+^e(s,a)}
        \le
        2 \log N_{E+1} (s,a)
        \le
        2 \log T.
    \]
    We now plugin the resulting bound for $\sum_{m=1}^M X^m$ and simplify the acquired expression by using $M \le T$.
\end{proof}

\begin{lemma} 
    \label{lem:sum-inverse-visit-count-in-interval-bound}
    For any interval $m$,
    $
        | \sum_{h=1}^{\Hm} A_h^m | \le 2.
    $
\end{lemma}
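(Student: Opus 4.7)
The plan is to establish the bound by a direct calculation that first rewrites the sum on a per state-action basis. The starting observation is that the interval $m$ lies entirely within a single epoch $e=e(m)$, since by construction one of the events that terminates an interval is the start of a new epoch. Consequently the counts $N_+^{e(m)}(s,a)$, and hence the values $A^m(s,a)$, are constant throughout interval $m$. Letting $n^m(s,a)$ denote the number of visits to $(s,a)$ during the interval, I would rewrite
\[
    \sum_{h=1}^{\Hm} A_h^m
    =
    \sum_{(s,a) \in S \times A} \frac{n^m(s,a)}{N_+^{e}(s,a)} \log \frac{|S||A| N_+^{e}(s,a)}{\delta}.
\]

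The key structural input is the epoch-doubling termination rule in SSP-O-REPS3: as soon as $n^e(s,a) \ge N^e(s,a)$ for some pair, the epoch — and hence the interval — ends. Therefore throughout interval $m$ every pair satisfies $n^m(s,a) \le N_+^{e(m)}(s,a)$, with equality possible only for the single pair whose doubling actually triggers the termination. This gives the uniform ratio control $n^m(s,a)/N_+^{e(m)}(s,a) \le 1$, and combined with the fact that at most one such pair saturates this bound in any given interval, it implements the ``one unit of doubling budget per interval'' intuition underlying the design.

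The hard part, and the main obstacle I expect in executing this plan, is translating the combinatorial control on $n^m(s,a)$ into the stated universal constant $2$ rather than a polylogarithmic factor; naively bounding the logarithm by $\log(|S||A|T/\delta)$ and summing over all visited pairs would cost an extra $\log$ per visit. The way to sharpen the argument is to match each visit contributing a term of order $1/N_+^e$ against a corresponding increment in the count $n^e(s,a)$, so that the interval-long sum telescopes via the doubling geometry: since the counts do not exceed the starting value $N_+^{e(m)}(s,a)$ before the interval ends, the per-pair accumulated contribution is at most $1$ (from one pair reaching saturation), and aggregating the harmonic-type tail across pairs using the doubling structure yields the additional factor of at most $2$. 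The logarithmic factor is then immediately reabsorbed in Lemma~\ref{lem:bern-sum-visits}, where this bound is used only to permit the multiplicative martingale concentration inequality of Lemma~\ref{lem:martingalte-multiplicative-bound}.
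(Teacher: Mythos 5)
Your approach has a genuine gap: the doubling structure alone cannot yield a constant bound, and your plan omits the two facts the paper's proof actually rests on. Your rewriting $\sum_h A_h^m = \sum_{(s,a)} \frac{n^m(s,a)}{N_+^e(s,a)}\log\frac{|S||A|N_+^e(s,a)}{\delta}$ is fine, and it is true that $n^m(s,a) \le N_+^{e(m)}(s,a)$ for every pair. But the epoch only terminates when \emph{one} pair doubles; every other visited pair can independently have $n^m(s,a)$ nearly equal to $N^e(s,a)$ without triggering anything, so the sum of ratios can be as large as $|S||A|$, each ratio carrying its own logarithmic factor. There is no telescoping or ``one unit of doubling budget per interval'' to exploit within a single interval --- that argument is exactly what gives the \emph{cross-epoch, per-pair} bound $\sum_e n^e(s,a)/N_+^e(s,a) \le 2\log N_{E+1}(s,a)$ of \cref{lem:sum-inverse-visit-count-bound}, which is logarithmic, not constant. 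Your own worry about ``a polylogarithmic factor'' is well-founded, and the sharpening you sketch does not resolve it.

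The paper's proof uses two different facts, neither of which appears in your plan. First, every state visited during an interval is \emph{known} (reaching an unknown state ends the interval), so $N_+^{e(m)}(s_h^m,a_h^m) \ge \alpha \numvisitsuntilknownbern$; since $\log(x)/x$ is decreasing, each individual term $A_h^m$ is at most roughly $\cmin^2/(D|S|)$ after the logarithms cancel. Second, the interval length satisfies $\Hm \le 2D/\cmin^2$, because the (artificial) intervals are also terminated once the accumulated expected cost reaches $D/\cmin$ and every step costs at least $\cmin$. Multiplying the per-step bound by the length bound gives $\sum_{h=1}^{\Hm} A_h^m \le \cmin^2 \Hm / D \le 2$. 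Without invoking the known-state visit threshold and the interval-length cap, no rearrangement of the doubling argument will produce the constant $2$.
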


\begin{proof}
    Note that all states during the interval are known.
    Hence, $N_+^{e(m)}(s_{h}^m,a_{h}^m) \ge \alpha \cdot \numvisitsuntilknownbern$. Therefore, since $\log(x)/x$ is decreasing and since $|A| \ge 2$ (otherwise the learner has no choices),
    \begin{align*}
        \sum_{h=1}^{\Hm} A^m_h
        =
        \sum_{h=1}^{\Hm} \frac{\log(|S| |A| N_+^{e(m)}(s_{h}^m,a_{h}^m) / \delta)}{N_+^{e(m)}(s_{h}^m,a_{h}^m)}
        \le
        \frac{\cmin^2 \Hm}{D} 
        \le 2.
    \end{align*}
\end{proof}

\begin{lemma}[\cite{cohen2020ssp}, Lemma B.18]
    \label{lem:sum-inverse-visit-count-bound}
    For any sequence of integers $z_1,\dots,z_n$ with $0 \leq z_k \leq Z_{k-1} := \max \{ 1 , \sum_{i=1}^{k-1} z_i \}$ and $Z_0 = 1$, it holds that
    \[
        \sum_{k=1}^n \frac{z_k}{Z_{k-1}} \leq 2 \log Z_n.
    \]
\end{lemma}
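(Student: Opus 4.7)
The plan is to bound each term $z_k/Z_{k-1}$ by a logarithmic increment and then telescope. The hypothesis $z_k \le Z_{k-1}$ ensures that the ratio $x_k := z_k/Z_{k-1}$ lies in $[0,1]$, which is precisely the range on which the elementary scalar inequality $x \le 2\ln(1+x)$ holds (both sides vanish at $x=0$, and the derivative of the right-hand side, $2/(1+x)$, is at least $1$ on $[0,1]$).

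Next, I would unpack the recursion. Once $\sum_{i<k} z_i \ge 1$, the $\max$ in the definition of $Z_{k-1}$ becomes the sum itself, so $Z_k = Z_{k-1} + z_k$ and hence $1 + x_k = Z_k/Z_{k-1}$. Plugging into the scalar inequality yields $z_k/Z_{k-1} \le 2 \ln (Z_k/Z_{k-1})$, and summing across $k$ gives the telescoped bound $\sum_{k=1}^n z_k/Z_{k-1} \le 2(\ln Z_n - \ln Z_0) = 2 \ln Z_n$ since $Z_0 = 1$.

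The only place requiring care is the initial regime in which the cumulative sum is still $0$ (so $Z_{k-1}=1$ by the $\max$ and the recursion $Z_k = Z_{k-1}+z_k$ need not yet hold). Since the $z_i$ are nonnegative integers, this regime consists of an initial run of zeros contributing nothing to the sum, so one can re-index and apply the telescoping argument starting from the first index where the cumulative sum becomes positive. No substantial obstacle is expected; the main conceptual content is the scalar inequality $x \le 2\ln(1+x)$, which is what forces the factor $2$ in the bound (one would get the tighter $\sum_k z_k/Z_k \le \ln Z_n$ under normalization by $Z_k$ via $\ln(1+x) \ge x/(1+x)$ instead).
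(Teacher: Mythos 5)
The paper never proves this lemma---it is imported verbatim from Cohen et al.\ (their Lemma B.18)---so there is no in-paper proof to compare against and I am judging your argument on its own. The two main ingredients (the scalar inequality $x \le 2\ln(1+x)$ on $[0,1]$ and the telescoping identity $1+x_k = Z_k/Z_{k-1}$ once the running sum is positive) are correct, but your treatment of the initial regime has a genuine gap. The set of indices where $Z_{k-1}=1$ is produced by the $\max$ rather than by the running sum does \emph{not} consist only of zeros: if $k_0$ is the first index with $z_{k_0}>0$, then $\sum_{i<k_0} z_i = 0$, so $Z_{k_0-1}=1$, the constraint forces $z_{k_0}=1$, and this term contributes a full $1$ to the left-hand side while $Z_{k_0} = \max\{1,1\} = 1 = Z_{k_0-1}$, so the telescoping registers no increase at this step (the identity you need, $1+x_{k_0}=Z_{k_0}/Z_{k_0-1}$, fails there: the left side is $2$, the right side is $1$). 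Re-indexing so that the telescoping starts after $k_0$, as you propose, therefore only yields $\sum_{k=1}^n z_k/Z_{k-1} \le 1 + 2\ln Z_n$, not $2\log Z_n$.

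This is not a blemish you can polish away, because the inequality as literally stated is false: for $z=(1)$ one has $\sum_k z_k/Z_{k-1} = 1$ while $Z_1 = 1$ and $2\log Z_1 = 0$ in any base, and for $z=(1,1)$ the left side is $2$ while $2\ln Z_2 = 2\ln 2 < 2$. The additive constant your argument produces is genuinely there, and the correct conclusion of your proof strategy is $\sum_k z_k/Z_{k-1} \le 2\log Z_n + 1$. This weaker form is all that is used downstream (in \cref{lem:bern-sum-visits} the bound is immediately relaxed to $2\log T$ and absorbed into logarithmic factors), so nothing in the paper breaks, but you should either state the corrected inequality or explicitly exclude the degenerate initial step rather than asserting that the pre-telescoping indices ``contribute nothing.''
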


\subsection{Proof of \texorpdfstring{\cref{thm:reg-bound-unknown-P}}{}}

\begin{theorem}[Restatement of \cref{thm:reg-bound-unknown-P}]
    Under \cref{ass:c-min}, running SSP-O-REPS3 with known SSP-diameter $D$ and $\eta = \sqrt{\frac{6 \log (D |S| |A| / \cmin)}{K}}$ ensures that, with probability at least $1 - \delta$,
    \[
        \regret
        \le
        O \biggl( \frac{D |S|}{\cmin} \sqrt{|A| K} \log \frac{K D |S| |A|}{\delta \cmin} + \frac{D^2 |S|^2 |A|}{\cmin^2} \log^2 \frac{K D |S| |A|}{\delta \cmin} \biggr)
        =
        \tO \biggl( \frac{D |S|}{\cmin} \sqrt{|A| K} \biggr),
    \]
    where the last equality holds for $K \ge D^2 |S|^2 |A| / \cmin^2$.
\end{theorem}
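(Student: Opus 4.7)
The plan is to stitch together the building blocks developed in the overview of \cref{sec:unknown-P-proofs} into a single high-probability bound. I will condition on the intersection of four good events: (i) \cref{lem:conf-set-hold} (confidence sets contain $P$ in every interval, costing $\delta/2$); (ii) the Azuma-type deviation bound of \cref{lem:optimistic-val-func-diff-to-expected} ($\delta/6$); (iii) the empirical-variance bound \cref{lem:sum-bern-bounds-cont} ($\delta/6$); and (iv) an application of the unbounded Azuma inequality to the learner's strategies $\sigma_k$ ($\delta/6$). For (iv), note that by \cref{lem:bern-opt-val-bound} together with the mid-interval switch to the optimistic fast policy in ``bad'' states, the forced-exploration mechanism in unknown states, and the epoch-ending reset, the expected hitting time of $\sigma_k$ from any state is at most $D/\cmin$, so \cref{lem:dev-from-exp-cost} applies and yields a deviation of $\tO(\frac{D}{\cmin}\sqrt{K})$. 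On this combined event the actual suffered cost in the first $K$ episodes is within $\tregret{M}$ of $\sum_k \ctg{\pi^\star}_k(\sinit)$ up to this $\tO(\frac{D}{\cmin}\sqrt{K})$ slack, where $M$ is the number of intervals consumed by those episodes; in particular, all $K$ episodes terminate.

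Next, I apply the decomposition of \cref{lem:bern-reg-decomp} to split $\tregret{M}$ into $\sum_m \tregret{m}^1 + \sum_m \tregret{m}^2 - \sum_k \ctg{\pi^\star}_k(\sinit) + \tO(\frac{D|S|^2|A|}{\cmin^2})$. \cref{lem:bern-first-reg-term-bound} bounds $\sum_m \tregret{m}^1$ by $\sum_k \optimisticctg{\pi_k}_k(\sinit)\indgeventi{m(k)}$ plus $\tO(D|S||A| + \frac{D^2|S|^2|A|}{\cmin^2})$. Because the extended occupancy measures encode both policy and transition, one has by construction $\optimisticctg{\pi_k}_k(\sinit) = \langle q_k, c_k \rangle$, reducing this piece to the OMD term $\sum_k \langle q_k - q^{P,\pi^\star}, c_k \rangle$. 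Since $P$ lies in every confidence set on the good event and \cref{lem:best-in-bounded-oc-set} guarantees $\policytime{\pi^\star}(\sinit) \le D/\cmin$, we have $q^{P,\pi^\star} \in \wtboundedocset{D/\cmin}{e(k)}$ for all $k$, so the argument of \cref{thm:exp-reg-full-info} transplants verbatim to the extended simplex (the entropy bounds of \cref{lem:bound-R-q-pos,lem:bound-R-q-neg} carry over with $S\times A$ replaced by $S\times A\times S^+$, which changes only a logarithmic factor), giving $\sum_k \langle q_k - q^{P,\pi^\star},c_k\rangle \le \tO(\frac{D}{\cmin}\sqrt{K})$ for the stated $\eta$. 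Combining with \cref{lem:optimistic-val-func-diff-to-expected,lem:sum-bern-bounds-cont} yields
\[
\wt C_M \le \sum_{k=1}^K \ctg{\pi^\star}_k(\sinit) + \tO\Bigl(\tfrac{D}{\cmin}\sqrt{K} + \tfrac{D|S|}{\cmin}\sqrt{M|A|} + \tfrac{D^2|S|^2|A|}{\cmin^2}\Bigr).
\]

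The main obstacle is that $M$ and $\wt C_M$ are coupled: \cref{obs:cost-bounds-intervals} gives $M \le \frac{\cmin \wt C_M}{D} + \tO(K + \frac{D|S|^2|A|}{\cmin^2})$, while the bound above involves $\sqrt{M}$. I will close the recursion by substituting this estimate of $M$ into the $\sqrt{M|A|}$ term and invoking the elementary implication $x \le b + a\sqrt{x} \Rightarrow x \le 2b + 2a^2$ (applied to $x = \wt C_M - \sum_k \ctg{\pi^\star}_k(\sinit)$), which yields
\[
\wt C_M \le \sum_{k=1}^K \ctg{\pi^\star}_k(\sinit) + \tO\Bigl(\tfrac{D|S|}{\cmin}\sqrt{|A|K} + \tfrac{D^2|S|^2|A|}{\cmin^2}\Bigr).
\]
This in turn gives $M = \tO(K + \frac{D|S|^2|A|}{\cmin^2})$, so $T \le \wt C_M/\cmin < \infty$ and the $K$ episodes are indeed completed on the good event. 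Putting everything together and absorbing the $\tO(\frac{D}{\cmin}\sqrt{K})$ deviation from step (iv) into the leading term gives the advertised bound on $\regret$; the final step is simply to verify that the parameter choice $\eta = \sqrt{6\log(D|S||A|/\cmin)/K}$ is the one that balances the OMD terms, which it is by the same optimization as in \cref{thm:exp-reg-full-info}.
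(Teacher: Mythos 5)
Your overall architecture (condition on the good events, decompose $\tregret{M}$ via \cref{lem:bern-reg-decomp}, bound $\sum_m \tregret{m}^1$ by the OMD term plus switching costs, bound $\sum_m \tregret{m}^2$ via the variance lemmas, close the recursion between $M$ and $\wt C_M$ with \cref{obs:cost-bounds-intervals}, and run the extended OMD analysis) matches the paper's proof. But your step (iv) contains a genuine gap. You invoke \cref{lem:dev-from-exp-cost} for the strategies $\sigma_k$, which requires that the \emph{true} expected hitting time of $\sigma_k$ from every state (under $P$) be at most $D/\cmin$. In the unknown-transition setting this premise cannot be verified: \cref{lem:bern-opt-val-bound} only controls $\optimisticpolicytime{m}$, the hitting time of $\pi_k$ under the \emph{estimated} transition function $P_k$, and the paper is explicit that this estimate is not optimistic, so $\policytime{\pi_k}(s)$ under the true $P$ may be arbitrarily larger than $\optimisticpolicytime{\pi_k}_k(s)$ even on the event that $P$ lies in the confidence set (early epochs have loose radii). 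The switching mechanism fires on the estimated times, so it does not rescue the claim: the true dynamics could keep the agent cycling through states whose estimated times all look fine. This is precisely why the paper does \emph{not} reuse the known-transition deviation argument; instead it introduces artificial intervals of deterministically bounded length $2D/\cmin^2$, and relates the realized cost to $\wt C_M$ by a \emph{bounded-increment} Azuma over the $T$ time steps (Eq.~\eqref{eq:reg-tilde-to-actual}), absorbing the resulting $O(\sqrt{T\log(T/\delta)})$ term after establishing the polynomial bound $T \le \wt C_M/\cmin$.

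There is a second, related mismatch: even granting your premise, \cref{lem:dev-from-exp-cost} compares the realized cost to $\bbE\bigl[\sum_i c_k(s_i^k,a_i^k)\mid P,\sigma_k\bigr]$, the expectation under the \emph{true} dynamics, whereas the quantity your interval analysis actually controls is $\wt C_M = \sum_m\sum_h\sum_a \optimisticpi{m}(a\mid s_h^m)c_m(s_h^m,a)$, an average over actions along the \emph{realized} trajectory. These are different objects, and your write-up never bridges them; the bridge is exactly what the $\tregret{m}^1,\tregret{m}^2$ decomposition together with Eq.~\eqref{eq:reg-tilde-to-actual} provides. To repair the proof, drop step (iv) entirely, replace it with the per-step Azuma of Eq.~\eqref{eq:reg-tilde-to-actual}, and add the intermediate polynomial bound on $T$ so that the $\log T$ factors can be converted to $\log(KD|S||A|/\delta\cmin)$ in the final statement.
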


\begin{proof}[Proof of \cref{thm:reg-bound-unknown-P}]
    With probability at least $1 - \delta$, via a union bound, we have that \cref{lem:conf-set-hold,lem:sum-bern-bounds-cont,lem:optimistic-val-func-diff-to-expected} hold and the following holds by Azuma inequality for every $T=1,2,\dots$ simultaneously,
    \begin{align}
        \label{eq:reg-tilde-to-actual}
        \sum_{m=1}^M \sum_{h=1}^{\Hm} c_m(s_h^m,a_h^m)
        \le
        \sum_{m=1}^M \sum_{h=1}^{\Hm} \sum_{a \in A} \optimisticpi{m}(a \mid s_h^m) c_m(s_h^m,a)
        +
        4 \sqrt{T \log \frac{T}{\delta}}.
    \end{align}
    We start by bounding $\tregret{M}$ and in the end we explain how this yields a bound on $\regret$.
    
    Plugging in the bounds of \cref{lem:bern-first-reg-term-bound,lem:optimistic-val-func-diff-to-expected,lem:sum-bern-bounds-cont} into \cref{lem:bern-reg-decomp}, we have that for any number of intervals $\numintervals$: 
    \[
        \wt C_M 
        \le
        \sum_{k=1}^K \optimisticctg{\pi_k}_k(\sinit) \indgeventi{m(k)}
        + O \biggl(
        \frac{D|S|}{\cmin} \sqrt{\numintervals |A|} \log \frac{\totaltime |S| |A|}{\delta}
        + 
        \frac{D^2 |S|^2 |A|}{\cmin^2} \log^2 \frac{\totaltime |S| |A|}{\delta} \biggr).
    \]
    We now plug in the bound on $M$ from \cref{obs:cost-bounds-intervals} into the bound above.
    After simplifying this gets us
    \begin{align*}
        \wt C_M 
        & \le
        \sum_{k=1}^K \optimisticctg{\pi_k}_k(\sinit) \indgeventi{m(k)}
        + O \biggl( \sqrt{\frac{D^2 |S|^2 |A|}{\cmin^2} K \log^2 \frac{T D |S| |A|}{\delta \cmin}} 
        \\
        & \quad +
        \sqrt{\frac{D^4 |S|^4 |A|^2}{\cmin^4} \log^4 \frac{T D |S| |A|}{\delta \cmin}} 
        +
        \sqrt{\frac{D |S|^2 |A|}{\cmin} \wt C_M \log^2 \frac{T D |S| |A|}{\delta \cmin}} \biggr).
    \end{align*}
    From which, by solving for $\wt C_M$ (using that $x \le a \sqrt{x} + b$ implies $x \le (a + \sqrt{b})^2$ for $a \ge 0$ and $b \ge 0$), and simplifying the resulting expression by applying $\optimisticctg{\pi_k}_k(\sinit) \le D/\cmin$ and our assumptions that $K \ge |S|^2 |A|$, $|A| \ge 2$, we get that
    \begin{align}
        \label{eq:last-reg-bound-with-T}
        \wt C_M 
        & \le
        \sum_{k=1}^K \optimisticctg{\pi_k}_k(\sinit) \indgeventi{m(k)}
         + 
        O \biggl( \frac{D |S|}{\cmin} \sqrt{|A| K} \log \frac{T D |S| |A|}{\delta \cmin}
        + \frac{D^2 |S|^2 |A|}{\cmin^2} \log^2 \frac{T D |S| |A|}{\delta \cmin} \biggr).
    \end{align}
    Note that in particular, by simplifying the bound above, we obtain a polynomial bound on the total cost:
    $
        \wt C_M
        =
        O \Bigl(\sqrt{D^4 |S|^4 |A|^2 K \totaltime / \cmin^4 \delta} \Bigr).
    $
    Next we combine this with the fact, stated in
    \cref{obs:cost-bounds-intervals} that $T\le \wt C_M/ \cmin$. Isolating $T$ gets
    $
        \totaltime
        =
        O \Bigl(\tfrac{D^4 |S|^4 |A|^2 K}{\cmin^4 \delta} \Bigr),
    $
    and plugging this bound back into \cref{eq:last-reg-bound-with-T} and simplifying gets us
    \begin{align}
        \wt C_M 
        & \le
        \sum_{k=1}^K \optimisticctg{\pi_k}_k(\sinit) \indgeventi{m(k)}
        + O \biggl( \frac{D |S|}{\cmin} \sqrt{|A| K} \log \frac{K D |S| |A|}{\delta \cmin}
        + \frac{D^2 |S|^2 |A|}{\cmin^2} \log^2 \frac{K D |S| |A|}{\delta \cmin} \biggr).
    \end{align}
    Recall that
    \[
        \sum_{k=1}^K \optimisticctg{\pi_k}_k(\sinit) - \ctg{\pi^\star}_k(\sinit) 
        = 
        \sum_{k=1}^K \langle q_k - q^{P,\pi^\star} , c_k \rangle,
    \]
    and thus applying OMD analysis (see \cref{sec:omd-unknown-P-proofs}) we obtain
    \[
        \tregret{M}
        \le
        O \biggl( \frac{D |S|}{\cmin} \sqrt{|A| K} \log \frac{K D |S| |A|}{\delta \cmin}
        + \frac{D^2 |S|^2 |A|}{\cmin^2} \log^2 \frac{K D |S| |A|}{\delta \cmin} \biggr).
    \]
    Now, as $\geventi{m}$ hold for all intervals, we use \cref{eq:reg-tilde-to-actual} to bound the actual regret (together with $T \le \wt C_M/\cmin$) for any number of intervals $M$, with the bound we have for $\tregret{M}$.

    We note that the bound above holds for any number of intervals $M$ as long as $K$ episodes do not elapse. As the instantaneous costs in the model are positive, this means that the learner must eventually finish the $K$ episodes from which we derive the bound for $\regret$ claimed by the theorem.
\end{proof}

\subsection{OMD analysis}
\label{sec:omd-unknown-P-proofs}

This analysis follows the lines of \cref{sec:omd-ssp-proofs}, but it is adjusted to extended occupancy measures.

\begin{lemma}
    \label{lem:ext-bound-R-q-pos}
    Let $\tau \ge 1$.
    For every $q \in \wtboundedocset{\tau}{m}$ it holds that $R(q) \le \tau \log \tau$.
\end{lemma}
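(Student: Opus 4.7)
The plan is to mirror the argument of \cref{lem:bound-R-q-pos} almost verbatim, adapted to the extended occupancy measure setting where the summation runs over triples $(s,a,s') \in S \times A \times S^+$ rather than pairs $(s,a)$. The key property I would use is that membership in $\wtboundedocset{\tau}{m}$ forces
\[
    \sum_{s \in S} \sum_{a \in A} \sum_{s' \in S^+} q(s,a,s') = T^{P^q,\pi^q}(\sinit) \le \tau,
\]
since the sum of an extended occupancy measure equals the expected hitting time under its induced policy and transition function, and $\wtboundedocset{\tau}{m}$ bounds exactly this quantity by $\tau$.

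First I would discard the $-q(s,a,s')$ terms inside $R(q)$, which are non-positive and only make the bound easier. Next I would split each $\log$ as $\log q(s,a,s') = \log \bigl( q(s,a,s')/\tau \bigr) + \log \tau$, yielding
\[
    R(q) \le \sum_{s,a,s'} q(s,a,s') \log \frac{q(s,a,s')}{\tau} + \log \tau \cdot \sum_{s,a,s'} q(s,a,s').
\]
Since each individual entry satisfies $q(s,a,s') \le \sum q \le \tau$ (assuming $\tau \ge 1$ so the sum bound passes to each term), the ratio $q(s,a,s')/\tau \in [0,1]$ and the first sum is non-positive. The second sum is bounded by $\tau \log \tau$ using the expected-time constraint again.

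I do not expect any real obstacle here; the only point worth double-checking is the claim that the coordinate-wise bound $q(s,a,s') \le \tau$ holds, which is immediate from non-negativity plus the aggregate bound. The proof reads as follows.

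\begin{proof}
    For every $q \in \wtboundedocset{\tau}{m}$, the defining constraint of $\wtboundedocset{\tau}{m}$ gives $\sum_{s \in S} \sum_{a \in A} \sum_{s' \in S^+} q(s,a,s') \le \tau$, and in particular $q(s,a,s') \le \tau$ for every $(s,a,s')$. Using $-q(s,a,s') \le 0$ we have
    \begin{align*}
        R(q)
        & \le
        \sum_{s \in S} \sum_{a \in A} \sum_{s' \in S^+} q(s,a,s') \log q(s,a,s')
        \\
        & =
        \sum_{s \in S} \sum_{a \in A} \sum_{s' \in S^+} q(s,a,s') \log \frac{q(s,a,s')}{\tau}
        + \log \tau \sum_{s \in S} \sum_{a \in A} \sum_{s' \in S^+} q(s,a,s')
        \\
        & \le
        \log \tau \sum_{s \in S} \sum_{a \in A} \sum_{s' \in S^+} q(s,a,s')
        \le
        \tau \log \tau,
    \end{align*}
    where the second inequality uses $q(s,a,s')/\tau \le 1$ to drop the first (non-positive) sum, and the last inequality uses the bound on $\sum q(s,a,s')$.
\end{proof}
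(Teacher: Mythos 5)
Your proof is correct and follows essentially the same argument as the paper's: drop the non-positive $-q(s,a,s')$ terms, split $\log q(s,a,s')$ as $\log\bigl(q(s,a,s')/\tau\bigr) + \log\tau$, discard the non-positive first sum (since each entry is at most the total sum, hence at most $\tau$), and bound the remaining term by $\tau\log\tau$ using the time constraint defining $\wtboundedocset{\tau}{m}$. The only cosmetic difference is that you spell out explicitly why $q(s,a,s')/\tau \le 1$, which the paper leaves implicit under the phrase ``non-positivity.''
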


\begin{proof}
    \begin{align*}
        R(q)
        & =
        \sum_{s \in S} \sum_{a \in A} \sum_{s' \in S^+} q(s,a,s') \log q(s,a,s') - \sum_{s \in S} \sum_{a \in A} \sum_{s' \in S^+} q(s,a,s')
        \\
        & \le
        \sum_{s \in S} \sum_{a \in A} \sum_{s' \in S^+} q(s,a,s') \log q(s,a,s')
        \\
        & =
        \sum_{s \in S} \sum_{a \in A} \sum_{s' \in S^+} q(s,a,s') \log \frac{q(s,a,s')}{\tau} + \sum_{s \in S} \sum_{a \in A} \sum_{s' \in S^+} q(s,a,s') \log \tau
        \\
        & \le
        \sum_{s \in S} \sum_{a \in A} \sum_{s' \in S^+} q(s,a,s') \log \tau
        \le
        \tau \log \tau,
    \end{align*}
    where the first two inequalities follow from non-positivity, and the last one from the definition of $\wtboundedocset{\tau}{m}$.
\end{proof}

\begin{lemma}
    \label{lem:ext-bound-R-q-neg}
    Let $\tau \ge 1$.
    For every $q \in \wtboundedocset{\tau}{m}$ it holds that $-R(q) \le \tau (1 + \log (|S|^2 |A|))$.
\end{lemma}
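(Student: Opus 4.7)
The plan is to mirror the proof of Lemma 6 (the non-extended analogue) almost verbatim, accounting only for the fact that the extended occupancy measure is now indexed over $S \times A \times S^+$ rather than $S \times A$, and that the expected-time constraint reads $\sum_{s,a,s'} q(s,a,s') \le \tau$.

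First I would rewrite $-R(q)$ by pulling out a factor of $\tau$ inside the logarithm:
\[
    -R(q)
    =
    -\sum_{s,a,s'} q(s,a,s') \log \frac{q(s,a,s')}{\tau}
    +
    \sum_{s,a,s'} q(s,a,s')
    -
    \sum_{s,a,s'} q(s,a,s') \log \tau .
\]
The last term is non-positive because $\tau \ge 1$ and $q \ge 0$, so it may be dropped. The middle term is bounded directly by $\tau$ thanks to the definition of $\wtboundedocset{\tau}{m}$.

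Next I would attack the leading term. Setting $p(s,a,s') = q(s,a,s')/\tau$, the measure $p$ is a non-negative sub-probability vector on $S \times A \times S^+$ (since $\sum p \le 1$), and the term equals $-\tau \sum p \log p$. The standard bound for such a sub-distribution on $N$ atoms gives $-\sum p \log p \le \log N$ (this is the same step used in Lemma 6; for $N \ge 3$ the unconstrained maximum of $S \log(N/S)$ over $S \in [0,1]$ is attained at $S=1$, yielding $\log N$). Applying this with $N = |S| \cdot |A| \cdot |S^+|$, which is of order $|S|^2 |A|$, we obtain $-\tau \sum p \log p \le \tau \log (|S|^2 |A|)$.

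Adding the two surviving contributions $\tau \log(|S|^2 |A|)$ and $\tau$ gives the claimed bound $-R(q) \le \tau(1 + \log(|S|^2 |A|))$. I do not anticipate a genuine obstacle: the only subtlety worth double-checking is the Shannon step, since $p$ is a sub-probability rather than a probability, but this is exactly the same consideration already handled in the proof of Lemma 6 and it goes through identically for the three-index sum.
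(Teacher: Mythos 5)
Your proposal is correct and follows essentially the same route as the paper: drop the non-positive $\log\tau$ term, bound the linear term by $\tau$ via the constraint $\sum_{s,a,s'} q(s,a,s') \le \tau$, and bound the entropy of the sub-probability vector $q/\tau$ by the log of the number of atoms. The only (shared) slop is that the atom count is $|S|\,|A|\,|S^+| = |S|\,|A|(|S|+1)$ rather than exactly $|S|^2|A|$, a constant the paper also absorbs without comment.
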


\begin{proof}
    Similarly to \cref{lem:bound-R-q-pos} we have that
    \begin{align*}
        - R(q)
        & =
        - \sum_{s \in S} \sum_{a \in A} \sum_{s' \in S^+} q(s,a,s') \log \frac{q(s,a,s')}{\tau}
        +
        \sum_{s \in S} \sum_{a \in A} \sum_{s' \in S^+} q(s,a,s')
        \\
        & \qquad -
        \sum_{s \in S} \sum_{a \in A} \sum_{s' \in S^+} q(s,a,s') \log \tau
        \\
        & \le
        - \tau \sum_{s \in S} \sum_{a \in A} \sum_{s' \in S^+} \frac{q(s,a,s')}{\tau} \log \frac{q(s,a,s')}{\tau} + \tau
        \le
        \tau \log (|S|^2 |A|) + \tau,
    \end{align*}
    where the first inequality follows because the last term is non-positive and from the definition of $\wtboundedocset{\tau}{m}$, and the last inequality follows from properties of Shannon's entropy.
\end{proof}

\begin{lemma}
    \label{lem:omd-reg-bound-unknown-P}
    If $\geventi{m}$ holds for all intervals $m$, then
    \[
        \sum_{k=1}^K \langle q_k - q^{P,\pi^\star} , c_k \rangle
        \le
        \frac{2D}{\cmin} \sqrt{6 K \log \frac{D |S| |A|}{\cmin}}.
    \]
\end{lemma}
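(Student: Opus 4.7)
The plan is to port the proof of Theorem~\ref{thm:exp-reg-full-info} from \cref{sec:omd-ssp-proofs} to the extended occupancy measures setting, replacing the single convex set $\boundedocset{D/\cmin}$ with the epoch-dependent family $\wtboundedocset{D/\cmin}{e}$ and the entropy bounds of \cref{lem:bound-R-q-pos,lem:bound-R-q-neg} with their extended counterparts \cref{lem:ext-bound-R-q-pos,lem:ext-bound-R-q-neg}. The key preliminary observation is that because $\geventi{m}$ holds for every interval, the true transition $P$ sits in every confidence set, so together with \cref{lem:best-in-bounded-oc-set} (which gives $\policytime{\pi^\star}(\sinit) \le D/\cmin$ w.r.t.\ $P$) the extended occupancy measure $q^{P,\pi^\star}$ belongs to $\wtboundedocset{D/\cmin}{e}$ for every epoch $e$. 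In particular it is a feasible comparator for the OMD update at every episode $k$, so the standard fundamental OMD inequality
\begin{equation*}
    \sum_{k=1}^K \langle q_k - q^{P,\pi^\star}, c_k \rangle
    \le
    \sum_{k=1}^K \langle q_k - q'_{k+1}, c_k \rangle + \frac{\KL{q^{P,\pi^\star}}{q_1}}{\eta}
\end{equation*}
carries over verbatim, even though the constraint set itself changes with $e(k)$.

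For the stability term, I would use the closed-form update $q'_{k+1}(s,a,s') = q_k(s,a,s') e^{-\eta c_k(s,a)}$ together with $e^{-x} \ge 1 - x$ and $c_k(s,a) \in [0,1]$, yielding
\begin{equation*}
    \langle q_k - q'_{k+1}, c_k \rangle
    \le
    \eta \sum_{s,a,s'} q_k(s,a,s') c_k(s,a)^2
    \le
    \eta \sum_{s,a,s'} q_k(s,a,s')
    \le
    \eta \frac{D}{\cmin},
\end{equation*}
where the last inequality uses $q_k \in \wtboundedocset{D/\cmin}{e(k)}$ (the sum equals $\optimisticpolicytime{\pi_k}_k(\sinit)$ w.r.t.\ $P_k$, which is at most $D/\cmin$ by construction). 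Summing over $k$ gives $\eta K D/\cmin$.

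For the KL term, $q_1$ minimizes $R$ over $\wtboundedocset{D/\cmin}{e(1)}$ and $q^{P,\pi^\star}$ lies in this set, so first-order optimality gives $\langle \nabla R(q_1), q^{P,\pi^\star} - q_1\rangle \ge 0$, whence $\KL{q^{P,\pi^\star}}{q_1} \le R(q^{P,\pi^\star}) - R(q_1)$. Plugging in \cref{lem:ext-bound-R-q-pos,lem:ext-bound-R-q-neg} with $\tau = D/\cmin$ bounds this by $(D/\cmin) \log(D/\cmin) + (D/\cmin)(1 + \log(|S|^2 |A|)) \le 6(D/\cmin) \log(D|S||A|/\cmin)$. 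Substituting both bounds into the OMD inequality yields $\eta K D/\cmin + 6D/(\eta \cmin) \log(D|S||A|/\cmin)$, and choosing $\eta = \sqrt{6 \log(D|S||A|/\cmin)/K}$ optimizes this to $(2D/\cmin)\sqrt{6K \log(D|S||A|/\cmin)}$, exactly the claimed bound.

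There is no genuinely hard step here; the only subtlety beyond the known-transitions proof is checking that the comparator remains feasible across the varying decision sets $\wtboundedocset{D/\cmin}{e}$, which is where the assumption that $\geventi{m}$ holds for every $m$ is used. Everything else is a mechanical transcription of the argument of \cref{thm:exp-reg-full-info} to the extended-occupancy-measure notation, using that $c_k$ depends only on $(s,a)$ so adding the extra $s'$ index in the sums costs nothing beyond the $|S|^2|A|$ inside the logarithm.
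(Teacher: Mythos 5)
Your proposal is correct and follows essentially the same route as the paper's proof: the fundamental OMD inequality with $q^{P,\pi^\star}$ as a feasible comparator in every $\wtboundedocset{D/\cmin}{e}$ (justified by $\geventi{m}$ and \cref{lem:best-in-bounded-oc-set}), the stability bound $\eta K D/\cmin$ via the closed-form update, and the KL bound via first-order optimality of $q_1$ together with \cref{lem:ext-bound-R-q-pos,lem:ext-bound-R-q-neg}, with the same choice of $\eta$. No gaps.
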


\begin{proof}
    We start with a fundamental inequality of OMD (see, e.g., \cite{rosenberg2019full}) that holds for every $q \in \wtboundedocset{D/\cmin}{m}$ for every $m$ (since $\geventi{m}$ holds it also holds for $q^{P,\pi^\star}$),
    \begin{align}
        \label{eq:ext-standard-omd}
        \sum_{k=1}^K \langle q_k - q^{P,\pi^\star} , c_k \rangle
        \le
        \sum_{k=1}^K \langle q_k - q'_{k+1} , c_k \rangle
        +
        \frac{\KL{q^{P,\pi^\star}}{q_1}}{\eta}.
    \end{align}
    For the first term we use the exact form of $q'_{k+1}$ and the inequality $e^x \ge 1+x$ to obtain
    \[
        q'_{k+1}(s,a,s')
        =
        q_k(s,a,s') e^{- \eta c_k(s,a)}
        \ge
        q_k(s,a,s') - \eta q_k(s,a,s') c_k(s,a).
    \]
    We substitute this back and obtain
    \begin{align}
        \nonumber
        \sum_{k=1}^K \langle q_k - q'_{k+1} , c_k \rangle
        & \le
        \eta \sum_{k=1}^K \sum_{s \in S} \sum_{a \in A} \sum_{s' \in S^+} q_k(s,a,s') c_k(s,a)^2
        \le
        \eta \sum_{k=1}^K \sum_{s \in S} \sum_{a \in A} \sum_{s' \in S^+} q_k(s,a,s')
        \\
        \label{eq:ext-omd-term-1}
        & =
        \eta \sum_{k=1}^K \optimisticpolicytime{\pi_k}_k(\sinit)
        \le
        \eta K \frac{D}{\cmin},
    \end{align}
    where the last inequality follows from the definition of $\wtboundedocset{D/\cmin}{m(k)}$.
    
    Next we use \cref{lem:ext-bound-R-q-pos,lem:ext-bound-R-q-neg} to bound the second term of \cref{eq:ext-standard-omd}.
    Recall that $q_1$ minimizes $R$ in $\wtboundedocset{D/\cmin}{1}$, this implies that $\langle \nabla R(q_1), q^{P,\pi^\star} - q_1 \rangle \ge 0$ because otherwise we could decrease $R$ by taking small step in the direction $q^{P,\pi^\star} - q_1$.
    Thus we obtain
    \begin{align}
        \nonumber
        \KL{q^{P,\pi^\star}}{q_1}
        & =
        R(q^{P,\pi^\star}) - R(q_1) - \langle \nabla R(q_1), q^{P,\pi^\star} - q_1 \rangle
        \le
        R(q^{P,\pi^\star}) - R(q_1)
        \\
        \label{eq:ext-omd-term-2}
        & \le 
        \frac{D}{\cmin} \log \frac{D}{\cmin}
        +
        \frac{D}{\cmin} (1 + \log (|S|^2 |A|))
        \le
        \frac{6D}{\cmin} \log \frac{D |S| |A|}{\cmin}.
    \end{align}
    
    By substituting \cref{eq:ext-omd-term-1,eq:ext-omd-term-2} into \cref{eq:ext-standard-omd} and choosing $\eta = \sqrt{\frac{6 \log \frac{D |S| |A|}{\cmin}}{K}}$, we obtain,
    \begin{align*}
        \sum_{k=1}^K \langle q_k - q^{P,\pi^\star} , c_k \rangle
        \le
        \eta K \frac{D}{\cmin} + \frac{6D}{\cmin  \eta} \log \frac{D |S| |A|}{\cmin}
        \le
        \frac{2D}{\cmin} \sqrt{6 K \log \frac{D |S| |A|}{\cmin}}.
    \end{align*}
\end{proof}

\newpage
\section{Estimating the SSP-diameter}
\label{sec:estimate-diameter-proofs}

When $D$ is given, we use it to get the upper bound $D/\cmin$ on the expected time of the best policy in hindsight $\policytime{\pi^\star}(\sinit)$.
The reason that $\policytime{\pi^\star}(\sinit) \le D/\cmin$ is that $D$ is an upper bound on the expected time of the fast policy, i.e., $\policytime{\fastpolicy}(\sinit) \le D$ (see \cref{lem:best-in-bounded-oc-set}).
Thus, we want to compute $\wt D(\sinit)$ to be an upper bound on $\policytime{\fastpolicy}(\sinit)$.

We would like to use the first $L$ episodes in order to estimate an upper bound $\wt D(\sinit)$ on the expected time of the fast policy, and then we can run SSP-O-REPS3 and obtain the same regret bound as in \cref{thm:reg-bound-unknown-P} but with $\wt D(\sinit)$ replacing $D$.

Notice that $\fastpolicy$ is the optimal policy w.r.t the constant cost function $c(s,a) = 1$, and its expected cost is $\policytime{\fastpolicy}(\sinit)$.
Thus, we run the SSP regret minimization algorithm of \cite{cohen2020ssp} with the cost function $c(s,a)=1$ for $L$ episodes.
Then, we set $\wt D(\sinit)$ to be the average cost per episode times 10, that is,
\[
    \wt D(\sinit)
    =
    \frac{10}{L} \sum_{k=1}^L \sum_{i=1}^{\Ik} c(s_i^k,a_i^k)
    =
    \frac{10}{L} \sum_{k=1}^L \Ik .
\]

We start by showing that $\wt D(\sinit)$ is indeed an upper bound on $\policytime{\fastpolicy}(\sinit)$, given $L$ is large enough.

\begin{lemma}
    \label{lem:wtD-upp-bound-on-D}
    If $L \ge \frac{2400 D^2}{\policytime{\fastpolicy}(\sinit)^2} \log^3 \frac{4K}{\delta}$ then, with probability at least $1 - \delta$, $\policytime{\fastpolicy}(\sinit) \le \wt D(\sinit)$.
\end{lemma}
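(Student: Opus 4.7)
The plan is to combine a trivial lower bound on $\bbE[I^k]$ (coming from the optimality of $\fastpolicy$ under unit costs) with a one-sided concentration inequality that transfers this bound to the realized sum $\sum_k I^k$. Since the auxiliary run uses the constant cost function $c \equiv 1$, we have $\sum_{i=1}^{I^k} c(s_i^k,a_i^k) = I^k$, and the best policy in hindsight is exactly $\fastpolicy$. By definition of $\fastpolicy$ as the time-minimizing proper policy, \emph{every} proper policy $\pi$ satisfies $\policytime{\pi}(\sinit) \ge \policytime{\fastpolicy}(\sinit)$, so $\bbE[I^k \mid \sigma_k] \ge \policytime{\fastpolicy}(\sinit)$ for each of the strategies $\sigma_k$ the subroutine plays. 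All that remains is to control the lower-tail deviation of $\sum_k I^k$ from its expectation.

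The key step is to argue that the strategies executed by the algorithm of \cite{cohen2020ssp} have, with high probability, expected time to the goal bounded by $\tau = O(D)$ from \emph{every} state (not just $\sinit$). This follows from their confidence-set construction: on the high-probability event (of probability at least $1 - \delta/2$) that $P$ lies inside every confidence set throughout the $L$ episodes, the optimistic policies they deploy inherit uniformly over states the $O(D)$ bound on hitting time that $\fastpolicy$ enjoys by definition of the SSP-diameter. Denote this good event by $E_1$.

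Conditioned on $E_1$, the hypothesis of \cref{lem:dev-from-exp-cost} holds with $\tau = O(D)$. The lemma is stated as an upper-tail bound, but its proof via \textbf{Theorem 11} (the unbounded Azuma inequality of the paper) is symmetric under negation; applying it to the martingale difference sequence $-(I^k - \bbE[I^k \mid \sigma_k])$ gives, with probability at least $1 - \delta/2$ and for some absolute constant $c \le 44$,
\[
    \sum_{k=1}^L I^k \ge \sum_{k=1}^L \bbE[I^k \mid \sigma_k] - c \, D \, \sqrt{L \log^3(4L/\delta)}.
\]
A union bound over $E_1$ and this concentration event, together with the expectation lower bound and $L \le K$, yields
\[
    \sum_{k=1}^L I^k \ge L \, \policytime{\fastpolicy}(\sinit) - c \, D \, \sqrt{L \log^3(4K/\delta)}
\]
with probability at least $1 - \delta$.

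Finally, the quantitative hypothesis $L \ge 2400 D^2 \log^3(4K/\delta)/\policytime{\fastpolicy}(\sinit)^2$ is calibrated so that the deviation term is at most $\tfrac{9}{10} L \policytime{\fastpolicy}(\sinit)$ (since $(100/81) \cdot 44^2 < 2400$), whence $\sum_{k=1}^L I^k \ge \tfrac{1}{10} L \policytime{\fastpolicy}(\sinit)$ and therefore $\wt D(\sinit) = \tfrac{10}{L}\sum_k I^k \ge \policytime{\fastpolicy}(\sinit)$. The main obstacle is the uniform-in-starting-state hitting-time bound for the \cite{cohen2020ssp} strategies required by \cref{lem:dev-from-exp-cost}, which is not explicit in their theorem statements but can be extracted from their confidence-set analysis; the symmetrization of \cref{lem:dev-from-exp-cost} is immediate since Theorem 11 only uses the tail decay $\Pr[|X_k|>m] \le 2e^{-m/(4\tau)}$ of each martingale difference.
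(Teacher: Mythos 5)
Your high-level plan coincides with the paper's: lower-bound the expected episode length by $\policytime{\fastpolicy}(\sinit)$ using the optimality of $\fastpolicy$ under the unit cost, invoke the two-sided unbounded-martingale Azuma inequality (\cref{thm:unbounded-azuma}, which indeed already controls the lower tail, so no separate symmetrization is needed) to get a deviation of $44\tau\sqrt{L\log^3(4L/\delta)}$, and use the hypothesis on $L$ to make the deviation at most $\tfrac{9}{10}L\policytime{\fastpolicy}(\sinit)$. The two proofs diverge at the reduction step. The paper argues that it suffices to carry out the concentration argument \emph{as if the fast policy itself were played} for all $L$ episodes; for $\fastpolicy$ the hypothesis of \cref{lem:high-prob-time} holds with $\tau=D$ unconditionally (by definition of the SSP-diameter, $\policytime{\fastpolicy}(s)\le D$ for every $s$), so the tail bound $\Pr[|X_k|>m]\le 2e^{-m/4D}$ needed for \cref{thm:unbounded-azuma} is immediate and no property of the \cite{cohen2020ssp} subroutine is required. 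You instead keep the actual strategies $\sigma_k$ and apply the martingale argument to $I^k-\bbE[I^k\mid\sigma_k]$, which forces you to establish that \emph{every} strategy the subroutine plays has expected hitting time $O(D)$ from \emph{every} state on a high-probability event.

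That last claim is the gap. It is not extractable from the confidence-set analysis of \cite{cohen2020ssp} in the form you need: before all states become known, the policies their algorithm deploys need not be proper under the true $P$ at all (this is precisely why their analysis, and the paper's own \cref{sec:high-prob-unknown-P}, goes through intervals, forced exploration, and known-state arguments rather than a per-episode hitting-time bound). Without $\policytime{\sigma_k}(s)\le\tau$ for all $s$, the per-episode tail bound $\Pr[|X_k|>m]\le 2e^{-m/4\tau}$ fails and \cref{thm:unbounded-azuma} does not apply; the lower tail is genuinely at risk here because $\bbE[I^k\mid\sigma_k]$ could be large while the realized $I^k$ is small with high probability, which is exactly the direction that would break $\wt D(\sinit)\ge\policytime{\fastpolicy}(\sinit)$. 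A secondary issue: even granting your claim, an unspecified constant in $\tau=O(D)$ does not survive the calibration, since $(44\cdot 10/9)^2\approx 2390$ leaves essentially no slack against the constant $2400$ in the hypothesis --- the argument needs $\tau=D$ exactly, which is what the reduction to $\fastpolicy$ delivers. The fix is to adopt the paper's reduction (or otherwise decouple the concentration from the subroutine's policies) rather than to try to prove uniform hitting-time bounds for the estimation subroutine.
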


\begin{proof}
    Notice that playing $\fastpolicy$ during the first $L$ episodes will result in smaller total cost then running the regret minimization algorithm.
    Thus, it suffices to prove the Lemma as if we are playing the fast policy.
    Define
    \[
        X_k 
        =  
        \sum_{i=1}^{\Ik} c(s^k_i,a^k_i)
        -
        \bbE \Bigl[ \sum_{i=1}^{\Ik} c(s^k_i,a^k_i) \mid P,\fastpolicy,s_1^k=\sinit \Bigr]
        =
        \sum_{i=1}^{\Ik} c(s^k_i,a^k_i)
        -
        \policytime{\fastpolicy}(\sinit).
    \]
    This is a martingale difference sequence, and in order to use \cref{thm:unbounded-azuma} we need to show that $\Pr[|X_k| > m] \le 2e^{-\frac{m}{4 D}}$ for every $k=1,2,\dots$ and $m \ge 0$.
    This follows immediately from \cref{lem:high-prob-time} since the total cost is equal to the total time for the cost function $c(s,a)=1$.
    
    By \cref{thm:unbounded-azuma}, $\left| \sum_{k=1}^L X_k \right| \le 44 D \sqrt{L \log^3 \frac{4L}{\delta}}$ with probability $1 - \delta$.
    Therefore we have,
    \[
        \sum_{k=1}^L \sum_{i=1}^{\Ik} c(s_i^k,a_i^k)
        \ge
        L \policytime{\fastpolicy}(\sinit) - 44 D \sqrt{L \log^3 \frac{4L}{\delta}},
    \]
    and thus,
    \begin{align}
        \label{eq:wtD-upper-bound}
        \frac{\wt D(\sinit)}{10} = \frac{1}{L} \sum_{k=1}^L \sum_{i=1}^{\Ik} c(s_i^k,a_i^k)
        \ge
        \policytime{\fastpolicy}(\sinit) - 44 D \sqrt{\frac{\log^3 \frac{4L}{\delta}}{L}}.
    \end{align}
    Since $L \ge \frac{2400 D^2}{\policytime{\fastpolicy}(\sinit)^2} \log^3 \frac{4K}{\delta}$, we have that $44 D \sqrt{\frac{\log^3 \frac{4L}{\delta}}{L}} \le \frac{9}{10} \policytime{\fastpolicy}(\sinit)$ and therefore we obtain from \cref{eq:wtD-upper-bound} that $\policytime{\fastpolicy}(\sinit) \le \wt D(\sinit)$.
\end{proof}

Next, we show that $\wt D(\sinit)$ is a good estimation of $\policytime{\fastpolicy}(\sinit)$, given $L$ is large enough.

\begin{lemma}
    \label{lem:wt-D-not-big-1}
    If $L \ge |S|^2 |A| \sqrt{D} \log^2 \frac{K D |S| |A|}{\delta}$ then, with probability at least $1 - \delta$, $\wt D(\sinit) \le O(D)$.
\end{lemma}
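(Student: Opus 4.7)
The proof plan is to upper bound $\sum_{k=1}^L I^k$ using the high-probability regret guarantee of the algorithm of Cohen et al. (2020), which is invoked with the unit cost function $c(s,a) = 1$ for every $(s,a)$.

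First, since $c \equiv 1$, the per-episode cost equals the length of the episode, and the expected cost of any proper policy $\pi$ in any episode is exactly $\policytime{\pi}(\sinit)$. In particular, the optimal policy in hindsight is the fast policy $\fastpolicy$ and satisfies $\ctg{\fastpolicy}(\sinit) = \policytime{\fastpolicy}(\sinit) \le D$ by definition of the SSP-diameter. Applying the high-probability regret bound of Cohen et al. (2020), with probability at least $1-\delta$,
\begin{align*}
    \sum_{k=1}^L I^k
    &= \sum_{k=1}^L \sum_{i=1}^{I^k} c(s_i^k,a_i^k)
    \le L \cdot \policytime{\fastpolicy}(\sinit) + \tO\bigl(D|S|\sqrt{|A|L}\bigr)
    \le LD + \tO\bigl(D|S|\sqrt{|A|L}\bigr),
\end{align*}
where the $\tO$ hides poly-logarithmic factors in $L,|S|,|A|,D,1/\delta$.

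Second, dividing by $L/10$, this yields
\[
    \wt D(\sinit) = \frac{10}{L}\sum_{k=1}^L I^k
    \le 10 D + \tO\!\left( \frac{D|S|\sqrt{|A|}}{\sqrt{L}} \right).
\]
Plugging in the assumed lower bound $L \ge |S|^2 |A| \sqrt{D} \log^2 \frac{K D |S| |A|}{\delta}$ makes the second term at most $O(D)$ (the $\sqrt{D}$ factor in $L$ is generous enough to absorb the poly-logarithmic factors hidden in the $\tO$), so $\wt D(\sinit) \le O(D)$ with probability $1 - \delta$.

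The only subtlety I foresee is bookkeeping the exact form of the Cohen et al.\ regret bound when instantiated with $\cmin = 1$ and cost range $[0,1]$, and verifying that its hidden poly-log factors are dominated by the $\log^2(K D |S| |A|/\delta)$ factor appearing in the hypothesis on $L$. Since the regret bound there is of the form $\tO(D|S|\sqrt{|A| L})$ where the logarithms are at most polynomial in $L,|S|,|A|,D,1/\delta$ (and $L \le K$), and the hypothesis already supplies an extra $\sqrt{D}\log^2(\cdot)$ factor, this matching is routine. No other step requires work beyond citing Lemma~\ref{lem:best-in-bounded-oc-set}-style estimates already proved earlier.
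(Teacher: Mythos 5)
Your proposal is correct and follows essentially the same route as the paper: invoke the high-probability regret bound of Cohen et al.\ with the unit cost function, use $\policytime{\fastpolicy}(\sinit) \le D$ for the comparator, divide by $L$, and absorb the error terms using the lower bound on $L$. The one "bookkeeping subtlety" you flag is exactly where the $\sqrt{D}$ in the hypothesis is actually spent: the Cohen et al.\ bound carries an additive lower-order term of order $D^{3/2}|S|^2|A|\log^2(\cdot)$ (not dominated by the $\sqrt{L}$ term for $L$ as small as the hypothesis allows), and dividing it by $L \ge |S|^2|A|\sqrt{D}\log^2(\cdot)$ is precisely what yields the remaining $O(D)$ — the $\sqrt{D}$ is not there to absorb logarithms.
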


\begin{proof}
    By the regret bound of the SSP regret minimization algorithm we have, with probability at least $1 - \delta$,
    \[
        \frac{1}{L} \sum_{k=1}^L \sum_{i=1}^{\Ik} c(s_i^k,a_i^k)
        -
        \policytime{\fastpolicy}(\sinit)
        \le
        O \left( \frac{D |S| \sqrt{|A|} \log \frac{L D |S| |A|}{\delta}}{\sqrt{L}} + \frac{D^{3/2} |S|^2 |A| \log^2 \frac{L D |S| |A|}{\delta}}{L} \right).
    \]
    Since $\policytime{\fastpolicy}(\sinit) \le D$ we obtain
    \[
        \wt D(\sinit)
        \le
        O \left( 
        D + \frac{D |S| \sqrt{|A|} \log \frac{L D |S| |A|}{\delta}}{\sqrt{L}} + \frac{D^{3/2} |S|^2 |A| \log^2 \frac{L D |S| |A|}{\delta}}{L} \right) 
        \le 
        O(D).
    \]
    where the last inequality follows because $L \ge |S|^2 |A| \sqrt{D} \log^2 \frac{KD |S| |A|}{\delta}$.
\end{proof}

The second place in which the SSP-O-REPS3 algorithm uses $D$ is to determine when to switch to the optimistic fast policy.
The switch happens when we reach a state with expected time larger than $D/\cmin$.
A careful look at the analysis (especially \cref{lem:bern-first-reg-term-bound}) shows that we actually need to switch in state $s$ if the expected time is larger than $\policytime{\fastpolicy}(s)/\cmin$.
Thus, we need to estimate an upper bound $\wt D(s)$ on $\policytime{\fastpolicy}(s)$ which is done exactly as we estimated $\policytime{\fastpolicy}(\sinit)$, i.e., in the first $L$ visits to $s$ we switch to the optimistic fast policy and then we can estimate  $\policytime{\fastpolicy}(s)$ (by taking the average time to the goal times 10).
This just means that now the threshold for a state to become known is $L$ instead of $\numvisitsuntilknownbern$.
Assuming $L$ is large enough we get a good enough estimate, similarly to what we just proved for $\policytime{\fastpolicy}(\sinit)$.

To summarize, the algorithm proceeds as follows.
We start by running the regret minimization algorithm of \cite{cohen2020ssp} with constant cost of $1$ for $L$ episodes and use it to estimate an upper bound on $\policytime{\fastpolicy}(\sinit)$.
Then, we run SSP-O-REPS3 (setting $\eta$ as a function of our estimate instead of $D$) with a known state threshold of $L$.
When a state $s$ becomes known we compute an upper bound on $\policytime{\fastpolicy}(s)$ and in the next episodes we make the switch in this state using this estimate and not when the expected time is larger than $D/\cmin$.
The following theorem shows that we can set $L \approx \sqrt{K}$, and this leads to the same regret bound (as if we knew $D$ in advance) assuming $K$ is large enough.
Otherwise, the regret is just bounded by some constant that does not depend on $K$.

\begin{theorem}
    \label{thm:reg-bound-unknown-P-unknown-D}
    Under \cref{ass:c-min}, running SSP-O-REPS3 with $\eta = \sqrt{\frac{3 \log (\wt D(\sinit) |S| |A| / \cmin)}{K}}$ and \newline  $L = 2400 \max \{ |S|^2 |A| \log^2 \frac{K |S| |A|}{\delta \cmin} , \frac{\sqrt{K}}{\cmin \sqrt{|A|}} \log \frac{K |S| |A|}{\delta \cmin} \}$ ensures that, with probability at least $1 - \delta$,
    \[
        \regret
        \le
        O \left( \frac{D|S|}{\cmin} \sqrt{|A| K} \log \frac{K D |S| |A|}{\delta \cmin} + \frac{D^2 |S|^2 |A|}{\cmin^2} \log^3 \frac{K D |S| |A|}{\delta \cmin} \right),
    \]
    for
    $
        K 
        \ge 
        \max \biggl\{ \cmin^2 D |S|^4 |A|^3 \log^2 \frac{D |S| |A|}{\delta \cmin} , \frac{\cmin^2 D^4 |A|}{\min_{s \in S} \policytime{\fastpolicy}(s)^4} \log^4 \frac{D |S| |A|}{\delta \cmin} \biggr\}
    $. For smaller $K$,  we have 
    \[
        \regret 
        \le 
        \wt O \biggl( \frac{D^3 |S|^2 |A|}{\cmin^2} + \cmin^2 D^5 |A| + D^2 |S|^3 |A|^2 \biggr)
        \le
        \wt O \biggl( \frac{D^5 |S|^3 |A|^2}{\cmin^2} \biggr).
    \]
\end{theorem}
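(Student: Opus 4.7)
The plan is to reduce to Theorem~\ref{thm:reg-bound-unknown-P} by showing that, with high probability, the estimates $\wt D(\sinit)$ and $\{\wt D(s)\}_{s \in S}$ satisfy two properties simultaneously: (i) each is an upper bound on the corresponding $\policytime{\fastpolicy}(\cdot)$ so that the OMD feasibility argument (cf.\ Lemma~\ref{lem:best-in-bounded-oc-set}) and the policy-switch argument (cf.\ Lemma~\ref{lem:switch-imp-exp-cost} and Lemma~\ref{lem:bern-first-reg-term-bound}) remain valid; and (ii) each is at most $O(D)$ so that every $D$-dependent quantity in the analysis of Appendix~\ref{sec:unknown-P-proofs} is preserved up to constants. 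Under these two properties, re-running the proof of Theorem~\ref{thm:reg-bound-unknown-P} with $\wt D(\sinit)$ replacing $D$ in the learning rate and in the capacity $\tau = \wt D(\sinit)/\cmin$, and $\wt D(s)$ replacing $D$ in the per-state switching threshold, yields the same regret bound up to constants.

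First, I would invoke Lemma~\ref{lem:wtD-upp-bound-on-D} and Lemma~\ref{lem:wt-D-not-big-1} for each state $s$ and take a union bound (absorbing a $\log |S|$ factor into the $\tO$). Lemma~\ref{lem:wt-D-not-big-1} needs $L \ge |S|^2|A|\sqrt{D}\log^2(KD|S||A|/\delta)$, which is guaranteed by the first branch of the maximum in the definition of $L$. Lemma~\ref{lem:wtD-upp-bound-on-D} needs $L \ge 2400\, D^2/\policytime{\fastpolicy}(s)^2 \cdot \log^3(4K/\delta)$; the assumed lower bound $K \ge \cmin^2 D^4 |A|/\min_s \policytime{\fastpolicy}(s)^4 \cdot \log^4(\cdot)$ was chosen precisely so that the second branch $\sqrt{K}/(\cmin\sqrt{|A|})\cdot \log(\cdot)$ of the max dominates this quantity uniformly in $s$.

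Next, I would account for the exploration cost. Both estimation phases use the algorithm of \cite{cohen2020ssp} with constant cost $1$; by its regret bound each of the $L$ initial episodes plus the first $L$ visits to every state takes at most $\tO(D)$ expected time. Summing contributes $\tO(LD|S|)$ to the total cost, and with $L = \tO(\sqrt{K}/(\cmin\sqrt{|A|}))$ this is $\tO(D|S|\sqrt{K}/(\cmin\sqrt{|A|}))$, strictly dominated by the main $\tO(D|S|\sqrt{|A|K}/\cmin)$ term. The remaining steps---bounding the OMD regret with $\tau = \wt D(\sinit)/\cmin$ and decomposing the regret via intervals---are identical to the proof of Theorem~\ref{thm:reg-bound-unknown-P} once properties (i) and (ii) are in hand, because every appearance of $D$ in that proof is either a switching threshold (replaced by $\wt D(s)$) or a multiplicative constant (for which $\wt D(\cdot) = O(D)$ suffices).

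The main obstacle is that Lemma~\ref{lem:wtD-upp-bound-on-D} requires $L$ to scale with $1/\policytime{\fastpolicy}(s)^2$: for states whose expected hitting time under the fast policy is very small, the number of visits needed to certify $\wt D(s) \ge \policytime{\fastpolicy}(s)$ can be prohibitive, which is exactly why the horizon threshold in the theorem involves $\min_s \policytime{\fastpolicy}(s)$. When $K$ falls below this threshold, properties (i) and (ii) cannot be certified, and I would fall back to the polynomial-in-parameters bound on $\totaltime$ derived midway through the proof of Theorem~\ref{thm:reg-bound-unknown-P} combined with the trivial inequality $\regret \le \sum_k \sum_i c_k(s_i^k,a_i^k) \le \totaltime$; after substituting the threshold values for $K$, this yields the claimed $K$-independent fallback $\wt O(D^5|S|^3|A|^2/\cmin^2)$.
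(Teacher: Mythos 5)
The large-$K$ half of your argument matches the paper's: union-bound over \cref{lem:wtD-upp-bound-on-D,lem:wt-D-not-big-1}, conclude $\policytime{\fastpolicy}(s)\le\wt D(s)\le O(D)$ for all $s$, charge the estimation phases $\tO(LD|S|)$, and rerun \cref{thm:reg-bound-unknown-P} with $\wt D$ in place of $D$. (One small slip there: the hypothesis of \cref{lem:wt-D-not-big-1} requires $L\ge|S|^2|A|\sqrt{D}\log^2(\cdot)$, which the \emph{first} branch of the max does not supply — it is the second branch $\sqrt{K}/(\cmin\sqrt{|A|})$ together with the threshold $K\ge\cmin^2 D|S|^4|A|^3\log^2(\cdot)$ that does; that threshold term exists precisely for this purpose.)

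The genuine gap is in your small-$K$ fallback. You propose to use the polynomial bound $\totaltime=O(D^4|S|^4|A|^2K/\cmin^4\delta)$ from the proof of \cref{thm:reg-bound-unknown-P} together with $\regret\le\totaltime$. This fails on two counts. First, quantitatively: substituting $K\le\cmin^2 D|S|^4|A|^3\log^2(\cdot)$ gives roughly $D^5|S|^8|A|^5/\cmin^2$ with a multiplicative $1/\delta$ rather than a polylogarithm — far weaker than the claimed $\wt O(D^5|S|^3|A|^2/\cmin^2)$. Second, and more fundamentally, that bound on $\totaltime$ was derived assuming the switching thresholds use the true $D$; when $K$ is below the threshold this is exactly what you cannot certify, so the derivation does not carry over. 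The paper instead splits the small-$K$ regime into two failure modes and treats them separately. In the \emph{underestimate} case ($\wt D(s)<\policytime{\fastpolicy}(s)$ for some $s$), the OMD regret bound holds only against the restricted comparator $\pi^\star(\wt D(\sinit))$ (the best policy with expected time at most $\wt D(\sinit)$), and case (5) of \cref{lem:bern-first-reg-term-bound} no longer telescopes to $\le 0$, costing an extra $KD$; one then bounds the total cost by $K\wt D(\sinit)+(\text{modified regret})\le KD+\cdots$ and uses the fact that underestimation can only occur when $K<\cmin^2D^4|A|\log^4(\cdot)/\min_s\policytime{\fastpolicy}(s)^4$ to turn $KD$ into $\cmin^2D^5|A|\log^4(\cdot)$. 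In the \emph{overestimate} case, the regret bound holds but with $\wt D(s)$ in place of $D$; one shows $\wt D(s)\le O(D^{3/2})$ and uses that overestimation can only occur when $K<\cmin^2D|S|^4|A|^3\log^2(\cdot)$ to obtain $D^2|S|^3|A|^2+D^3|S|^2|A|/\cmin^2$. Your proposal misses both the comparator-change issue and the case split, so it does not establish the stated fallback bound.
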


\begin{proof}
    First assume that $K$ is large enough.
    By union bounds, \cref{lem:wtD-upp-bound-on-D,lem:wt-D-not-big-1} and the regret bound of SSP-O-REPS3 all hold with probability at least $1 - 3 |S| |A| \delta$ (because of the $O(\cdot)$ notation it is the same as $1 - \delta$).
    Therefore, $\policytime{\fastpolicy}(s) \le \wt D(s) \le O(D)$ for all $s \in S$.
    During the first $L$ episodes our cost is bounded as follows,
    \begin{align*}
        \sum_{k=1}^L \sum_{i=1}^{\Ik} c_k(s_i^k,a_i^k)
        & \le
        LD + O \left( D |S| \sqrt{|A| L} \log \frac{L D |S| |A|}{\delta} + D^{3/2} |S|^2 |A| \log^2 \frac{L D |S| |A|}{\delta} \right)
        \\
        & \le
        O \left( \frac{D |S|}{\cmin} \sqrt{|A| K} \log \frac{K D |S| |A|}{\delta} + \frac{D^{3/2} |S|^2 |A|}{\cmin^2} \log^3 \frac{K D |S| |A|}{\delta} \right),
    \end{align*}
    and then we bound the regret as in \cref{thm:reg-bound-unknown-P} to get the final result (the extra regret that comes from enlarging the known state threshold is at most $L D |S| |A|$ which is of the same order).
    
    When $K$ is too small we might encounter an underestimate or an overestimate of some $\policytime{\fastpolicy}(s)$.
    In the rest of the proof assume that $K > 2400 |S|^2 |A| \log^2 \frac{D |S| |A|}{\delta \cmin}$ because otherwise we never go past the diameter estimation phase and the regret is bounded by
    \[
        \regret
        \le
        \wt O \biggl( D^{3/2} |S|^2 |A| \biggr).
    \]
    By following the proof of \cref{lem:wt-D-not-big-1}, for $K > 2400 |S|^2 |A| \cmin^2 \log^2 \frac{D |S| |A|}{\delta \cmin}$ we have that $\wt D \le O (D^{3/2})$.
    
    \paragraph{Underestimate.} 
    The problem with an underestimate is that now our regret bound does not hold against $\pi^\star$, but only against the best policy in hindsight $\pi^\star(\wt D(\sinit))$ with expected time of at most $\wt D(\sinit)$.
    In addition, we may loose $D$ every time we switch to the fast policy (and the reason was reaching a ``bad'' state) by the proof of \cref{lem:bern-first-reg-term-bound}.
    Thus, the regret bound of SSP-O-REPS3 (without diameter estimation) gives
    \begin{align}
        \label{eq:underestimate-1}
        \sum_{k=1}^K \sum_{i=1}^{\Ik} c_k(s_i^k,a_i^k) - \sum_{k=1}^K \ctg{\pi^\star(\wt D(\sinit))}_k(\sinit)
        \le
        O \biggl( \frac{D |S|}{\cmin} \sqrt{|A| K} \log \frac{K D |S| |A|}{\delta \cmin} + \frac{D^2 |S|^2 |A|}{\cmin^2} \log^2 \frac{K D |S| |A|}{\delta \cmin} + KD \biggr).
    \end{align}
    We can use this to bound the total cost, and therfore the regret of the learner, as follows
    \begin{align*}
        \regret
        & \le
        \sum_{k=1}^K \sum_{i=1}^{\Ik} c_k(s_i^k,a_i^k)
        \\
        & \le
        \sum_{k=1}^K \ctg{\pi^\star(\wt D(\sinit))}_k(\sinit) + O \biggl( \frac{D |S|}{\cmin} \sqrt{|A| K} \log \frac{K D |S| |A|}{\delta \cmin} + \frac{D^2 |S|^2 |A|}{\cmin^2} \log^2 \frac{K D |S| |A|}{\delta \cmin} + KD \biggr)
        \\
        & \le
        O \biggl( K \wt D(\sinit) + \frac{D |S|}{\cmin} \sqrt{|A| K} \log \frac{K D |S| |A|}{\delta \cmin} + \frac{D^2 |S|^2 |A|}{\cmin^2} \log^2 \frac{K D |S| |A|}{\delta \cmin} + KD \biggr)
        \\
        & \le
        O \biggl( K D + \frac{D |S|}{\cmin} \sqrt{|A| K} \log \frac{K D |S| |A|}{\delta \cmin} + \frac{D^2 |S|^2 |A|}{\cmin^2} \log^2 \frac{K D |S| |A|}{\delta \cmin} \biggr)
        \\
        & \le
        \wt O \biggl( D^5 |A| \cmin^2 + D^3 |S| |A| + \frac{D^2 |S|^2 |A|}{\cmin^2} \biggr),
    \end{align*}
    where the second inequality follows from \cref{eq:underestimate-1}, the third because the expected time of $\pi^\star(\wt D(\sinit))$ is at most $\wt D(\sinit)$, the forth because $\wt D(\sinit) \le D$ as an underestimate, and the last one is because underestimation may occur when $K < \frac{\cmin^2 D^4 |A|}{\min_{s \in S} \policytime{\fastpolicy}(s)^4} \log^4 \frac{D |S| |A|}{\delta \cmin}$ according to \cref{lem:wtD-upp-bound-on-D}.
    
    \paragraph{Overestimate.} 
    At this situation our regret bound holds, but its dependence in $\wt D(s)$ is problematic because $\wt D(s)$ overestimates $D$ for some $s \in S$.
    However, according to \cref{lem:wt-D-not-big-1}, this may occur only when $K < \cmin^2 D |S|^4 |A|^3 \log^2 \frac{D |S| |A|}{\delta \cmin}$.
    In addition, as mentioned before, $\wt D(s) \le O(D^{3/2})$.
    Thus, we have
    \begin{align*}
        \regret
        & \le
        O \biggl( \frac{\wt D(s) |S|}{\cmin} \sqrt{|A| K} \log \frac{K D |S| |A|}{\delta \cmin} + \frac{\wt D(s)^2 |S|^2 |A|}{\cmin^2} \log^2 \frac{K D |S| |A|}{\delta \cmin} \biggr)
        \\
        & \le
        O \biggl( \frac{D^{3/2} |S|}{\cmin} \sqrt{|A| K} \log \frac{K D |S| |A|}{\delta \cmin} + \frac{D^3 |S|^2 |A|}{\cmin^2} \log^2 \frac{K D |S| |A|}{\delta \cmin} \biggr)
        \\
        & \le
        \wt O \biggl( D^2 |S|^3 |A|^2 + \frac{D^3 |S|^2 |A|}{\cmin^2} \biggr).
    \end{align*}
\end{proof}

\newpage
\section{Zero costs}
\label{sec:zero-cost-proofs}

We can artificially fulfil \cref{ass:c-min} by adding a small $\epsilon > 0$ perturbation to the costs.
That is, when $c_k$ is revealed, we pass to the learner the perturbed cost function $\tilde c_k(s,a) = \max \{ c_k(s,a),\epsilon \}$ for every $s \in S$ and $a \in A$.

Notice that changing the cost function does not change the transition function or the SSP-diameter.
However, the bias introduced by our perturbation adds an additional $\epsilon D^\star K$ term to the regret, where $D^\star$ is the expected time it takes the best policy in hindsight to reach the goal state.

Choosing $\epsilon$ to balance the algorithms' regret with the new term yields the following regret bounds for the general case.
\cref{thm:exp-reg-full-info-general} matches \cref{thm:exp-reg-full-info}, \cref{thm:hp-reg-full-info-general} matches \cref{thm:hp-reg-full-info}, \cref{thm:reg-bound-unknown-P-general} matches \cref{thm:reg-bound-unknown-P}, and \cref{thm:reg-bound-unknown-P-unknown-D-best-general} matches \cref{thm:reg-bound-unknown-P-unknown-D}. 

\begin{theorem}
    \label{thm:exp-reg-full-info-general}
    Running SSP-O-REPS with known transition function, $\eta = \sqrt{\frac{3 \log (D |S| |A| / \epsilon)}{K}}$ and $\epsilon = K^{-1/4}$ ensures that
    \[
        \bbE [ \regret ]
        \le
        O \left( D^\star K^{3/4} \sqrt{\log (K D |S| |A|)} \right).
    \]
\end{theorem}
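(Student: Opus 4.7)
The plan is a direct perturbation reduction to Theorem~\ref{thm:exp-reg-full-info}. I would feed SSP-O-REPS the perturbed cost sequence $\tilde c_k(s,a) = \max\{c_k(s,a), \epsilon\}$ in place of $c_k$, and run the algorithm with the $\cmin$ parameter set to $\epsilon$. Since $\tilde c_k \ge \epsilon$ pointwise, Assumption~\ref{ass:c-min} is satisfied for the perturbed sequence, and Lemma~\ref{lem:best-in-bounded-oc-set} (applied with $\cmin = \epsilon$) guarantees $q^{\pi^\star} \in \boundedocset{D/\epsilon}$, so $\pi^\star$ remains a valid comparator for the algorithm's feasible set. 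Theorem~\ref{thm:exp-reg-full-info} then applies verbatim to the perturbed game and yields
\[
    \bbE\Bigl[\sum_{k=1}^K \sum_{i=1}^{\Ik} \tilde c_k(s_i^k,a_i^k) - \sum_{k=1}^K \tilde J^{\pi^\star}_k(\sinit)\Bigr]
    \le
    \tO\Bigl(\frac{D}{\epsilon}\sqrt{K}\Bigr),
\]
where $\tilde J^\pi_k$ denotes the cost-to-go of $\pi$ with respect to $\tilde c_k$.

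Next I would translate the perturbed guarantee back to the true regret using two pointwise inequalities: (i) $c_k \le \tilde c_k$, so $\sum_i c_k(s_i^k,a_i^k) \le \sum_i \tilde c_k(s_i^k,a_i^k)$; and (ii) $\tilde c_k \le c_k + \epsilon$, which by linearity of expectation gives $\tilde J^{\pi^\star}_k(\sinit) \le \ctg{\pi^\star}_k(\sinit) + \epsilon\,\policytime{\pi^\star}(\sinit)$. Chaining these and summing over episodes yields
\[
    \bbE[\regret]
    \le
    \tO\Bigl(\frac{D}{\epsilon}\sqrt{K}\Bigr) + \epsilon\,\policytime{\pi^\star}(\sinit)\,K.
\]
Writing $D^\star = \policytime{\pi^\star}(\sinit)$ and plugging in $\epsilon = K^{-1/4}$ balances the two terms: the first becomes $\tO(D K^{3/4})$ and the second becomes $D^\star K^{3/4}$, which together match the claimed bound after absorbing $\log(D|S||A|/\epsilon) = O(\log(KD|S||A|))$ into the $\tO(\cdot)$.

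I do not anticipate a substantial technical obstacle here, since the argument is essentially bookkeeping once Theorem~\ref{thm:exp-reg-full-info} is treated as a black box. The only subtlety worth spelling out is that $\pi^\star$ is the minimizer of $\sum_k \ctg{\pi}_k(\sinit)$ under the true costs, not of $\sum_k \tilde J^{\pi}_k(\sinit)$ under the perturbed costs; hence one should first invoke Theorem~\ref{thm:exp-reg-full-info} against the best-in-hindsight comparator for the perturbed game and then apply inequality (ii) once to transfer that guarantee to $\pi^\star$. With this small caveat handled, every remaining step is a direct consequence of Theorem~\ref{thm:exp-reg-full-info} together with elementary cost-to-go manipulations.
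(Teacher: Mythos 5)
Your proposal is correct and follows essentially the same route as the paper: the argument in Appendix K is exactly this perturbation reduction --- run SSP-O-REPS on $\tilde c_k = \max\{c_k,\epsilon\}$ with $\cmin=\epsilon$, invoke Theorem~\ref{thm:exp-reg-full-info} for the perturbed game, add the bias term $\epsilon\,\policytime{\pi^\star}(\sinit)\,K$, and balance with $\epsilon=K^{-1/4}$. Your care with the comparator (transferring the guarantee from the perturbed-game optimum back to $\pi^\star$ via $\tilde c_k \le c_k+\epsilon$) is the same bookkeeping the paper leaves implicit, and like the paper you arrive at $\tO\bigl((D+D^\star)K^{3/4}\bigr)$, which is what the stated bound absorbs.
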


\begin{theorem}
    \label{thm:hp-reg-full-info-general}
    Running SSP-O-REPS2 with known transition function, $\eta = \sqrt{\frac{3 \log (D |S| |A| / \epsilon)}{K}}$ and $\epsilon = K^{-1/4} \sqrt{\log \frac{K D |S| |A|}{\delta}}$ ensures that, with probability $1 - \delta$,
    \[
        \regret
        \le
        O \left( D^\star K^{3/4} \log \frac{K D |S| |A|}{\delta} \right).
    \]
\end{theorem}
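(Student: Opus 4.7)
\medskip

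The plan is to reduce this general-case statement to the positive-cost bound of Theorem~\ref{thm:hp-reg-full-info} by running SSP-O-REPS2 on the perturbed cost sequence $\tilde c_k(s,a) = \max\{c_k(s,a),\epsilon\}$, for which Assumption~\ref{ass:c-min} holds with $\cmin = \epsilon$, and then paying a small bias for the discrepancy between $c_k$ and $\tilde c_k$. Throughout, the algorithm never sees the original costs: it is fed $\tilde c_k$ and executes on the unchanged transition dynamics, so the high-probability guarantee for the perturbed world applies verbatim.

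First, I would introduce the perturbed regret $\wt R_K$ obtained by replacing $c_k$ with $\tilde c_k$ both in the learner's accumulated cost and in the comparator's cost-to-go. Since the transitions are unchanged and all the analysis in Section~\ref{sec:high-prob-known-P} goes through with $\cmin$ replaced by $\epsilon$, Theorem~\ref{thm:hp-reg-full-info} yields, with probability at least $1-\delta$,
\[
    \wt R_K \;\le\; O\Bigl(\tfrac{D}{\epsilon}\sqrt{K\log^3\tfrac{KD|S||A|}{\delta\epsilon}}\Bigr).
\]

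Next I would relate $\regret$ to $\wt R_K$. On the learner's side, $c_k(s,a)\le \tilde c_k(s,a)$ pointwise, so $\sum_k\sum_i c_k(s_i^k,a_i^k) \le \sum_k\sum_i \tilde c_k(s_i^k,a_i^k)$. On the comparator's side, for any proper policy $\pi$ we have $\tilde c_k(s,a)\le c_k(s,a)+\epsilon$, whence $\tilde \ctg{\pi}_k(\sinit) \le \ctg{\pi}_k(\sinit)+\epsilon\policytime{\pi}(\sinit)$; applied to $\pi=\pi^\star$ this gives $\sum_k \tilde \ctg{\pi^\star}_k(\sinit) \le \sum_k\ctg{\pi^\star}_k(\sinit) + \epsilon K D^\star$ where $D^\star=\policytime{\pi^\star}(\sinit)$. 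Because the minimum over $\pi$ of the perturbed value is no larger than the perturbed value at $\pi^\star$, subtracting and rearranging yields the deterministic inequality
\[
    \regret \;\le\; \wt R_K + \epsilon K D^\star.
\]

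Finally, I would plug in $\epsilon = K^{-1/4}\sqrt{\log(KD|S||A|/\delta)}$. The Theorem~\ref{thm:hp-reg-full-info} term becomes $O(D K^{3/4}\log(KD|S||A|/\delta))$ (the extra $\log(1/\epsilon)$ inside the square root is absorbed into the same polylog factor), while the bias term becomes $O(D^\star K^{3/4}\sqrt{\log(KD|S||A|/\delta)})$. Combining them produces the stated bound.

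The main obstacle is really just the balancing of $\epsilon$: too small a perturbation makes the $1/\epsilon$ factor inside $\wt R_K$ blow up, while too large a perturbation lets the bias $\epsilon K D^\star$ dominate, and the crossover occurs precisely at the $K^{3/4}$ scale, which is strictly worse than the $\sqrt K$ rate of Theorem~\ref{thm:hp-reg-full-info} and cannot be improved by this black-box reduction. A minor care item is to make sure the single high-probability event from Theorem~\ref{thm:hp-reg-full-info} (under which $\wt R_K$ is controlled) suffices for the whole argument, but this is immediate because the bias bound $\epsilon K D^\star$ is deterministic.
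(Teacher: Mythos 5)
Your proposal is correct and follows essentially the same route as the paper: Appendix~K's argument is exactly this black-box reduction --- feed the algorithm $\tilde c_k = \max\{c_k,\epsilon\}$ so that Theorem~\ref{thm:hp-reg-full-info} applies with $\cmin=\epsilon$, pay the deterministic bias $\epsilon K D^\star$ via $c_k \le \tilde c_k \le c_k + \epsilon$ and the suboptimality of $\pi^\star$ for the perturbed comparator, and balance by the stated choice of $\epsilon$. The only looseness is that the algorithmic term comes out as $O(D K^{3/4}\log(\cdot))$ rather than $O(D^\star K^{3/4}\log(\cdot))$, so the combination really yields $O((D+D^\star)K^{3/4}\log(\cdot))$; this is inherited from the paper's own one-paragraph treatment and is not a defect of your argument.
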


\begin{theorem}
    \label{thm:reg-bound-unknown-P-general}
    Running SSP-O-REPS3 with known SSP-diameter $D$, $\eta = \sqrt{\frac{3 \log (D |S| |A| / \epsilon)}{K}}$ and \newline $\epsilon=K^{-1/4} |S| \sqrt{|A| \log \frac{K D |S| |A|}{\delta}}$ ensures that, with probability $1 - \delta$,
    \[
        \regret
        \le
        O \left( D^\star |S| \sqrt{|A|} K^{3/4} \log \frac{K D |S| |A|}{\delta} + D^2 \sqrt{K} \log \frac{K D |S| |A|}{\delta} \right).
    \]
\end{theorem}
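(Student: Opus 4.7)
The plan is to reduce the zero-cost case to the positive-cost case already handled by Theorem~\ref{thm:reg-bound-unknown-P} via the perturbation described in Appendix~K. Concretely, define the perturbed cost functions $\tilde c_k(s,a) = \max\{c_k(s,a), \epsilon\}$. By construction, $\tilde c_k(s,a) \ge \epsilon$ for every $(s,a)$, so feeding $\tilde c_k$ to SSP-O-REPS3 (in place of $c_k$) places us in a setting where Assumption~\ref{ass:c-min} holds with effective minimal cost $\cmin = \epsilon$. Since the transition function and the SSP-diameter are unchanged by this perturbation, Theorem~\ref{thm:reg-bound-unknown-P} applies as is.

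The next step is to relate the regret under the original costs to the regret under the perturbed costs. Because $\tilde c_k \ge c_k$ pointwise, the learner's actual cost is bounded by its perturbed cost. For the comparator, let $\pi^\star$ denote the best proper policy in hindsight w.r.t.\ $\{c_k\}$; then for each $k$
\[
    \ctg{\pi^\star}_k(\sinit) \ge \sum_{s,a} q^{\pi^\star}(s,a) c_k(s,a) \ge \sum_{s,a} q^{\pi^\star}(s,a) \tilde c_k(s,a) - \epsilon \policytime{\pi^\star}(\sinit).
\]
Summing over $k$ and using $\policytime{\pi^\star}(\sinit) \le D^\star$ yields that the regret $\regret$ w.r.t.\ the original costs is bounded by the regret of SSP-O-REPS3 on $\{\tilde c_k\}$ (against its own best comparator, which can only be smaller than the perturbed cost of $\pi^\star$) plus an additive bias $\epsilon D^\star K$.

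Applying Theorem~\ref{thm:reg-bound-unknown-P} with $\cmin = \epsilon$ then gives, with probability $1-\delta$,
\[
    \regret \;\le\; O\!\Bigl(\epsilon D^\star K + \tfrac{D|S|}{\epsilon}\sqrt{|A| K}\log\tfrac{KD|S||A|}{\delta\epsilon} + \tfrac{D^2|S|^2|A|}{\epsilon^2}\log^2\tfrac{KD|S||A|}{\delta\epsilon}\Bigr).
\]
Plugging in the stated choice $\epsilon = K^{-1/4} |S| \sqrt{|A|\log(KD|S||A|/\delta)}$ balances the first two terms, producing an $O(D^\star |S| \sqrt{|A|} K^{3/4} \log(\cdot))$ contribution; the third term becomes $O(D^2\sqrt{K}\log(\cdot))$, and a stray $O(DK^{3/4})$ piece from the mirror-descent term is dominated by the first (using $D \le D^\star |S|\sqrt{|A|}$ trivially).

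I do not expect any real obstacle here: the argument is purely a black-box reduction to the already-proved Theorem~\ref{thm:reg-bound-unknown-P}, and the only non-routine step is the bias calculation relating the two comparators, which is a one-line application of linearity of the expected cost in the occupancy measure $q^{\pi^\star}$ together with $\langle q^{\pi^\star}, \mathbf{1}\rangle = \policytime{\pi^\star}(\sinit) \le D^\star$. The mildly delicate point is noting that $D^\star$ (the expected time of the original best policy in hindsight) can legitimately appear in the bound rather than the worst-case $D/\epsilon$, because the bias is computed against $\pi^\star$ itself rather than the best perturbed policy.
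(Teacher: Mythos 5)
Your reduction is exactly the paper's argument: Appendix~K proves \cref{thm:reg-bound-unknown-P-general} precisely by running SSP-O-REPS3 on $\tilde c_k = \max\{c_k,\epsilon\}$, invoking \cref{thm:reg-bound-unknown-P} with $\cmin=\epsilon$, and adding the bias $\epsilon D^\star K$ against the original comparator $\pi^\star$ (your observation that the perturbed problem's own best-in-hindsight can only do better than the perturbed cost of $\pi^\star$ is the right way to make that comparison legitimate, since $\policytime{\pi^\star}(\sinit)$ need not lie below $D/\epsilon$). The one step I would not wave through is your claim that $D \le D^\star |S|\sqrt{|A|}$ ``trivially'': $D=\max_{s}\policytime{\fastpolicy}(s)$ whereas $D^\star=\policytime{\pi^\star}(\sinit)$ only dominates $\policytime{\fastpolicy}(\sinit)$, so $D$ can exceed $D^\star$ by an arbitrary factor when some state other than $\sinit$ is far from the goal; the leftover $\tO(DK^{3/4})$ mirror-descent term therefore needs to be absorbed differently (or carried explicitly), though the paper's own one-line treatment of this theorem glosses over the same point.
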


\begin{theorem}
    \label{thm:reg-bound-unknown-P-unknown-D-best-general}
    Running SSP-O-REPS3 with $\epsilon=K^{-1/4} |S| \sqrt{|A| \log \frac{K \wt D(\sinit) |S| |A|}{\delta}}$, $\eta = \sqrt{\frac{3 \log (\wt D(\sinit) |S| |A| / \epsilon)}{K}}$ and \newline $L = 2400 \max \{ |S|^2 |A| \log^2 \frac{K |S| |A|}{\delta \epsilon} , \frac{\sqrt{K}}{\epsilon \sqrt{|A|}} \log \frac{K |S| |A|}{\delta \epsilon} \}$ ensures that, with probability at least $1 - \delta$,
    \[
        \regret
        \le
        O \left( D^\star |S| \sqrt{|A|} K^{3/4} \log \frac{K D |S| |A|}{\delta} + D^2 \sqrt{K} \log^2 \frac{K D |S| |A|}{\delta} \right),
    \]
    for
    $
        K 
        \ge 
        \max \biggl\{ D^{2/3} |S|^4 |A|^{8/3} \log^2 \frac{D |S| |A|}{\delta} , \frac{D^{8/3} |S|^{4/3} |A|^{4/3} \log^{10/3} \frac{ D |S| |A|}{\delta}}{\min_{s \in S} \policytime{\fastpolicy}(s)^{8/3}} \biggr\}.
    $
    For smaller $K$, we have 
    \begin{align*}
        \regret 
        & \le 
        \wt O \biggl( D^\star |S| \sqrt{|A|} K^{3/4} + D^3 \sqrt{K} + \frac{D^5 |S|^2 |A|^2}{\sqrt{K}} + D^2 |S|^3 |A|^2 \biggr)
        \\
        & \le
        \wt O \biggl( D^\star |S| \sqrt{|A|} K^{3/4} + D^3 \sqrt{K} + D^5 |A| + D |S|^3 |A|^2 \biggr)
        \\
        & \le
        \wt O \biggl( D^\star |S| \sqrt{|A|} K^{3/4} + D^3 \sqrt{K} + D^5 |S|^3 |A|^2 \biggr).
    \end{align*}
\end{theorem}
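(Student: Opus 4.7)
The plan is to reduce to Theorem~\ref{thm:reg-bound-unknown-P-unknown-D} by feeding SSP-O-REPS3 the perturbed costs $\tilde c_k(s,a) = \max\{c_k(s,a),\epsilon\}$, with the diameter-estimation preamble and parameters chosen exactly as in that theorem but with the effective minimum $\cmin$ replaced by $\epsilon$.

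The first step is a standard bias decomposition. Since $c_k \le \tilde c_k \le c_k + \epsilon$ pointwise, the learner's actual total cost is upper-bounded by its total cost under $\tilde c_k$, and every proper policy $\pi$ satisfies $\tilde J_k^\pi(\sinit) \le J_k^\pi(\sinit) + \epsilon T^\pi(\sinit)$. Denoting by $\wt R_K$ the regret of the algorithm against the best proper policy for the perturbed costs, optimality of that policy yields
\[
    R_K \le \wt R_K + \epsilon T^{\pi^\star}(\sinit) K = \wt R_K + \epsilon D^\star K.
\]

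The second step is to invoke Theorem~\ref{thm:reg-bound-unknown-P-unknown-D} with $\cmin$ replaced by $\epsilon$ to bound $\wt R_K$. With the stated $\epsilon = K^{-1/4}|S|\sqrt{|A|\log(K\wt D(\sinit)|S||A|/\delta)}$, the leading regret term $\frac{D|S|}{\epsilon}\sqrt{|A|K}\log(\cdot)$ collapses to order $DK^{3/4}\sqrt{\log(\cdot)}$, the additive term $\frac{D^2|S|^2|A|}{\epsilon^2}\log^3(\cdot)$ becomes $D^2\sqrt{K}\log^2(\cdot)$, and the bias $\epsilon D^\star K$ becomes $D^\star|S|\sqrt{|A|\log(\cdot)}\,K^{3/4}$; summing these (and noting that the intermediate $DK^{3/4}$ contribution is absorbed by the other two in the stated large-$K$ regime) yields the main bound. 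The two thresholds on $K$ in Theorem~\ref{thm:reg-bound-unknown-P-unknown-D} translate, after substituting $\cmin^2 = \epsilon^2 = K^{-1/2}|S|^2|A|\log(\cdot)$ and isolating $K$ via $K^{3/2} \ge X \iff K \ge X^{2/3}$, into precisely the two stated conditions.

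For $K$ below these thresholds, we combine the small-$K$ fallback bound of Theorem~\ref{thm:reg-bound-unknown-P-unknown-D} (with $\cmin \mapsto \epsilon$) with the bias $\epsilon D^\star K$. The principal obstacle is not conceptual but careful bookkeeping: one must verify that the chosen $L$ still satisfies the preamble requirements of Theorem~\ref{thm:reg-bound-unknown-P-unknown-D} after the substitution $\cmin \mapsto \epsilon$, and then simplify the resulting polynomial-in-$K$ and polynomial-in-$D,|S|,|A|$ expressions---using $\epsilon \le 1$ and $\wt D(\sinit) \le O(D)$ from Lemma~\ref{lem:wt-D-not-big-1}---so that they collapse into the stated $\wt O$-form.
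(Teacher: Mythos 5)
Your proposal is correct and follows essentially the same route as the paper: Appendix K's (very terse) argument is exactly the perturbation $\tilde c_k = \max\{c_k,\epsilon\}$, the additive bias $\epsilon D^\star K$, and a substitution of $\cmin \mapsto \epsilon$ into Theorem~\ref{thm:reg-bound-unknown-P-unknown-D}, with $\epsilon$ chosen to balance the terms. Your bias decomposition $R_K \le \wt R_K + \epsilon D^\star K$ and the translation of the $K$-thresholds are in fact spelled out more carefully than in the paper itself.
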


Note that for $\epsilon \le 1$ in \cref{thm:reg-bound-unknown-P-general,thm:reg-bound-unknown-P-unknown-D-best-general}, we need $K \ge |S|^4 |A|^2$.
However, if $K < |S|^4 |A|^2$ (this is something the algorithm can check) we can just stay in the diameter estimation phase (i.e., assume all costs are $c(s,a)=1$) and get a regret of $\tO \bigl( D |S|^4 |A|^2 + D^{3/2} |S|^2 |A| \bigr)$ (or tune $\epsilon$ especially for this case for better results).

\newpage
\section{Concentration inequalities}
\label{sec:con-ineq}

\begin{theorem}[Anytime Azuma]
    \label{thm:anytime-azuma}
    Let $(X_n)_{n=1}^\infty$ be a martingale difference sequence such that $|X_n| \le B_n$ almost surely. 
    Then with probability at least $1 - \delta$,
    \[
        \Bigl| \sum_{n=1}^N X_n \Bigr| 
        \le 
        4 \sqrt{\sum_{n=1}^N B_n^2 \log \frac{N}{\delta}} \quad \forall N \ge 1.
    \]
\end{theorem}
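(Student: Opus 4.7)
The plan is to derive the anytime tail bound by combining the classical Azuma--Hoeffding inequality with a union bound over $N$. Setting $V_N = \sum_{n=1}^N B_n^2$, for any \emph{fixed} $N$ the standard Azuma--Hoeffding inequality applied to $S_N = \sum_{n=1}^N X_n$ with bounded differences $|X_n| \le B_n$ yields $\Pr[|S_N| > t] \le 2 \exp(-t^2/(2 V_N))$. Solving for $t$, the choice $t_N = \sqrt{2 V_N \log(2/\delta_N)}$ forces the single-round failure probability to be at most $\delta_N$.

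Second, I would assign a summable sequence of per-round budgets $\delta_N$ with $\sum_{N \ge 1} \delta_N \le \delta$. The cleanest choice is $\delta_N = 6\delta/(\pi^2 N^2)$, which sums to exactly $\delta$. A union bound over $N$ then guarantees that with probability at least $1-\delta$, the event $|S_N| \le t_N$ holds for \emph{every} $N \ge 1$ simultaneously.

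Third, I would simplify $t_N$ into the form stated in the theorem. With the above choice of $\delta_N$, one has $\log(2/\delta_N) = \log(\pi^2 N^2/(3\delta))$, which is upper bounded by $8 \log(N/\delta)$ for any $N \ge 1$ and any $\delta$ bounded away from $1$ (e.g.\ $\delta \le 1/2$). This gives $t_N \le \sqrt{16 V_N \log(N/\delta)} = 4 \sqrt{V_N \log(N/\delta)}$, matching the stated bound.

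The main obstacle is purely constant bookkeeping: one has to verify that the quadratic inflation $N^2$ coming from the summable sequence $\delta_N$ is compatible with the single $\log(N/\delta)$ factor that the theorem allows inside the square root, and that the numerical constants line up so that $\sqrt{2 \log(2/\delta_N)}$ fits under the promised leading factor of $4 \sqrt{\log(N/\delta)}$. Since this obstruction is only a constant, more refined approaches (e.g., a peeling/dyadic argument yielding $\log \log N$ instead of $\log N$, or the method of mixtures) are unnecessary for the stated bound.
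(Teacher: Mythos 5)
Your proof is correct: the classical Azuma--Hoeffding bound for each fixed $N$, a union bound over $N$ with the summable budget $\delta_N = 6\delta/(\pi^2 N^2)$, and the constant check that $2\log(\pi^2 N^2/(3\delta)) \le 16\log(N/\delta)$ (which holds for all $N \ge 1$ once $\delta$ is bounded away from $1$, e.g.\ $\delta \le 1/2$) together give exactly the stated inequality. The paper states Theorem~11 without proof, treating it as a standard fact, and your argument is precisely the standard derivation it implicitly relies on; the only caveat worth recording is the mild restriction on $\delta$ needed to absorb the $N^2$ inflation into the single $\log(N/\delta)$ factor, which is harmless since the bound is applied with small $\delta$ throughout the paper.
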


\begin{lemma}[\cite{cohen2020ssp}, Lemma B.15] 
    \label{lem:martingalevariance}
    Let $(X_t)_{t=1}^\infty$ be a martingale difference sequence adapted to the filtration $(\calF_t)_{t=0}^\infty$. Let $Y_n = (\sum_{t=1}^n X_t)^2 - \sum_{t=1}^n \bbE[X_t^2 \mid \calF_{t-1}]$. Then $(Y_n)_{n=0}^\infty$ is a martingale, and in particular if $\tau$ is a stopping time such that $\tau \le c$ almost surely, then $\bbE[Y_\tau] = 0$.
\end{lemma}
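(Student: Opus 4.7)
The plan is to prove both claims by a direct conditional-expectation calculation followed by the optional stopping theorem for bounded stopping times. Denote $S_n = \sum_{t=1}^n X_t$ so that $Y_n = S_n^2 - \sum_{t=1}^n \bbE[X_t^2 \mid \calF_{t-1}]$, with the convention $S_0 = 0$ and $Y_0 = 0$. The sequence $(Y_n)$ is clearly adapted to $(\calF_n)$, and integrability follows from the standing assumption that the $X_t$ are square-integrable (implicit in the very definition of $Y_n$).

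For the martingale property, I would compute the one-step increment using the elementary identity
\[
S_{n+1}^2 - S_n^2 = 2 S_n X_{n+1} + X_{n+1}^2,
\]
which yields
\[
Y_{n+1} - Y_n = 2 S_n X_{n+1} + X_{n+1}^2 - \bbE[X_{n+1}^2 \mid \calF_n].
\]
Taking $\bbE[\,\cdot \mid \calF_n]$, the first term vanishes because $S_n$ is $\calF_n$-measurable and $\bbE[X_{n+1} \mid \calF_n] = 0$ by the martingale-difference property, while the last two terms cancel by the definition of the conditional expectation in $Y_n$. Hence $\bbE[Y_{n+1} \mid \calF_n] = Y_n$, which is exactly the martingale property.

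For the second claim, since $\tau \le c$ almost surely, the optional stopping theorem for bounded stopping times applies immediately and gives $\bbE[Y_\tau] = \bbE[Y_0] = 0$. Alternatively, one can verify this by hand via
\[
\bbE[Y_\tau] = \bbE[Y_0] + \sum_{n=0}^{c-1} \bbE\bigl[ (Y_{n+1} - Y_n) \ind\{\tau > n\} \bigr],
\]
observing that $\{\tau > n\} \in \calF_n$ so that each summand equals $\bbE\bigl[\ind\{\tau > n\} \bbE[Y_{n+1} - Y_n \mid \calF_n]\bigr] = 0$ by the first part.

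There is no substantive obstacle here — the result is the classical statement that $S_n^2$ minus its conditional quadratic-variation compensator is a martingale, combined with a textbook application of optional stopping. The only care required is in the bookkeeping of the conditional expectations and in noting that the almost-sure boundedness of $\tau$ obviates any need for uniform-integrability arguments.
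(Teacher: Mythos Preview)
Your proof is correct and entirely standard: the increment computation $Y_{n+1}-Y_n = 2S_nX_{n+1} + X_{n+1}^2 - \bbE[X_{n+1}^2\mid\calF_n]$ together with $\bbE[X_{n+1}\mid\calF_n]=0$ gives the martingale property, and bounded optional stopping does the rest. Note that the paper does not actually supply its own proof of this lemma; it merely quotes the statement from \cite{cohen2020ssp}, so there is nothing to compare against beyond confirming that your argument is the textbook one.
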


\begin{lemma}[\cite{cohen2020ssp}, Lemma D.4] 
    \label{lem:martingalte-multiplicative-bound}
    Let $(X_n)_{n=1}^\infty$ be a sequence of random variables with expectation adapted to the filtration $(\calF_n)_{n=0}^\infty$.
    Suppose that $0 \le X_n \le B$ almost surely. Then with probability at least $1-\delta$, the following holds for all $n \ge 1$ simultaneously:
    \begin{equation}
        \label{eq:anytime-bern-3}
        \sum_{i=1}^n \bbE[X_i \mid \calF_{i-1} ]
        \le
        2 \sum_{i=1}^n X_i + 4 B \log \frac{2n}{\delta}.
    \end{equation}
\end{lemma}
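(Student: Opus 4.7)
The inequality has the form ``predictable sum $\le$ (twice the) observed sum + error,'' which is a one-sided Freedman/Bernstein-type bound on the martingale difference $D_i = \bbE[X_i \mid \calF_{i-1}] - X_i$. The plan is to use the standard Chernoff/Laplace method: construct a nonnegative supermartingale whose logarithm is an affine combination of $S_n = \sum_{i=1}^n X_i$ and $P_n = \sum_{i=1}^n \bbE[X_i\mid\calF_{i-1}]$, then combine Markov's inequality with a union bound over $n$ to obtain the anytime conclusion.

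Concretely, fix $\lambda = 1/B$ and define
\[
    M_n = \exp\!\Bigl(-\lambda S_n + \tfrac{\lambda}{2} P_n\Bigr), \qquad M_0 = 1.
\]
I would first verify that $(M_n)$ is a supermartingale. The key pointwise inequality is $e^{-u} \le 1 - u + u^2/2$ for $u \ge 0$, which together with the boundedness $0 \le X_i \le B$ (so $X_i^2 \le B X_i$) yields
\[
    e^{-\lambda X_i} \;\le\; 1 - \lambda X_i + \tfrac{\lambda^2}{2} X_i^2 \;\le\; 1 - \lambda X_i\bigl(1 - \tfrac{\lambda B}{2}\bigr) \;=\; 1 - \tfrac{\lambda}{2} X_i,
\]
using our choice $\lambda B = 1$. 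Taking conditional expectations, $\bbE[e^{-\lambda X_i}\mid\calF_{i-1}] \le 1 - \tfrac{\lambda}{2}\bbE[X_i\mid\calF_{i-1}] \le \exp(-\tfrac{\lambda}{2}\bbE[X_i\mid\calF_{i-1}])$, so $\bbE[M_n/M_{n-1}\mid\calF_{i-1}] \le 1$. Hence $M_n$ is a nonnegative supermartingale with $\bbE[M_n] \le 1$ for every $n$.

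To get the ``for all $n$ simultaneously'' statement I would avoid Ville's maximal inequality (which would sharpen the log factor but does not match the form $4B\log(2n/\delta)$) and instead apply Markov's inequality pointwise. Set $\delta_n = \delta/(n(n+1))$; by Markov, $\Pr[M_n \ge 1/\delta_n] \le \delta_n$, and $\sum_{n\ge 1}\delta_n = \delta$, so by a union bound the event $\{M_n \le 1/\delta_n \text{ for all } n\ge 1\}$ has probability at least $1-\delta$. On this event, for every $n$,
\[
    -\lambda S_n + \tfrac{\lambda}{2} P_n \;\le\; \log\!\bigl(n(n+1)/\delta\bigr).
\]
Multiplying by $2/\lambda = 2B$ and rearranging gives $P_n \le 2 S_n + 2B\log(n(n+1)/\delta)$. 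Since $n+1 \le 2n$ for $n \ge 1$, we have $n(n+1) \le 2n^2$ and hence $\log(n(n+1)/\delta) \le \log(2n^2/\delta) \le 2\log(2n/\delta)$ (the latter using $\delta \le 1$), which upgrades the constant to exactly the stated $4B\log(2n/\delta)$.

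The main (really the only) obstacle is getting the MGF step and constants to line up: the magic is choosing $\lambda = 1/B$ so that the quadratic correction $\lambda^2 X_i^2/2 \le \lambda X_i/2$ eats exactly half of the linear term, producing the factor $2$ in front of $\sum X_i$. After that, the anytime extension is routine, and the $\log 4$ slack in the inequality $\log(n(n+1)/\delta) \le 2\log(2n/\delta)$ is what forces the final constant to be $4B$ rather than $2B$.
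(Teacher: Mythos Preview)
Your proof is correct. The paper does not actually prove this lemma; it is quoted verbatim from \cite{cohen2020ssp} (Lemma D.4 there) and stated without proof in the appendix, so there is no ``paper's approach'' to compare against. The argument you give --- building the exponential supermartingale $M_n = \exp(-\lambda S_n + \tfrac{\lambda}{2}P_n)$ with $\lambda = 1/B$, using $e^{-u} \le 1 - u + u^2/2$ for $u \ge 0$ together with $X_i^2 \le B X_i$ to get the factor $\tfrac12$, then a union bound with weights $\delta_n = \delta/(n(n+1))$ --- is precisely the standard Freedman/Bernstein route and recovers the stated constants exactly. One trivial typo: in the supermartingale step you wrote $\bbE[M_n/M_{n-1}\mid\calF_{i-1}]$ where you meant $\calF_{n-1}$.
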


\begin{lemma}
    \label{lem:bounded-exp-for-large-values}
    Let $X$ be a non-negative random variable such that $\Pr[|X| > m] \le a e^{-m/b}$ ($a \ge 1$) for all $m \ge 0$.
    Then, $\bbE [ X\indevent{X > r}] \le a(r+b)e^{-r/b}$.
\end{lemma}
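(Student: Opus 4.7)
The plan is to exploit the standard layer-cake / tail-integration formula for the expectation of a non-negative random variable. Since $X \ge 0$, the truncated variable $Y = X \indevent{X > r}$ is itself non-negative, so
\[
    \bbE[Y] = \int_0^\infty \Pr[Y > t] \, dt.
\]
The first step is to identify $\Pr[Y > t]$ in two regimes: for $t < r$, the event $\{Y > t\}$ coincides with $\{X > r\}$ (because $Y$ is either $0$ or at least $r$), whereas for $t \ge r$, the event $\{Y > t\}$ coincides with $\{X > t\}$.

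The second step is to split the integral accordingly and plug in the hypothesis $\Pr[X > m] \le a e^{-m/b}$:
\[
    \bbE[Y]
    =
    \int_0^r \Pr[X > r] \, dt + \int_r^\infty \Pr[X > t] \, dt
    \le
    r \cdot a e^{-r/b} + a \int_r^\infty e^{-t/b} \, dt.
\]
The last step is the one-line computation $\int_r^\infty e^{-t/b} dt = b e^{-r/b}$, which yields the claimed bound $a(r+b)e^{-r/b}$.

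There is essentially no obstacle here beyond bookkeeping; the only subtlety is the correct identification of $\{Y > t\}$ in the two regimes, which uses the fact that $Y$ takes value $0$ on $\{X \le r\}$ (so the indicator constraint $X > r$ is automatically vacuous once $t \ge r$, and is the binding constraint when $t < r$). The hypothesis $a \ge 1$ is not actually needed for this bound — it only serves to make the statement meaningful by allowing $a e^{-m/b}$ to be a valid probability bound at $m = 0$.
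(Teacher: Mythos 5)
Your proof is correct and is essentially the same as the paper's: both reduce $\bbE[X\indevent{X>r}]$ to $r\Pr[X>r]+\int_r^\infty \Pr[X>t]\,dt$ via the tail-integral formula and then evaluate the exponential integral (the paper gets there by writing $X\indevent{X>r}=r\indevent{X>r}+(X-r)\indevent{X-r>0}$ and applying the layer-cake formula to the second term, while you apply it directly to $X\indevent{X>r}$ and split the integral at $t=r$ — a purely cosmetic difference). Your identification of $\{Y>t\}$ in the two regimes is accurate, and your side remark that $a\ge 1$ is not needed for this bound is also correct.
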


\begin{proof}
    We have that,
    \[
        \bbE [ X\indevent{X > r}]
        =
        r \Pr [X > r] + \bbE [ (X-r) \indevent{X -r > 0}],
    \]
    and
    \begin{align*}
        \bbE [ (X-r) \indevent{X -r > 0}]
        & =
        \int_{m=0}^\infty \Pr [X-r > m] dm
        \\
        & =
        \int_{m=r}^\infty \Pr [X > m] dm
        \\
        & \le
        \int_{m=r}^\infty a e^{-m/b} dm
        \\
        & =
        ab e^{-r/b}.
    \end{align*}
    Hence $\bbE [ X\indevent{X > r}] \le a(r+b)e^{-r/b}$ as required.
\end{proof}

\begin{theorem}[Anytime Azuma for Unbounded Martingales]
    \label{thm:unbounded-azuma}
    Let $(X_n)_{n=1}^\infty$ be a non-negative martingale difference sequence adapted to the filtration $(\calF_n)_{n=1}^\infty$ such that $\Pr[|X_n| > m] \le a e^{-m/b}$ ($a \ge 1$) for all $n \ge 1$ and $m \ge 0$. 
    Then, with probability at least $1 - \delta$,
    \[
        \Bigl| \sum_{n=1}^N X_n \Bigr| 
        \le 
        11 b \sqrt{N \log^3 \frac{2aN}{\delta}} \quad \forall N \ge 1.
    \]
\end{theorem}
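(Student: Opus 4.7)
The plan is to run a classical truncation argument that reduces the sub-exponential-tail setting to the bounded-increment anytime Azuma inequality (Theorem 10). The idea is to cut each $X_n$ off at a level $r = b \log(2aN/\delta)$ chosen so that, on a good event of probability $\ge 1 - \delta/2$, no $X_n$ is truncated for $n \le N$, while simultaneously the re-centered truncated increments $Z_n = X_n \indevent{|X_n| \le r} - \bbE[X_n \indevent{|X_n| \le r} \mid \calF_{n-1}]$ are bounded by $2r$ so that the bounded Azuma inequality applies.

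Concretely, I would first fix $N \ge 1$ and proceed in three steps. By the tail hypothesis and a union bound, $\Pr[\exists n \le N : |X_n| > r] \le N a e^{-r/b} = \delta/2$, so on this event $\sum_{n=1}^N X_n$ equals the sum of the truncated variables $Y_n = X_n \indevent{|X_n| \le r}$. Next, applying Theorem 10 to the MDS $(Z_n)$ with bound $2r$ gives $|\sum_{n=1}^N Z_n| \le 8r\sqrt{N \log(N/\delta)}$ with probability at least $1-\delta/2$. Finally, for the centering bias $\mu_n = \bbE[Y_n \mid \calF_{n-1}]$, the MDS property $\bbE[X_n \mid \calF_{n-1}] = 0$ gives $\mu_n = -\bbE[X_n \indevent{|X_n| > r} \mid \calF_{n-1}]$, and Lemma 14 applied conditionally bounds this by $a(r+b)e^{-r/b} = (r+b)\delta/(2N)$ a.s., so $|\sum_n \mu_n| \le (r+b)\delta/2$ is a lower-order term. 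Intersecting the two good events and substituting $r$ yields $|\sum_{n=1}^N X_n| \le 8 b \log(2aN/\delta)\sqrt{N \log(N/\delta)} + O(b \log(aN/\delta))$, which collapses to roughly $8 b \sqrt{N \log^3(2aN/\delta)}$.

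To convert this fixed-$N$ bound into the claimed anytime statement, I would invoke it along a dyadic schedule $N_k = 2^k$ with per-scale confidence $\delta_k = \delta/(2(k+1)^2)$, union-bound over $k \ge 0$, and bound any $N \in [N_k, N_{k+1})$ by the bound at $N_{k+1}$. The additional $\log(k+1)$ inside the confidence budget is easily absorbed into the existing $\log^3(2aN/\delta)$ factor, and careful accounting of the constants pushes the leading coefficient from $8$ up to the claimed $11$.

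The main obstacle is purely technical bookkeeping. Lemma 14 must be applied conditionally on $\calF_{n-1}$, which implicitly requires the sub-exponential tail bound to hold conditionally as well; this is true in the intended applications (Lemma 5, Lemma \textsc{wtD}-upper-bound) since the bound there comes from Markov-chain hitting-time arguments that are conditional on the history. Beyond that, the constants arising from truncation, the Azuma step, the doubling trick, and the $(r+b)$ correction all have to be juggled precisely to recover the exact coefficient $11b$ rather than a looser numerical factor.
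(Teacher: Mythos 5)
Your proposal follows the same core strategy as the paper's proof: truncate the increments, apply the bounded anytime Azuma inequality (Theorem 10) to the re-centered truncated sequence, and control the centering bias via Lemma 14 applied conditionally (a step the paper also performs, with the same implicit requirement that the tail bound hold conditionally given $\calF_{n-1}$). The one genuine difference is how the anytime guarantee is obtained. The paper truncates at an $n$-dependent level $r_n = 2b\log(2an/\delta)$, so that $\sum_n \Pr[|X_n| > r_n] \le \sum_n \delta/(4n^2) \le \delta/2$ and a single application of Theorem 10 delivers the bound for all $N$ simultaneously; no doubling trick is needed. You instead fix the truncation level per target horizon and union-bound over dyadic blocks. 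This can be made to work, but two points need care. First, a fixed-$N$ endpoint bound does not by itself control $\bigl|\sum_{n=1}^N X_n\bigr|$ for $N$ strictly inside a block $[N_k, N_{k+1})$; as written, ``bound any $N \in [N_k, N_{k+1})$ by the bound at $N_{k+1}$'' conflates the partial sum at $N$ with the partial sum at $N_{k+1}$. You must either invoke a maximal inequality or exploit the fact that your within-block application of Theorem 10 is itself anytime, so the inequality already holds at every $N$ in the block. Second, the doubling overhead replaces $\log(2aN/\delta)$ by roughly $\log\bigl(8aN(k+1)^2/\delta\bigr)$ inside a power $3/2$, and for small $N$ this inflates the leading constant by far more than the factor $11/8$ you have to spare; you would land at $C b\sqrt{N \log^3 (2aN/\delta)}$ for some larger absolute constant $C$, which is immaterial for the paper's $\tO(\cdot)$ regret bounds but does not literally recover the stated coefficient $11$. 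The paper's $n$-dependent truncation sidesteps both issues at no extra cost, which is what it buys over the more standard fixed-level-plus-doubling route.
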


\begin{proof}
    Define $r_n = 2b \log \frac{2an}{\delta}$, and note that $\Pr [|X_n| > r_n] \le \frac{\delta}{4n^2}$.
    
    Additionally define $Y_n = X_n \indevent{|X_n| \le r_n} - \bbE \left[ X_n \indevent{|X_n| \le r_n} \mid \calF_{n-1} \right]$.
    $(Y_n)_{n=1}^\infty$ is a bounded martingale difference sequence, and by \cref{thm:anytime-azuma} we have that with probability at least $1 - \frac{\delta}{2}$,
    \[
        \Bigl| \sum_{n=1}^N Y_n \Bigr| 
        \le 
        4 \sqrt{\sum_{n=1}^N r_n^2 \log \frac{N}{\delta}}
        \quad \forall N \ge 1.
    \]
    
    Therefore, by a union bound, both the above holds and $|X_n| \le r_n$ for all $n \ge 1$ with probability at least $1 - \delta$. 
    We get that
    \[
        \Bigl| \sum_{n=1}^N X_n \indevent{|X_n| \le r_n} - \bbE \left[ X_n \indevent{|X_n| \le r_n} \mid \calF_{n-1} \right] \Bigr| 
        \le 
        4 \sqrt{\sum_{n=1}^N r_n^2 \log \frac{N}{\delta}},
    \]
    and simplifying using the definition of $r_n$ gets
    \[
        \Bigl| \sum_{n=1}^N X_n \indevent{|X_n| \le r_n} \Bigr| 
        \le
        \Bigl| \sum_{n=1}^N \bbE \left[ X_n \indevent{|X_n| \le r_n} \mid \calF_{n-1} \right] \Bigr|
        +
        8b \sqrt{N \log^3 \frac{2aN}{\delta}}.
    \]
    It thus remains to upper bound $\Bigl| \sum_{n=1}^N \bbE \left[ X_n \indevent{|X_n| \le r_n} \mid \calF_{n-1} \right] \Bigr|$. 
    First note that (since $X_n$ is a martingale difference sequence)
    \begin{align*}
        \bbE \left[ X_n \indevent{|X_n| \le r_n} \mid \calF_{n-1} \right]
        & =
        \bbE [X_n \mid \calF_{n-1} ] - \bbE \left[ X_n \indevent{|X_n| > r_n} \mid \calF_{n-1} \right]
        \\
        & =
        - \bbE \left[ X_n \indevent{|X_n| > r_n} \mid \calF_{n-1} \right],
    \end{align*}
    from which
    \begin{align*}
        \Bigl| \sum_{n=1}^N \bbE \left[ X_n \indevent{|X_n| \le r_n} \mid \calF_{n-1} \right] \Bigr|
        & =
        \Bigl| \sum_{n=1}^N \bbE \left[ X_n \indevent{|X_n| > r_n} \mid \calF_{n-1} \right] \Bigr|
        \\
        & \le
        \sum_{n=1}^N \bbE \Bigl[ | X_n | \indevent{|X_n| > r_n} \mid \calF_{n-1} \Bigr]
        \\
        & \le
        \sum_{n=1}^N a(r_n + b) e^{-r_n / b}
        \\
        & \le
        \sum_{n=1}^N 3ab \left( \frac{\delta}{2an} \right)^2 \log \frac{2an}{\delta}
        \\
        & \le
        \sum_{n=1}^N 6ab \left( \frac{\delta}{2an} \right)^2 \left( \frac{2an}{\delta} \right)^{1/2}
        \\
        & =
        \sum_{n=1}^N 6ab \left( \frac{\delta}{2an} \right)^{3/2}
        \\
        & \le
        \sum_{n=1}^N \frac{3b}{n^{3/2}}
        \le 3b \log (N+1) \le 3b \log (2N),
    \end{align*}
    where the second inequality follows from \cref{lem:bounded-exp-for-large-values} and and the forth inequality follows because $\log x \le 2 \sqrt{x}$.
\end{proof}

\end{document}